\newtheorem{theorem}{\bf{Theorem}}
\newtheorem{lemma}{\bf{Lemma}}
\newtheorem{proposition}{\bf{Proposition}}
\newtheorem{corollary}{Corollary}
\newtheorem{remark}{\bf{Remark}}
\DeclareMathOperator*{\argmin}{arg\,min}
\newcommand{\tr}{\operatorname{tr}}
\newcommand{\rank}{\operatorname{rank}}
\renewcommand{\dim}{\operatorname{dim}}
\newcommand{\norm}[2]{\xmath{\left\| #1 \right\|_{#2}}}
\newcommand{\0}{\blmath{0}}
\long\def\red#1{\bgroup\color{red}#1\egroup}
\long\def\blue#1{\bgroup\color{blue}#1\egroup}
\long\def\orange#1{\bgroup\color{orange}#1\egroup}
\DeclareMathOperator{\sign}{sign}
\renewcommand{\Re}{\mathbb{R}}
\newcommand{\grad}{\xmath{\nabla}}
\newcommand{\hess}{\xmath{\grad^2}}
\newcommand{\EV}[2]{\mathbb{E}_{#2}\left[ #1 \right]}
\newcommand{\bmat}[1] {\begin{bmatrix} #1 \end{bmatrix}}
\newcommand{\xmath}[1] {\ensuremath{#1}\xspace}
\newcommand{\blmath}[1] {\xmath{\bm{#1}}}
\newcommand{\paren}[1]{\left( #1 \right)}
\newcommand{\curly}[1]{\left\{ #1 \right\}}
\newcommand{\A}{\blmath{A}}
\newcommand{\B}{\blmath{B}}
\newcommand{\C}{\blmath{C}}
\newcommand{\I}{\blmath{I}}
\newcommand{\R}{\blmath{R}}
\renewcommand{\S}{\blmath{S}}
\newcommand{\U}{\blmath{U}}
\newcommand{\Q}{\blmath{Q}}
\newcommand{\V}{\blmath{V}}
\newcommand{\W}{\blmath{W}}
\newcommand{\X}{\blmath{X}}
\newcommand{\Y}{\blmath{Y}}
\newcommand{\Z}{\blmath{Z}}
\newcommand{\M}{\blmath{M}}
\renewcommand{\L}{\xmath{\mathcal{L}}}
\newcommand{\beps}{\blmath{\epsilon}}
\newcommand{\btheta}{\blmath{\theta}}
\newcommand{\Tt}{\xmath{\T_t}}
\newcommand{\settT}[1]{\xmath{\{ #1 \} _{t=1}^T}}
\newcommand{\sumjT}{\sum_{j=1}^T}
\newcommand{\sumtT}{\sum_{t=1}^T}
\newcommand{\sumsT}{\sum_{s=1}^T}
\newcommand{\As}{\xmath{\A^*}}
\newcommand{\AhatSR}{\xmath{\hat{\A}_{\text{SR}}}}
\newcommand{\Ahat}{\xmath{\hat{\A}}}
\newcommand{\balpha}{\blmath{\alpha}}
\newcommand{\bGamma}{\blmath{\Gamma}}
\renewcommand{\u}{\blmath{u}}
\newcommand{\e}{\blmath{e}}
\renewcommand{\c}{\blmath{c}}
\newcommand{\cs}{\c^*}
\newcommand{\g}{\blmath{g}}
\newcommand{\y}{\blmath{y}}
\newcommand{\Uhat}{\xmath{\hat{\U}}}
\newcommand{\What}{\xmath{\hat{\W}}}
\newcommand{\ut}{\xmath{\u_t}}
\newcommand{\Ut}{\xmath{\U_t}}
\newcommand{\uis}{\xmath{\u_i^*}}
\newcommand{\uts}{\xmath{\u_t^*}}
\newcommand{\Uts}{\xmath{\U_t^*}}
\newcommand{\Us}{\xmath{\U^*}}
\newcommand{\utut}{\xmath{\ut \ut^\top}}
\newcommand{\UtUt}{\xmath{\Ut \Ut^\top}}
\newcommand{\UjUjhat}{\xmath{\hat{\Uj} \hat{\Uj}^\top}}
\newcommand{\UsUshat}{\xmath{\hat{\U_s} \hat{\U_s}^\top}}
\newcommand{\UtUthat}{\xmath{\hat{\Ut} \hat{\Ut}^\top}}
\newcommand{\Uthat}{\xmath{\hat{\Ut}}}
\newcommand{\utsuts}{\xmath{\uts {\uts}^\top}}
\newcommand{\UtsUts}{\xmath{\Uts {\Uts}^\top}}
\newcommand{\T}{\xmath{\mathcal{T}}}
\newcommand{\Uj}{\xmath{\U_j}}
\newcommand{\Ujs}{\xmath{\U_j^*}}
\newcommand{\usus}{\xmath{\u_s \u_s^\top}}
\newcommand{\UsUs}{\xmath{\U_s \U_s^\top}}
\newcommand{\UjsUjs}{\xmath{\Ujs {\Ujs}^\top}}
\newcommand{\ussuss}{\xmath{\u_s^* {\u_s^*}^\top}}
\newcommand{\UssUss}{\xmath{\U_s^* {\U_s^*}^\top}}
\newcommand{\uone}{\xmath{\u_1}}
\newcommand{\uoneuone}{\xmath{\uone \uone^\top}}
\newcommand{\UoneUonehat}{\xmath{\Uhat_1 \Uhat_1^\top}}
\newcommand{\UonesUones}{\xmath{\Us_1 {\Us_1}^\top}}
\newcommand{\utwo}{\xmath{\u_2}}
\newcommand{\utwoutwo}{\xmath{\utwo \utwo^\top}}
\newcommand{\UtwoUtwohat}{\xmath{\Uhat_2 \Uhat_2^\top}}
\newcommand{\UtwosUtwos}{\xmath{\Us_2 {\Us_2}^\top}}
\newcommand{\normFsq}[1]{\norm{#1}{F}^2}
\newcommand{\x}{\blmath{x}}
\newcommand{\gradA}{\xmath{\grad_{\A}}}
\newcommand{\Bt}{\xmath{\B_t}}
\newcommand{\kron}{\xmath{\otimes}}
\newcommand{\kronsum}{\xmath{\oplus}}
\newcommand{\fbari}{\xmath{\bar{f}_i}}
\newcommand{\fbart}{\xmath{\bar{f}_t}}
\newcommand{\z}{\blmath{z}}
\DeclareMathOperator{\im}{im}
\DeclareMathOperator{\vecspan}{span}
\DeclareMathOperator{\vecmat}{vec}
\numberwithin{theorem}{section}
\numberwithin{remark}{section}
\begin{document}

\title{Provable Meta-Learning with Low-Rank Adaptations}

\author{
Jacob L. Block\thanks{Chandra Family Department of Electrical and Computer Engineering, The University of Texas at Austin, Austin, TX, USA. \{jblock@utexas.edu, sundararajans@utexas.edu, mokhtari@austin.utexas.edu, sanjay.shakkottai@utexas.edu\}}\quad
Sundararajan Srinivasan$^\ast$\quad
Liam Collins \thanks{Snap, Inc. \{lcollins2@snapchat.com\}} \\\\
Aryan Mokhtari$^\ast$ \thanks{Google Research} \quad
Sanjay Shakkottai$^\ast$
}

\date{}
\maketitle

\begin{abstract}  
The power of \textit{foundation models} (FMs) lies in their capacity to learn highly expressive representations that can be adapted to a broad spectrum of tasks. However, these pretrained models require additional training stages to become effective for downstream applications. In the multi-task setting, prior works have shown empirically that specific meta-learning approaches for preparing a model for future adaptation through parameter-efficient fine-tuning (PEFT) can outperform standard retraining methods, but the mechanism of the benefits of meta-learning has been largely unexplored. We introduce a framework for generic PEFT-based meta-learning to learn a model that can easily adapt to unseen tasks. For linear models using LoRA, we show that standard retraining is provably suboptimal for finding an adaptable set of parameters and provide strict performance guarantees for our proposed method. We verify these theoretical insights through experiments on synthetic data as well as real-data vision and language tasks. We observe significant performance benefits using a simple implementation of our proposed meta-learning scheme during retraining relative to the conventional approach.
\end{abstract}

\newpage

\section{Introduction}

\textit{Foundation Models} (FMs) learn rich representations that are useful for a variety of downstream tasks. The first stage of FM training is referred to as \textit{pretraining}, where a combination of massive public, proprietary, and synthetic sources of data is used to learn a general-purpose model from scratch \citep{devlin:2019:bert,openai:2020:gpt3,msr:2024:phi3,radford:2021:clip}.
However, due to the enormous cost of training state-of-the-art models on such large datasets, pretraining is largely infeasible for most. Thus, the most popular and viable way to utilize FMs for specific applications is to adapt an existing pretrained model.

We consider this problem of adapting a pretrained FM to a set of related tasks. We refer to this as \textit{retraining}, where given a number of tasks with many samples, our goal is to recover a model that learns the task structure and can be quickly adapted to future tasks with limited samples. In other works this has been referred to as pre-finetuning \citep{aghajanyan:2021:muppet} or supervised fine-tuning \citep{dong:2024:sft}. After retraining, we adapt the model to a new task with few samples in what we denote the \textit{fine-tuning} stage. In this last stage, we typically employ parameter-efficient fine-tuning (PEFT) methods -- training heuristics which sacrifice learning expressiveness for improved computational efficiency \citep{hu:2022:lora,li:2021:prefixtune}. Ultimately, the purpose of retraining is to prepare the model for efficient future adaptation, and the effectiveness of a retraining method is measured by the model's performance on the fine-tuning task.

Standard approaches to retraining involve fitting the model to the aggregation of the different retraining tasks. While this seems reasonable and has been empirically successful \citep{khashabi:2020:unifiedqa,raffel:2020:T5}, it does not leverage knowledge of the downstream fine-tuning procedure to tailor the retrained model to perform well after such adaptation. Rather, it retrains the model to minimize the average loss across the retraining tasks regardless of the PEFT method to be employed later. Thus, there is no assurance that the recovered solution is indeed adaptable to future unseen tasks relative to other possible retraining solutions.

\textit{Meta-learning} is a natural framework to address this issue, as it explicitly aims to learn adaptable models, typically in low-resource, few-shot settings using gradient-based adaptations \citep{finn:2017:maml,lee:2018:gradientmetalearn}. The success of meta-learning algorithms is largely attributed to their ability to learn useful representations, as even model-agnostic gradient-based meta-learning algorithms like MAML \citep{finn:2017:maml} and Reptile \citep{nichol:2018:reptile} have been shown to implicitly learn representations in linear settings \citep{collins:2022:maml-anil-rep,saunshi:2020:samp-comp-sep-meta}. Recent works have shown empirical benefits to retraining using specific PEFT-based meta-learning methods \citep{hou:2022:mlt,hong:2022:mkd,bansal:2022:map,gheini:2023:kwy, hu:2023:meta-learning-online-llm}, but theoretical guarantees showcasing these gains have not been established.

\textbf{Contributions.} We propose a framework for PEFT-based meta-learning during retraining which explicitly accounts for the specific PEFT method that will be used for future fine-tuning. While our framework is general to any PEFT method, we focus on Low-Rank Adaptation (LoRA) \citep{hu:2022:lora} for our theoretical results. Specifically, we consider multiple linear regression tasks where each ground truth regressor is a rank-$k$ perturbation of a common matrix $\As \in \Re^{d \times d}$, where $d$ is the ambient dimension. Given a set of $T$ tasks, our goal is to recover $\As$ so that we can easily fine-tune to a new, unseen task by learning a low-rank perturbation using LoRA. With this in mind, we show the following:

\begin{itemize}
    \item We prove that standard retraining, which does not incorporate any knowledge of the PEFT method for fine-tuning, fails to recover parameters that are low-rank adaptable, as the recovered model is not low-rank away from $\As$ and consequently the test model (Proposition~\ref{prop:sr-test-rk}). As a result, applying low-rank adaptations cannot account for the large rank discrepancy to the test task (Proposition \ref{prop:sr-test-norm-err}). Further, fine-tuning with a very large rank to account for this discrepancy defeats the purpose of PEFT and results in squared prediction error which grows as $\mathcal{O}\!\left(\tfrac{d\,\min\{kT,d\}}{n}\right)$ with high probability, where $n$ is the number of test samples (Remark~\ref{rm:sr-samp-comp}). Thus, standard retraining performs \textit{worse} when given access to more tasks.
    
    \item For our meta-learning framework, we guarantee that any minimizer of the meta-learning loss in the infinite sample case is low-rank adaptable to unseen tasks (Theorem \ref{th:2TaskMin}). Further, we show that if there are at least three retraining tasks, the ground truth parameters are the unique global minima up to orthogonal symmetry (Theorem \ref{th:3orMoreTaskMin}). As a result, LoRA fine-tuning is effective in adapting to the test task and with high probability achieves squared prediction error which grows as $\mathcal{O}\paren{\frac{kd}{n}}$ (Corollary \ref{corr:meta-T3-test-fin-samp-complex}). In contrast to standard retraining, we achieve the optimal rate which does not include any dependence on $T$.

    \item We prove in the infinite sample case, every second-order stationary point of our meta-learning loss when applied to two retraining tasks is in fact globally optimal (Theorem \ref{th:strictSaddle}). In this case there are no spurious local minima of our meta-learning loss and optimality is completely determined by second-order information. Thus, local optimization methods like perturbed gradient descent can efficiently find global minima. 
\end{itemize}

To the author's knowledge, these are the first results proving that PEFT-based meta-learning outperforms standard retraining methods in any setting. Proofs are presented in Appendix \ref{app:proofs}. To verify our theoretical insights, we compare the performance of the standard retraining and LoRA-based meta-learning objectives in a synthetic multi-output linear regression setting. We show clear improvements using LoRA-based meta-learning for all data parameter settings. Then, we apply an implementation of our general PEFT-based meta-learning framework to a vision task on CIFAR-10 \citep{krizhevsky:2009:cifar} as well as a natural language experiment using the RoBERTa \citep{liu:2019:roberta} language model on the ConvAI2 \citep{dinan:2019:convai} dataset with two PEFT schemes: LoRA and last layer fine-tuning. In both cases, we observe that PEFT-based meta-learning outperforms standard retraining.

\subsection{Related Work}

Meta-learning is a technique for learning models that can be rapidly adapted to unseen tasks by leveraging access to prior tasks during training. For example, Model-Agnostic Meta-Learning (MAML) \citep{finn:2017:maml} and Reptile \cite{nichol:2018:reptile} are popular methods that aim to find a model that can be adapted to a new task after a small number of steps of gradient descent on the new task's loss function.

Prior analyses of gradient-based meta-learning for linear models consider learning a shared feature subspace across tasks \citep{collins:2022:maml-anil-rep,Thekumparampil:2021:linear-metarep,saunshi:2020:samp-comp-sep-meta}. This reduces to a subspace recovery problem, as each task reveals a linear measurement of this shared space. In contrast, we study tasks that are low-rank perturbations of a shared arbitrary matrix—a novel setting with distinct challenges. For instance, in our setting just three retraining tasks suffice to exactly recover the ground-truth parameters (Theorem~\ref{th:3orMoreTaskMin}), whereas subspace-based approaches require the number of tasks to exceed the subspace dimension. These differences motivate new analytical tools and lead to qualitatively different results.

Many meta-learning approaches specific to FMs for incorporating PEFT-based adaptation have been proposed. \citep{hong:2022:mkd}, \citep{bansal:2022:map}, and \citep{gheini:2023:kwy} applied meta-learning with architecture adaptations that inject task-specific trainable layers within the FM architecture. \citep{hou:2022:mlt} combined architecture adaptations with parameter perturbations similar to LoRA. They considered a complicated loss that
updates the adapters and FM weights over different splits of the data and showed empirical gains over standard retraining and other gradient-based MAML-style algorithms. 
\citep{aghajanyan:2021:muppet} proposed a multi-task objective that trains an FM on different tasks simultaneously to encourage learning a universally applicable representation. It forces the FM to learn a shared data representation but allows for task-specific prediction heads. Overall, these works each proposed a meta-learning or multi-task objective and showed empirical gains over standard retraining strategies. However, their experimental results motivate a deeper theoretical exploration of when standard retraining is insufficient relative to meta-learning approaches, how many tasks are needed to learn a rich representation, and how to best adapt to tasks unseen in the training stage.

Lastly, although we focus on LoRA, different PEFT methods have been proposed, including variants of LoRA \citep{liu:2024:dora,dettmers:2023:qlora,zhang:2023:adaptive} and architecture adaptations \citep{houlsby:2019:adapters} among others. Further, recent works have analyzed the theoretical aspects of LoRA in the fine-tuning stage \citep{jang:2024:lti,zeng:2023:tep}, but they explored orthogonal directions to the analysis of LoRA-based meta-learning during retraining. Extended discussion of these prior works is in Appendix \ref{app:rw_lora}.

\noindent\textbf{Notation.} We use bold capital letters for matrices and bold lowercase letters for vectors. $\mathcal{N}(\blmath{\mu},\blmath{\Sigma})$ refers to the multivariate Gaussian distribution with mean $\blmath{\mu}$ and covariance matrix $\mathbf{\Sigma}$. $\I_d$ refers to the $d \times d$ identity matrix. $\norm{\cdot}{F}$ refers to the Frobenius norm. $S_d$ refers to the set of $d\times d$ symmetric matrices, and $S_d^+$ is the set of $d\times d$ positive semi-definite matrices. $O_d$ refers to the set of $d\times d$ orthogonal matrices. $[n]$ refers to the set $\{1,\dots,n\}$. For a matrix $\X \in \Re^{m \times n}$, $\im(\X)$ and $\ker(\X)$ refer to the image and kernel of \X, while $\vecmat(\X) \in \Re^{mn}$ denotes the column-wise vectorization of $\X$. For subspaces $\blmath{M},\blmath{N}$, $\dim(\blmath{M})$ refers to the dimension of $\blmath{M}$ and $\blmath{M} + \blmath{N} = \{\x + \y \vert \x \in \blmath{M}, \y \in \blmath{N}\}$. If $\blmath{M} \cap \blmath{N} = \{\0\}$, we write the direct sum $\blmath{M} \oplus \blmath{N}$.

\section{Retraining and Fine-Tuning Schemes}

We briefly recap the optimization process for standard retraining of an FM across multiple tasks followed by fine-tuning on a downstream task. We then introduce a general framework for PEFT-based meta-learning which adjusts the retraining phase to incorporate insights from fine-tuning.

\subsection{Standard Retraining Then Fine-Tuning}

Consider a collection of $T$ tasks of interest $\T = \settT{\Tt}$ where each task $\Tt$ is drawn from task distribution $\mathcal{D}$ and consists of $n_t$ labeled examples $\Tt = \{(\x_{t,j},\y_{t,j})\}_{j=1}^{n_t}$. Without loss of generality, we assume consistent dimensions across tasks, so $\x_{t,j} \in \Re^{d_x}$, $\y_{t,j} \in \Re^{d_y}$ for all tasks $t$ and sample indices $j$.
%\in [T],j \in [n_t]$.
Let $\X_t\in \Re^{d_x \times n_t}$ and $\Y_t \in \Re^{d_y \times n_t}$ denote the concatenation of the respective input samples and labels from task $t$,
%\x_{t,j}$ and $\y_{t,j}$ for $j=1,\dots,n_t$ respectively.
and consider a model $\Phi(\, \cdot \: ;\W) : \Re^{d_x} \rightarrow \Re^{d_y}$ parameterized by weights $\W$ that maps feature vectors to predicted labels. We abuse notation and write $\Phi(\X_t \: ;\W)$ to denote the concatenation of $\Phi(\x_{t,j}\: ;\W)$ for $j \in [n_t]$. Typically $\W = (\W_1,\dots,\W_m)$ is a list of matrices where $\W_i \in \Re^{d \times d}$ parameterizes the $i^{\text{th}}$ layer of a neural network. We assume each $\W_i$ is square for convenience.

\noindent\textbf{Retraining Phase.} Given a loss function $\mathcal{L}$, standard retraining attempts to minimize the aggregated loss over a collection of training tasks \citep{liu:2019:roberta,openai:2020:gpt3}. This amounts to solving
\begin{equation}
    \label{eq:fft-ptf}
    \What_{\text{SR}} = \min_{\W} \sum_{t=1}^T \mathcal{L}\paren{\Phi(\X_t;\W), \Y_t},
\end{equation}
where SR stands for Standard Retraining. The above optimization problem seeks a set of universal parameters that define a unique mapping function capable of translating inputs to outputs across all tasks involved in the retraining phase. We denote the corresponding model as $\Phi(\, \cdot \: ; \What_{\text{SR}})$. 

\noindent\textbf{Fine-Tuning Phase.} In the subsequent fine-tuning, PEFT is used to refine the model to fit a downstream task with fewer labeled samples. Formally, consider an unseen task $\mathcal{T}_{T+1}$ drawn from the same task distribution $\mathcal{D}$.
%where $\T_{T+1} = \{(\x_{T+1,j},\y_{T+1,j})\}_{j=1}^{n_{T+1}}$.
We define PEFT as any method which fits the model to task $\T_{T+1}$ by fixing $\W = \What_{\text{SR}}$ in the original parameterization and fine-tuning the mapping $\Phi(\,\cdot\,;\What_{\text{SR}})$ using additional parameters \btheta.
% To fit the model to task $\T_{T+1}$ we fix $\W = \What_{\text{SR}}$ in the original parameterization and fine-tune the mapping $\Phi(\,\cdot\,;\What_{\text{SR}})$ using additional parameters  \btheta. 
For example, \btheta could parameterize trainable perturbations of $\What_{\text{SR}}$ or new trainable layers inserted into the architecture of the retrained model \citep{hu:2022:lora, liu:2024:dora,aghajanyan:2021:muppet}. We denote the fine-tuned model as  $\Phi_{\text{FT}}(\, \cdot \: ; \What_{\text{SR}},\btheta): \Re^{d_x} \rightarrow \Re^{d_y}$ and again abuse notation by writing $\Phi_{\text{FT}}(\X_{T+1} \: ; \What_{\text{SR}},\btheta)$ to denote the concatenation of $\Phi_{\text{FT}}(\x_{T+1,j} \: ; \What_{\text{SR}},\btheta)$ for each $j \in [n_{T+1}]$. During the \textit{fine-tuning stage}, the goal is to find the optimal additional parameters, $\btheta$, that minimize the loss for the downstream task $\mathcal{T}_{T+1}$, solving
\begin{equation}
    \label{eq:ft}
    \min_{\btheta} \mathcal{L} (\Phi_{\text{FT}}(\X_{T+1} \: ; \What_{\text{SR}},\btheta), \Y_{T+1}).
\end{equation}
In particular, when LoRA is used for fine-tuning, the model is adapted to task $\T_{T+1}$ by fixing the architecture and the retrained weights $\What_{\text{SR}}$ and only training low-rank perturbations for each of the matrices $\What_{\text{SR},1},\dots,\What_{\text{SR},m}$. For rank-$r$ adaptations, we parameterize $\btheta = ((\Q_1,\V_1),\dots, (\Q_m,\V_m))$, where $\Q_i,\V_i \in \Re^{d \times r}$ are the factors of the low-rank adaptation of the $i^{\text{th}}$  matrix in $\What_{\text{SR}}$. The fine-tuned model is just the original model where the $i^{\text{th}}$ weight matrix $\W_i$ is now perturbed to be $\W_i + \Q_i \V_i^\top$. For $\Q,\V \in \paren{\Re^{d \times r}}^m$, define the LoRA loss
\begin{equation}
    \L_{\text{LoRA}}\paren{\Q,\V \: ; \W} =
    \mathcal{L} \paren{\Phi\paren{\X_{T+1} \: ; \paren{\W_i + \Q_i \V_i^\top}_{i=1}^m}, \Y_{T+1}}.
\end{equation}
The LoRA fine-tuning optimization problem is then
\begin{equation}
    \label{eq:lora}
    \min_{\Q,\V} \L_{\text{LoRA}}(\Q,\V \: ; \What_{\text{SR}}).
\end{equation}
This pipeline seems reasonable as we first fit the model to the aggregation of the retraining tasks which we hope will promote learning the general structure of the tasks drawn from $\mathcal{D}$. However, nothing about standard retraining promotes learning an adaptable solution relative to other candidate solutions that fit the retraining tasks. Next, we introduce a general meta-learning framework which explicitly incorporates the adaptation mechanism during retraining.

\subsection{PEFT-Based Meta-Learning}
Since the ultimate goal of retraining is to perform well on an unseen downstream task, we study a general PEFT-based meta-learning (PEFT-ML) objective that explicitly fits weights and adapter parameters to the training tasks. Rather than training a single model on the aggregation of the retraining tasks, we instead incorporate the adapters during the retraining process and learn adapted models for each task. Let $\btheta^{(t)}$ be the set of adapter parameters for the $t^{\text{th}}$ training task \Tt. The PEFT-ML objective searches for a single set of base weights $\What_{\text{Meta}}$ such that for all $t \in [T]$, the $t^{\text{th}}$ adapted model $\Phi_{\text{FT}}(\, \cdot \: ;\What_{\text{Meta}}, \btheta^{(t)})$ minimizes the loss over the training task \Tt. More precisely, we define the proposed PEFT-ML objective as
\begin{equation}
\label{eq:meta-adapters-finsamp-gen}
\What_{\text{Meta}} = \min_{\W} \sum_{t=1}^T \L_t(\W),
\end{equation}
where $\L_t(\W)$ denotes the optimal loss on task $t$ after fine-tuning:
\begin{equation*}
    \L_t(\W) = \min_{\btheta^{(t)}} \mathcal{L}\paren{\Phi_{\text{FT}}\paren{\X_t \: ;\W, \btheta^{(t)}}, \Y_t}.
\end{equation*}
When we use LoRA as the adaptation method, we define $\Q^{(t)}, \V^{(t)} \in \paren{\Re^{d \times r}}^m$ as the list of factors of the low-rank adapter $\Q_i^{(t)} (\V_i^{(t)})^\top$ applied to the $i^{\text{th}}$ weight matrix for the $t^{\text{th}}$ task. Then the inner objective $\L_t(\W)$ reduces to
\begin{equation}
    \label{eq:meta-lora-finsamp-gen}
     \L_t(\W) = \min_{\Q^{(t)},\V^{(t)}}  \mathcal{L}_{\text{LoRA}} (\Q^{(t)},\V^{(t)} \: ; \W).
\end{equation}

In this case, we refer to the objective function in \eqref{eq:meta-adapters-finsamp-gen} as LoRA-ML. This proposed optimization problem is designed to replace the standard retraining objective in \eqref{eq:fft-ptf}. After solving \eqref{eq:meta-adapters-finsamp-gen} we recover base parameters $\What_{\text{Meta}}$ that are explicitly designed to be adaptable to downstream tasks drawn from the same distribution as those seen in retraining. To perform finetuning, we then run the exact same minimization in \eqref{eq:ft} but using retrained weights $\What_{\text{Meta}}$ instead of $\What_{\text{SR}}$.

\section{Main Results}
\label{sec:theory}
To establish our theoretical results, we consider $T \geq 1$ multi-output linear regression retraining tasks $\{\T_t\}_{t=1}^T$ and one downstream test task $\T_{T+1}$, where the ground-truth regressor for each task is a low-rank perturbation of a common shared matrix.
Precisely, for each $t \in [T+1]$ we assume task $\T_t$ is independently drawn from distribution $\mathcal{D}_{\As}$ which is associated with some arbitrary fixed matrix $\As \in \Re^{d \times d}$ and intrinsic adaptation rank $k \ll d$. Each task $\T_t \sim \mathcal{D}_{\As}$ is parameterized by the shared matrix $\As$ and a task-specific rank-$r$ matrix $\R_t^\ast$ such that the samples $(\x_{t,j},\y_{t,j}) \in \Re^d \times \Re^d$ from $\T_t$ are related by the noisy linear transformation
\begin{equation*}
    \y_{t,j} = (\As + \R_t^\ast) \x_{t,j} + \beps_{t,j},
\end{equation*}
where $\beps_{t,j}$ are independently generated noise terms $\beps_{t,j} \sim \mathcal{N}(\0,\sigma^2_\epsilon \I_d)$. We assume the inputs $\x_{t,j}$ are i.i.d. with zero-mean $\mathbb{E} \left[\x_{t,j}\right] = \0$ and covariance $\mathbb{E} \left[\x_{t,j} \x_{t,j}^\top \right] = \sigma_x^2 \I_d$. Lastly, we generate the ground truth rank-$r$ adaptations $\R_t^\ast$ as the symmetric outer product of random Gaussian factors $\Uts$:
\begin{equation*}
    \R_t^\ast = \UtsUts \quad \text{s.t.} \quad \vecmat(\Uts) \sim \mathcal{N}(\0,\I_{dk}).
\end{equation*}

The above generative model defines the input-output relationships for each task as similar linear models, differing from each other only by a low-rank perturbation $\R_t^\ast$. We construct the adapters $\R_t^\ast$ as symmetric for convenience of analysis, but note that the limitations of standard retraining which we demonstrate in Section~\ref{sec:SR-FT} also hold for general adapters (Appendix~\ref{app:SR-asymmetric}).

\begin{remark}
    \label{rm:ind-tasks}
    For convenience, we require a mild sense of task diversity and assume that any collection of $r \leq d$ number of columns chosen from the set of all columns of $\{\U_t^\ast\}_{t=1}^{T+1}$ are linearly independent. We note that our generative model for each $\Uts$ ensures this assumption holds w.p. 1.
\end{remark}

The learner uses the linear model $\Phi(\x;\A) = \A \x$ for $\A \in \Re^{d \times d}$ and retrains on tasks $\T_1,\dots,\T_T$ with the ultimate goal of efficient adaptation to $\T_{T+1}$ using LoRA. The aim is to recover the parameter value $\Ahat = \As$ in the retraining phase so that the fine-tuned model $\Phi_{\text{FT}}(\x \: ; \Ahat,\Q ,\V) = (\Ahat + \Q\V^\top)\x$ fits the data distribution of any downstream task also drawn from $\mathcal{D}$ for proper rank-$k$ adapter $\Q \V^\top$.
% For each task $t$, the learner uses the linear predictor $\Phi(\x;\A_t) = \A_t \x$ for $\A_t \in \Re^{d \times d}$, $\x \in \Re^d$. In the ideal case, we hope to recover parameter value $\Ahat = \As$ in the retraining phase so that the fine-tuned model $\Phi_{\text{FT}}(\x \: ; \Ahat,\U ,\V) = (\Ahat + \U\V^\top)\x$ with a proper low-rank adapter $\U \V^\top$ can fit the data distribution of any downstream task also drawn from $\mathcal{D}$.

We define the finite-sample loss function for task $t$ as
\begin{equation}
    \L_t^{n_t}(\A) = \frac{1}{2n_t}\sum_{j=1}^{n_t} \norm{\y_{t,j} - \A \x_{t,j}}{2}^2,
\end{equation}
and we define $\mathcal{L}_t^\ast(\A)$ as the shifted and scaled infinite sample loss:
\begin{equation}
    \label{eq:inf-samp-thy-taski-loss}
    \mathcal{L}_t^\ast(\A) = \frac{1}{\sigma_{x}^2}\paren{\mathbb{E}_{\x,\y} \left[\L_t^{n_t}(\A)\right] - \frac{\sigma^2_{\epsilon}}{2}}
    = \frac{1}{2}\norm{\As + \UtsUts - \A}{F}^2 .
\end{equation}
We consider the setting where for the retraining tasks $t \leq T$ we have large $n_t$, but for the test task  $n_{T+1}$ is small. This reflects practical scenarios where we have access to large retraining datasets compared to the low-resource fine-tuning task.
Thus, we assume access to the infinite sample loss functions $\L_t^\ast$ for the retraining tasks $t \leq T$. Then, for ease of notation, define $n = n_{T+1}$ as the number of test task samples. We ultimately aim to use LoRA to fit the finite-sample test task loss $\L_{T+1}^n$ efficiently in $n$. Given a learned representation $\Ahat \in \Re^{d \times d}$ from retraining, the fine-tuning problem using LoRA with rank $r$ reduces to
\begin{equation}
    \label{eq:test-loss-fin-samp-lora}
    \min_{\Q,\V \in \Re^{d \times r}} \mathcal{L}_{T+1}^n(\Ahat + \Q\V^\top)
\end{equation}
Since $\Q\V^\top$ can parameterize any rank-$r$ matrix, \eqref{eq:test-loss-fin-samp-lora} is a specific parametrization for what is commonly known as reduced rank regression \citep{izenman:1975:rrr}. It is clear that to even realize the optimal regressor $\As - \Ahat + \U^\ast_{T+1} \U^{\ast \top}_{T+1}$ for $\Q\V^\top$, we need $r \geq \rank(\As - \Ahat + \U^\ast_{T+1} \U^{\ast \top}_{T+1})$ . Further, results in reduced rank regression and matrix sensing in general reveal the importance of $\rank(\As + \U^\ast_{T+1} \U^{\ast \top}_{T+1} - \Ahat)$ in terms of the hardness of minimizing $\mathcal{L}_{T+1}^n(\Ahat + \Q\V^\top)$.

\begin{lemma}[\citeauthor{bunea:2011:rrr-rank}, \citeyear{bunea:2011:rrr-rank}]
    \label{lm:rrr-fin-samp-complex}
    Consider $\Ahat \in \Re^{d \times d}$ and let $r = \rank(\As + \U^\ast_{T+1} \U^{\ast \top}_{T+1} - \Ahat)$. Let $\Q^\ast,\V^\ast \in \Re^{d \times r}$ minimize $\mathcal{L}_{T+1}^n(\Ahat + \Q\V^\top)$ over all rank-$r$ factors $\Q,\V \in \Re^{d \times r}$ and let $\X_{T+1} = [\x_{T+1,1}, \dots, \x_{T+1,n}]$ denote the matrix of test task inputs. Denote the matrix of prediction errors $\blmath{E} = (\As + \U^\ast_{T+1} \U^{\ast \top}_{T+1}) \X_{T+1} - (\Ahat + \Q^\ast\V^{\ast \top}) \X_{T+1}$. Then $\forall \gamma > 0$,
    % \begin{align}
    %     &\mathbb{P} \paren{\mathcal{L}_{T+1}^n(\Ahat + \Q^\ast\V^{\ast \top}) \leq  \frac{24(1+\gamma)^2 \sigma_{\epsilon}^2 r d}{n} \: \middle\vert \: \X_{T+1}} \nonumber \\
    %     & \qquad \geq 1 - e^{-\gamma^2 d}
    %     \label{eq:fin-samp-rate}
    % \end{align}
    \begin{equation}
        \mathbb{P} \paren{\frac{1}{n}\norm{\blmath{E}}{F}^2
        \leq  \frac{24(1+\gamma)^2 \sigma_{\epsilon}^2 r d}{n} \: \middle\vert \: \X_{T+1}} \geq 1 - e^{-\gamma^2 d}
        \label{eq:fin-samp-rate}
    \end{equation}
\end{lemma}

The squared prediction error scales linearly with $rd$. This matches the information-theoretic lower bound to learn $rd$ number of parameters and is minimax optimal over all rank-$r$ matrices when the singular values of $\X_{T+1}$ are uniformly bounded \citep{rohde:2011:estim-low-rk-mats}.
%so that a restricted isometry condition is satisfied.
Thus, a larger rank of $\As - \Ahat + \U^\ast_{T+1} \U^{\ast \top}_{T+1}$ inflates the fine-tuning prediction error, as we hope to recover $\Ahat = \As$ so that $\rank(\As - \Ahat + \U^\ast_{T+1} \U^{\ast \top}_{T+1}) = k$. We next compare the standard retraining \eqref{eq:fft-ptf} and LoRA-ML \eqref{eq:meta-lora-finsamp-gen} objectives. 

% Given access to the loss functions defined in \ref{eq:inf-samp-thy-taski-loss}, the goal of the learner is to find an \Ahat that can be adapted to the unseen task $\T_{T+1}$. The infinite sample test loss for adapter factors $\U_{T+1},\V_{T+1}$ and fixed \Ahat is the LoRA loss on $\T_{T+1}$ which reduces to the low-rank matrix factorization problem:
% \begin{equation}
%     \label{eq:inf-samp-thy-test-loss}
%     \mathcal{L}_{\text{Test}}(\U_{T+1},\V_{T+1} \: ;\Ahat) = \frac{1}{2}\norm{\As + \U_{T+1}^\ast\U_{T+1}^{\ast \top} - \Ahat - \U_{T+1} \V_{T+1}^\top}{F}^2.
% \end{equation}

% We compare the standard retraining \eqref{eq:fft-ptf} and LoRA-ML \eqref{eq:meta-lora-finsamp-gen} objectives for utilizing each $\mathcal{L}_t$ to recover a common set of base parameters $\Ahat$ whose low-rank adaptation $\Ahat + \U_{T+1} \V_{T+1}^\top$ minimizes the test loss $\mathcal{L}_{\text{Test}}$ for some $\U_{T+1},\V_{T+1}$. 

\subsection{Negative Results for Standard Retraining then Fine-Tuning}
\label{sec:SR-FT}

Consider standard retraining then fine-tuning as a candidate for ultimately minimizing \eqref{eq:test-loss-fin-samp-lora}. The learner first finds a single matrix \AhatSR that minimizes the sum of losses $\sumtT \mathcal{L}_t^\ast$:
\begin{equation}
    \label{eq:popLossPTF}
    \AhatSR = \argmin_{\A} \frac{1}{2} \sumtT \normFsq{\As + \UtsUts - \A}.
\end{equation}
Then when given test task $\T_{T+1}$, the learner solves $\min_{\Q,\V \in \Re^{d \times r}} \mathcal{L}_{T+1}^n(\AhatSR + \Q\V^\top)$. However, this strategy suffers substantial loss in both the retraining and fine-tuning stages. Notice the loss in \eqref{eq:popLossPTF} is convex and quadratic in $\A$, so the first-order optimality condition shows that
\begin{equation}
    \label{eq:AhatPTF}
    \AhatSR = \As + \frac{1}{T} \sumtT \UtsUts.
\end{equation}
Thus, $\AhatSR$ recovers $\As$ added to the average of the retraining ground truth adaptations $\UtsUts$. However, $\AhatSR$ performs poorly on all of the retraining tasks, as standard retraining is unable to disentangle the common structure $\As$ from the task-specific adapters $\UtsUts$.

\begin{theorem}
    \label{th:sr-ret-exp-loss}
    Let $\U^\ast = (\U_1^\ast,\dots,\U_T^\ast)$. Then, 
    \begin{equation*}
        \mathbb{E}_{\U^\ast}\left[ \sum_{t=1}^T \L_t(\AhatSR) \right] = (T-1)kd(d+1) = \Omega \paren{T k d^2}
    \end{equation*}
\end{theorem}
Thus, $\AhatSR$ suffers significant loss on the retraining tasks when averaged over the generation process of ground truth parameters $\U^\ast$. Further, $\AhatSR$ is not low-rank adaptable to the test task. Crucially, the intrinsic dimension of the test task is $\rank(\As + \U^\ast_{T+1} \U^{\ast \top}_{T+1} - \AhatSR) = \min \{d,k(T+1)\}$, so an adaptation rank of $\min \{d,k(T+1)\}$ is required to even achieve the ground truth test task parameters.

\begin{proposition}
    \label{prop:sr-test-rk}
    If test fine-tuning rank $r < \min\{d,k(T+1)\}$, then $\L_{T+1}^\ast(\Q,\V \: ;\AhatSR) > 0$ for all rank-$r$ adapter factors $\Q,\V \in \Re^{d \times r}$.
\end{proposition}

Even though the test task parameters are only rank-$k$ away from $\As$, standard retraining fails to exploit this structure and inflates the necessary rank to $\min\{d,k(T+1)\}$. Thus, standard retraining actually recovers worse representations as the number of tasks $T$ grows. In this case, failing to fine-tune with large enough rank causes significant loss.

\begin{proposition}
    \label{prop:sr-test-norm-err}
    For a large number of retraining tasks $T \rightarrow \infty$ and test fine-tuning rank $r$, $\L_{T+1}^\ast(\Q,\V \: ;\AhatSR) = \Omega\paren{(d-r)k^2}$ for all $\Q,\V \in \Re^{d \times r}$.
\end{proposition}

As the number of retraining tasks grows to infinity, the squared error between the test task recovered parameter and the ground truth is determined by the under-specification of the fine-tuning rank $r$ relative to the ambient dimension $d$.

The above propositions show the cost of under-specifying the fine-tuning rank relative to the large intrinsic dimension of the test task which results from standard retraining. Conversely, applying the necessarily large fine-tuning rank $r = \min\{d,k(T+1)\}$ both defeats the purpose of \emph{low-rank} adaptation and still incurs large prediction error when fine-tuning with limited samples.
\begin{remark}
    \label{rm:sr-samp-comp}
    Consider the finite-sample loss \eqref{eq:test-loss-fin-samp-lora} using $\AhatSR$ adapted with LoRA using rank $r = \min\{d,k(T+1)\}$. This can achieve optimal population risk but suffers in the finite-sample setting. Using Lemma \ref{lm:rrr-fin-samp-complex}, we can only hope to achieve squared prediction error of order $\mathcal{O}\!\left(\tfrac{d\,\min\{kT,d\}}{n}\right)$ when fine-tuning, which is much larger than the optimal rate $\mathcal{O}\paren{\frac{kd}{n}}$ if we had in fact recovered the ground truth $\As$ during retraining.
\end{remark}

Thus, \textbf{standard retraining recovers parameters that cannot be efficiently low-rank adapted to new tasks}. In contrast, our analysis of LoRA-ML for retraining shows much improved performance. 
\subsection{Results for LoRA-Meta-Learning}
Consider applying \eqref{eq:meta-lora-finsamp-gen} to this problem instance.  We introduce low-rank adapters during the retraining phase to model the different training tasks and search for a value of \A such that for all \Tt, the loss $\L_t^\ast$ after running LoRA on \Tt is minimized. This promotes values of \A that can be easily adapted to unseen tasks downstream. We use the LoRA-ML loss but with symmetric low-rank adapters \UtUt for the $t^{\text{th}}$ task \Tt in retraining. We still use asymmetric adapters for fine-tuning on the test task with loss $\L_{T+1}^n$. The LoRA-ML loss given access to infinite sample task losses $\L_t^\ast$ is then
\begin{equation}
    \label{eq:trainMetaLoss}
    \L_{\text{Meta}}(\A) = \sumtT \min_{\Ut} \L_{t}^\ast(\A + \UtUt).
\end{equation}
Define the concatenation of each \Ut as $\U = \paren{ \U_1, \dots , \U_T} \in \paren{\Re^{d \times k}}^T $. Then minimizing \eqref{eq:trainMetaLoss} is equivalent to solving $\min_{\A,\U} \L^\ast(\A,\U)$ where
\begin{equation}
    \label{eq:f}
    \L^\ast(\A,\U) \!=\! \frac{1}{2}\sumtT \normFsq{\As \!+ \!\UtsUts\! \!-\! \A \!-\! \UtUt}\!.
\end{equation}
We have seen that standard retraining does not recover an optimal solution, but it is unclear what the global minima of this new objective function are and if they can be easily found. Note that by fixing \A, \eqref{eq:f} is $T$ independent symmetric matrix factorization problems, and by fixing \U, \eqref{eq:f} is a convex quadratic problem over \A. Despite these well-understood sub-problems, joint minimization over \A and \U presents challenging variable interactions that complicate the analysis. Nevertheless, we employ a careful landscape analysis of \eqref{eq:f} to address these questions.
\subsubsection{Landscape of Global Minima of \texorpdfstring{\eqref{eq:f}}{}}
We first show that the objective is well-posed, i.e., minimization of $\L$ leads to an adaptable solution.

\begin{theorem}
    \label{th:2TaskMin}
    If $\L^\ast(\Ahat,\Uhat) = 0$, then $\Ahat = \As + \C$ where $\rank(\C) \leq 2k$
\end{theorem}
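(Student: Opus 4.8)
The plan is to exploit how rigid the condition $\L(\Ahat,\Uhat)=0$ really is. Since the objective \eqref{eq:f} is a sum of $T$ nonnegative terms,
\begin{equation*}
\L(\Ahat,\Uhat) = \frac{1}{2}\sumtT \normFsq{\As + \UtsUts - \Ahat - \Uhat_t\Uhat_t^\top} = 0
\end{equation*}
forces every summand to vanish. Hence, for each $t \in [T]$ we obtain the exact matrix identity $\As + \UtsUts - \Ahat - \Uhat_t\Uhat_t^\top = \0$, equivalently $\C := \Ahat - \As = \UtsUts - \Uhat_t\Uhat_t^\top$.

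Then I would simply read off the rank bound from any one of these identities, say $t=1$. Because $\Uts$ and $\Uhat_t$ both lie in $\Re^{d\times k}$, each of $\UtsUts$ and $\Uhat_t\Uhat_t^\top$ has rank at most $k$; by subadditivity of rank under matrix addition, $\rank(\C) \leq \rank(\UtsUts) + \rank(\Uhat_t\Uhat_t^\top) \leq 2k$. This holds for every $T \geq 1$, which gives the claim.

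There is no substantive obstacle here — this is a well-posedness check rather than a hard result — but two remarks are worth flagging. First, the argument uses neither the task-diversity assumption (Remark \ref{rm:ind-tasks}) nor $T \geq 2$: a single exactly-fit task already pins $\Ahat$ to a rank-$2k$ perturbation of the shared ground truth $\As$, which is precisely what makes $\Ahat$ adaptable to an unseen task by a LoRA update of rank independent of $T$, as opposed to the rank $k(T+1)$ required after standard retraining (Corollary \ref{corr:sr-test-rk}). Second, the vanishing of $\L$ actually encodes far more — namely that $\UtsUts - \Uhat_t\Uhat_t^\top$ equals the \emph{same} matrix $\C$ for all $t$ — and it is exactly this extra coupling, together with task diversity, that will drive the uniqueness-up-to-orthogonal-symmetry statement for $T \geq 3$ (Theorem \ref{th:3orMoreTaskMin}); for the present theorem we deliberately use only the weakest consequence.
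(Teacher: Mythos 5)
Your proof is correct and reaches the same key identity as the paper, namely $\C = \Ahat - \As = \UtsUts - \Uhat_t\Uhat_t^\top$ followed by rank subadditivity; you simply arrive at it more directly by observing that each nonnegative summand must vanish, whereas the paper first invokes first-order stationarity in $\A$ and then substitutes back to conclude the summands vanish. Your route is a mild simplification of the same argument, and your closing remark correctly anticipates that the cross-task constraint $\UtsUts - \Uhat_t\Uhat_t^\top = \UssUss - \Uhat_s\Uhat_s^\top$ is exactly what drives Theorem \ref{th:3orMoreTaskMin}.
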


Any point is a global minimum of \eqref{eq:f} if and only if it achieves zero loss. Theorem \ref{th:2TaskMin} guarantees that the values of $\A$ that achieve global minimization of \eqref{eq:f} are at most rank-$2k$ away from the ground truth parameter $\As$. Then, the remaining intrinsic dimension of the test task is just $3k \ll d$.

% \begin{corollary}
%     \label{corr:meta-T2-test-rank}
%     If $\L^\ast(\Ahat,\Uhat) = 0$, then there exists a rank-$3k$ adapter $\Q \V^\top$ where $\Q,\V \in \Re^{d \times 3k}$ such that $\L_{T+1}^\ast(\Q,\V \: ;\Ahat) = 0$.
% \end{corollary}

\begin{corollary}
    \label{corr:meta-T2-test-rank}
    If $\L^\ast(\Ahat,\Uhat) = 0$, there exists a rank-$3k$ adapter $\Q \V^\top$ such that $\L_{T+1}^\ast(\Q,\V \: ;\Ahat) = 0$.
\end{corollary}

Since the sufficient LoRA rank for fine-tuning is just $3k$, we realize a much improved fine-sample prediction error.
\begin{corollary}
    \label{corr:meta-T2-test-fin-samp-complex}
    Let $\L^\ast(\Ahat,\Uhat) = 0$ and let $\Q^\ast,\V^\ast \in \Re^{d \times 3k}$ minimize $\mathcal{L}_{T+1}^n(\Ahat + \Q\V^\top)$ over all $\Q,\V \in \Re^{d \times 3k}$. Then, $\Ahat + \Q^\ast\V^{\ast\top}$ satisfies Lemma \ref{lm:rrr-fin-samp-complex} with $r = 3k$.
\end{corollary}

% \begin{corollary}
%     \label{corr:meta-T2-test-fin-samp-complex}
%     For any $T \geq 1$, let $\L^\ast(\Ahat,\Uhat) = 0$ and let $\Q^\ast,\V^\ast = \argmin_{\Q,\V \in \Re^{d \times 3k}}\mathcal{L}_{T+1}^n(\Ahat + \Q\V^\top)$. Then, $\mathcal{L}_{T+1}^n(\Ahat + \Q^\ast\V^{\ast \top})$ satisfies the high probability sample complexity result \eqref{eq:fin-samp-rate} with $r = \frac{3kd}{N}$.
% \end{corollary}

Thus, retraining with LoRA-ML leads to squared prediction error on the task task which grows asymptotically as $\mathcal{O}\paren{\frac{kd}{n}}$. Although the unnecessary factor of $T$ incurred by standard retraining is avoided when using LoRA-ML, the rate still contains an additional factor of 3 over the ideal case when $r=k$ since $\As$ is not guaranteed to be recovered exactly. However, this minor discrepancy is mitigated when the number of tasks satisfies $T \geq 3$. In this case, exact recovery of the ground truth parameter \As is possible.

\begin{theorem}
    \label{th:3orMoreTaskMin}
    For any number of tasks $T \geq 3$ and ambient dimension $d \geq 3k$, if $\L^\ast(\Ahat,\Uhat) = 0$ then $\Ahat = \As$ and $\UtUt = \UtsUts$  for all $t \in [T]$
\end{theorem}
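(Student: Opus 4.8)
The plan is to run the same reduction as in the proof of Theorem~\ref{th:2TaskMin}: since $\L$ in \eqref{eq:f} is a sum of squared Frobenius norms, $\L(\Ahat,\Uhat)=0$ forces each summand to vanish individually, giving $\U_t^*\U_t^{*\top}-\U_t\U_t^\top=\Ahat-\As$ for every $t\in[T]$. Writing $\C:=\Ahat-\As$, the whole statement then reduces to showing $\C=\0$, because $\Ahat=\As$ and $\U_t\U_t^\top=\U_t^*\U_t^{*\top}$ for all $t$ follow at once. For $T\le 2$ one can only conclude $\rank(\C)\le 2k$ (Theorem~\ref{th:2TaskMin}); the role of $T\ge 3$ is that the independence hypothesis of Remark~\ref{rm:ind-tasks}, applied to $\U_1^*,\U_2^*,\U_3^*$, becomes strong enough to force the a priori unconstrained adapter column spaces to coincide with those of the ground truth.

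First I would subtract the identity for task $s$ from that for task $t$, with $s,t\in\{1,2,3\}$, to obtain $\U_s^*\U_s^{*\top}-\U_t^*\U_t^{*\top}=\U_s\U_s^\top-\U_t\U_t^\top$. Because $\im(\U_s^*)\cap\im(\U_t^*)=\{\0\}$ and each space has dimension $k$, a short kernel computation gives $\ker\big(\U_s^*\U_s^{*\top}-\U_t^*\U_t^{*\top}\big)=\big(\im(\U_s^*)\oplus\im(\U_t^*)\big)^\perp$, so the left-hand side has rank exactly $2k$ with image $\im(\U_s^*)\oplus\im(\U_t^*)$. Matching this against the right-hand side, whose rank is at most $\rank(\U_s\U_s^\top)+\rank(\U_t\U_t^\top)\le 2k$, I conclude every inequality is tight: $\rank(\U_t)=k$ for $t\in\{1,2,3\}$, $\im(\U_s)\cap\im(\U_t)=\{\0\}$, and $\im(\U_s)\oplus\im(\U_t)=\im(\U_s^*)\oplus\im(\U_t^*)$.

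The decisive step is a subspace intersection. Applying the previous conclusion to the pairs $(1,2)$ and $(1,3)$, $\im(\U_1)$ lies inside $\big(\im(\U_1^*)\oplus\im(\U_2^*)\big)\cap\big(\im(\U_1^*)\oplus\im(\U_3^*)\big)$, and since $\im(\U_1^*),\im(\U_2^*),\im(\U_3^*)$ are in direct sum this intersection equals exactly $\im(\U_1^*)$; comparing dimensions ($\dim(\im(\U_1))=k=\dim(\im(\U_1^*))$) upgrades the inclusion to $\im(\U_1)=\im(\U_1^*)$, and symmetrically $\im(\U_2)=\im(\U_2^*)$, $\im(\U_3)=\im(\U_3^*)$. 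Finally $\C=\U_1^*\U_1^{*\top}-\U_1\U_1^\top$ has image contained in $\im(\U_1^*)$, while $\C=\U_2^*\U_2^{*\top}-\U_2\U_2^\top$ has image contained in $\im(\U_2^*)$; hence $\im(\C)\subseteq\im(\U_1^*)\cap\im(\U_2^*)=\{\0\}$, i.e. $\C=\0$, which completes the proof.

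I expect the main obstacle to be the middle portion — establishing $\im(\U_t)=\im(\U_t^*)$ — since this is precisely where $T\ge 3$ is needed and where the abstract genericity of Remark~\ref{rm:ind-tasks} must be converted into concrete control over the unknown matrices $\U_t$. Two points require care: checking that the residuals $\U_s^*\U_s^{*\top}-\U_t^*\U_t^{*\top}$ attain the maximal possible rank $2k$ (so the rank-matching argument actually pins down $\rank(\U_t)$ and the direct-sum structure), and the linear-algebra identity $\big(\im(\U_1^*)\oplus\im(\U_2^*)\big)\cap\big(\im(\U_1^*)\oplus\im(\U_3^*)\big)=\im(\U_1^*)$, which genuinely uses the three-way directness rather than only pairwise independence. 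The first and last steps are routine linear algebra.
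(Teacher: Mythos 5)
Your proof is correct, and while it lands on the same pivotal intermediate fact as the paper --- namely $\im(\Uhat_t)=\im(\U_t^*)$ for $t=1,2,3$, from which the conclusion falls out via $\im(\Ahat-\As)\subseteq\im(\U_1^*)\cap\im(\U_2^*)=\{\0\}$ --- the route you take to get there is genuinely different and, I think, cleaner. The paper argues from the positive semi-definiteness of $\U_1^*\U_1^{*\top}=\Uhat_1\Uhat_1^\top+\U_2^*\U_2^{*\top}-\Uhat_2\Uhat_2^\top$ to obtain $\im(\Uhat_2)\subseteq\im(\Uhat_1)+\im(\U_2^*)$, and then needs a separate, somewhat delicate lemma (Lemma~\ref{lm:U1hat-int}) asserting that $\bigl(\im(\Uhat_1)\oplus\im(\U_2^*)\bigr)\cap\bigl(\im(\Uhat_1)\oplus\im(\U_3^*)\bigr)=\im(\Uhat_1)$ --- nontrivial precisely because the unknown $\Uhat_1$ appears inside the intersection. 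Your approach instead subtracts pairs of the vanishing-residual identities to get $\U_s^*\U_s^{*\top}-\U_t^*\U_t^{*\top}=\U_s\U_s^\top-\U_t\U_t^\top$, pins the rank at exactly $2k$ from the ground-truth side (using that $\im(\U_s^*)\cap\im(\U_t^*)=\{\0\}$ to compute the kernel), and then forces $\rank(\U_t)=k$, $\im(\U_s)\cap\im(\U_t)=\{\0\}$, and $\im(\U_s)\oplus\im(\U_t)=\im(\U_s^*)\oplus\im(\U_t^*)$ simultaneously by rank matching. The payoff is that the subspace intersection you then take is $\bigl(\im(\U_1^*)\oplus\im(\U_2^*)\bigr)\cap\bigl(\im(\U_1^*)\oplus\im(\U_3^*)\bigr)=\im(\U_1^*)$, stated entirely in terms of ground-truth subspaces, where the three-way directness from Remark~\ref{rm:ind-tasks} makes the intersection computation immediate and no auxiliary lemma is needed. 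Both proofs use the $T\ge 3$ hypothesis in the same place and for the same reason; you have simply replaced the paper's psd-plus-lemma machinery with a rank-counting argument that front-loads the structural conclusions about $\U_1,\U_2,\U_3$.
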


This guarantees that the ground truth parameters are the unique global minimum up to orthogonal symmetry when there are three or more tasks, as long as the ground truth adaptation rank $k$ is relatively small compared to the ambient dimension $d$. This result is surprising, as most theoretical results for multi-task learning require stricter conditions on the number of tasks $T$, typically where $T$ is required to be larger than the effective task dimension \citep{du:2021:fewshot,collins:2022:maml-anil-rep}. However, we establish this uniqueness result for the absolute condition $T\geq 3$.
As a result, we only need a rank-$k$ adaptation to realize the test task.
\begin{corollary}
    \label{corr:meta-T3-test-rank}
    For $T \geq 3$, if $\L^\ast(\Ahat,\Uhat) = 0$, then $\exists \, \Q,\V \in \Re^{d \times k}$ such that $\L^\ast_{T+1}(\Q,\V \: ;\Ahat) = 0$.
\end{corollary}
We then achieve the desired fine-sample prediction error.
\begin{corollary}
    \label{corr:meta-T3-test-fin-samp-complex}
    For $T \geq 3$, let $\L^\ast(\Ahat,\Uhat) = 0$ and let $\Q^\ast,\V^\ast \in \Re^{d \times k}$ minimize $\mathcal{L}_{T+1}^n(\Ahat + \Q\V^\top)$ over all $\Q,\V \in \Re^{d \times k}$. Then, $\Ahat + \Q^\ast\V^{\ast \top}$ satisfies Lemma \ref{lm:rrr-fin-samp-complex} with $r = k$.
\end{corollary}

Note that the condition $T \geq 3$ is necessary to establish Theorem $\ref{th:3orMoreTaskMin}$, as if there are only two tasks we can construct ground truth parameters such that the induced loss $\L^\ast$ has infinite solutions. See Appendix \ref{app:non-uniq} for an example.

\noindent \textbf{Summary.} These results show that all global minima of the LoRA-ML objective are low-rank adaptable to the downstream task and achieve finite-sample test task prediction error which grows as $\mathcal{O}\paren{\frac{kd}{n}}$. Crucially, this avoids the factor of $T$ incurred by standard retraining. Further, if $T\geq 3$, minimizing the LoRA-ML objective guarantees recovery of the ground truth parameters.
\subsubsection{Algorithms for Minimizing \texorpdfstring{\eqref{eq:f}}{}}
As shown above, minimizing the LoRA-ML objective \eqref{eq:f} leads to recovery of the ground truth parameters, with a small rank-$2k$ error term when $T=2$. We prove that this minimization problem can always be solved by local optimization methods when there are two retraining tasks.
\begin{theorem}
    \label{th:strictSaddle}
    If $T=2$ and the ambient dimension $d \geq 2k$, then $\L^\ast(\Ahat,\Uhat) = 0$ if and only if $(\Ahat,\Uhat)$ is a second order stationary point (SOSP) of $\L^\ast$.
\end{theorem}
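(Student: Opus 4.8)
The plan is to treat the two implications separately. The ``only if'' direction is immediate: since $\L\geq 0$ pointwise, any $(\Ahat,\Uhat)$ with $\L(\Ahat,\Uhat)=0$ is a global minimizer, hence in particular a second-order stationary point. The substance is the converse, which I would prove in contrapositive form: if $(\Ahat,\Uhat)$ is a first-order stationary point (FOSP) with $\L(\Ahat,\Uhat)>0$, I will exhibit a direction of strictly negative curvature, so $(\Ahat,\Uhat)$ is not an SOSP.

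First I would record the stationarity conditions, as in the proof sketch of Theorem~\ref{th:2TaskMin}. Setting $\R_t:=\As+\U_t^*\U_t^{*\top}-\Ahat-\Uhat_t\Uhat_t^{\top}$, vanishing of $\grad_\A\L$ gives $\R_1=-\R_2=:\R$, vanishing of $\grad_{\Uhat_t}\L$ gives $\R\Uhat_t=\0$ for $t=1,2$, and $\L(\Ahat,\Uhat)=\normFsq{\R}$, so our assumption is $\R\neq\0$. Since $\A\mapsto\L(\A,\U)$ is a strictly convex quadratic, an SOSP of $\L$ corresponds (after minimizing out the $\A$-perturbation) to PSD-ness of the reduced Hessian; a short computation shows that $(\Ahat,\Uhat)$ is an SOSP iff
\[
\tfrac14\,\normFsq{\blmath{P}_1-\blmath{P}_2}\ \geq\ \tr(\Delta_1^{\top}\R\,\Delta_1)-\tr(\Delta_2^{\top}\R\,\Delta_2)\qquad\text{for all }\Delta_1,\Delta_2\in\Re^{d\times k},
\]
where $\blmath{P}_t:=\Uhat_t\Delta_t^{\top}+\Delta_t\Uhat_t^{\top}$. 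So it suffices to produce $\Delta_1,\Delta_2$ violating this inequality.

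The structural input comes from $\R\Uhat_1=\R\Uhat_2=\0$. Let $\Pi:=[\,\Uhat_1\ \ {-}\Uhat_2\,]\in\Re^{d\times 2k}$, $\blmath{J}:=\diag(\I_k,-\I_k)$, and $\blmath{G}:=\U_1^*\U_1^{*\top}-\U_2^*\U_2^{*\top}$, which by task diversity (Remark~\ref{rm:ind-tasks}) has inertia $(k,k,d-2k)$. Using $\R=\blmath{G}-(\Uhat_1\Uhat_1^{\top}-\Uhat_2\Uhat_2^{\top})$, $\Uhat_1\Uhat_1^{\top}-\Uhat_2\Uhat_2^{\top}=\Pi\blmath{J}\Pi^{\top}$, and $\R\Pi=\0$, one checks that $\im(\Pi)\subseteq\ker(\R)$, that $\im(\Pi)$ is $\blmath{G}$-invariant with $\blmath{G}\Pi=\Pi(\blmath{J}\,\Pi^{\top}\Pi)$, and hence that $\R$ vanishes on $\im(\Pi)$ and equals $\blmath{G}$ on $\im(\Pi)^{\perp}$. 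A congruence computation -- the nonzero eigenvalues of $\blmath{J}\,\Pi^{\top}\Pi$ coincide with those of the symmetric matrix $(\Pi^{\top}\Pi)^{1/2}\blmath{J}(\Pi^{\top}\Pi)^{1/2}$, whose signature equals that of $\blmath{J}$ restricted to $\ker(\Pi)^{\perp}$ -- then lets me compare the inertia of $\blmath{G}\!\restriction_{\im(\Pi)}$ and of $\R$ with that of $\blmath{J}\!\restriction_{\ker(\Pi)}$. In particular, if $\Pi$ had full column rank then $\R$ would be $\0$; so $\R\neq\0$ forces $\ker(\Pi)\neq\{\0\}$, and the counting pins down the signs of the eigenvalues of $\R$ in terms of the sign pattern of $\blmath{J}$ on $\ker(\Pi)$.

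I would then build the escape direction by cases. If $\R$ has a positive eigenvalue $\lambda$ (unit eigenvector $v$, automatically orthogonal to $\im(\Pi)$) and $\ker(\Pi)$ contains $z=(z_1,z_2)$ with $\|z_1\|>\|z_2\|$ -- symmetrically for a negative eigenvalue with $\|z_1\|<\|z_2\|$ -- take $\Delta_t=vz_t^{\top}$: then $\blmath{P}_1-\blmath{P}_2=\0$ (since $\Uhat_1 z_1=\Uhat_2 z_2$) while $\tr(\Delta_1^{\top}\R\Delta_1)-\tr(\Delta_2^{\top}\R\Delta_2)=\lambda(\|z_1\|^2-\|z_2\|^2)>0$, violating the SOSP inequality. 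The inertia bookkeeping guarantees we are in one of these situations unless $\blmath{J}\!\restriction_{\ker(\Pi)}=\0$, i.e.\ $\ker(\Pi)$ is $\blmath{J}$-isotropic; in that remaining regime I would instead exploit a null direction of the quadratic part: choosing $\S\neq\0$ with $(\Uhat_1-\Uhat_2)\S^{\top}=\0$ and $\R\S\neq\0$ and setting $\Delta_1=t\S+\blmath{E}$, $\Delta_2=t\S-\blmath{E}$, the left side $\tfrac14\normFsq{(\Uhat_1+\Uhat_2)\blmath{E}^{\top}+\blmath{E}(\Uhat_1+\Uhat_2)^{\top}}$ is independent of $t$ while the right side is $4t\,\tr(\S^{\top}\R\,\blmath{E})$, made arbitrarily large by $t\to\infty$ after a suitable choice of $\blmath{E}$. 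I expect the main obstacle to be making this case analysis airtight -- using the $\blmath{J}$-inertia identities to certify that whenever $\R\neq\0$ at least one such negative-curvature direction exists (in particular handling all isotropic configurations of $\ker(\Pi)$, where one must show $\Uhat_1-\Uhat_2$ is suitably rank-deficient). This is precisely where the coupling of the two low-rank blocks through the shared $\A$, together with the exact inertia $(k,k,d-2k)$ of $\blmath{G}$ from task diversity, is used, and is what makes the equivalence hold specifically at $T=2$.
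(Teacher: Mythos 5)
Your proposal is structurally the same as the paper's argument: reduce the Hessian by a Schur complement in the $\A$-block, note that first-order stationarity forces $\R\Uhat_1=\R\Uhat_2=\0$ so that $\Pi:=[\Uhat_1\ \ {-}\Uhat_2]$ must be column-rank-deficient whenever $\R\neq\0$ (this is Corollary~\ref{corr:dimDrop}), use the inertia $(k,k)$ of $\U_1^*\U_1^{*\top}-\U_2^*\U_2^{*\top}$ coming from task diversity (Lemma~\ref{lm:u2s-u1sEvals}), and produce a negative-curvature direction of the rank-one form $\Delta_t=vz_t^\top$ with $(z_1,z_2)\in\ker\Pi$ and $v$ an eigenvector of $\R$. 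Your displayed reduced quadratic form evaluated on such a $\Delta$ is exactly the paper's $g(\balpha;\z)$ with $\balpha_t\leftrightarrow z_t$ and $\z\leftrightarrow v$, and the identities $\R=\tfrac12\bigl(\blmath{G}-\Pi\blmath{J}\Pi^\top\bigr)$ (you dropped the $\tfrac12$, which is harmless) and $\blmath{G}$-invariance of $\im\Pi$ are correct. The escape-direction computation is fine whenever one can exhibit a $z\in\ker\Pi$ of a definite $\blmath{J}$-sign paired with an eigenvector of $\R$ of the matching sign.

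There is, however, a genuine gap, and you correctly sense where it lies. First, your inertia bookkeeping (the signature of $\blmath{J}|_{\ker\Pi}$ is the complement of that of $\blmath{J}|_{\ker(\Pi)^\perp}$) holds cleanly only when $\blmath{J}|_{\ker(\Pi)^\perp}$ is nondegenerate; when it is degenerate, $\Re^{2k}$ does not split $\blmath{J}$-orthogonally into $\ker(\Pi)^\perp$ and $\blmath{J}\ker\Pi$, and it is precisely then that $\ker\Pi$ can be $\blmath{J}$-isotropic while $\R$ is semidefinite of one sign. Second, your fallback requires a nonzero $\blmath{S}$ with $(\Uhat_1-\Uhat_2)\blmath{S}^\top=\0$, i.e.\ that $\Uhat_1-\Uhat_2$ has nontrivial kernel; but $\ker\Pi\neq\{\0\}$ only supplies pairs $(z_1,z_2)$ with $\Uhat_1z_1=\Uhat_2z_2$, and isotropy ($\|z_1\|=\|z_2\|$) does not force $z_1=\pm z_2$, so neither $\Uhat_1-\Uhat_2$ nor $\Uhat_1+\Uhat_2$ need be rank-deficient. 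The paper closes exactly this regime (its Case~2) by a different device: $\rank\Uhat_1=\rank\Uhat_2=k$ (Lemma~\ref{lm:rk-Ui}) together with $\ker\Pi\neq\{\0\}$ yields $\im\Uhat_1\cap\im\Uhat_2\neq\{\0\}$; choosing $\y$ in the intersection and counting negative eigenvalues gives $\y\y^\top-\Uhat_1\Uhat_1^\top\preccurlyeq\0$, which produces $\balpha$ with $\Uhat_1\balpha_1+\Uhat_2\balpha_2=\0$ and $\|\balpha_1\|\leq\|\balpha_2\|$, followed by a first-order perturbation in $\balpha$ when the value is exactly zero. Some argument of this type is needed; as written your proposal does not establish a strictly negative Hessian direction in the degenerate/isotropic case.
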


When $T=2$, local optimization algorithms for finding SOSPs, such as perturbed gradient descent and cubic-regularized Newton method, can efficiently minimize the meta-learning objective. Surprisingly, when there are three or more tasks, numerical experiments (see Appendix \ref{app:spurious-min}) show that adversarially picking $\Uts$ can result in specific instantiations of \eqref{eq:f} with spurious local minima. In the next section, we perform extensive numerical experiments for various values of $T$ which show that these spurious minima are almost never found in practice and vanilla gradient descent is sufficient to minimize \eqref{eq:f}.

\begin{figure}[t!]
\centering
\begin{subfigure}[t]{0.49\textwidth}
    \includegraphics[width=80mm,height=60mm]{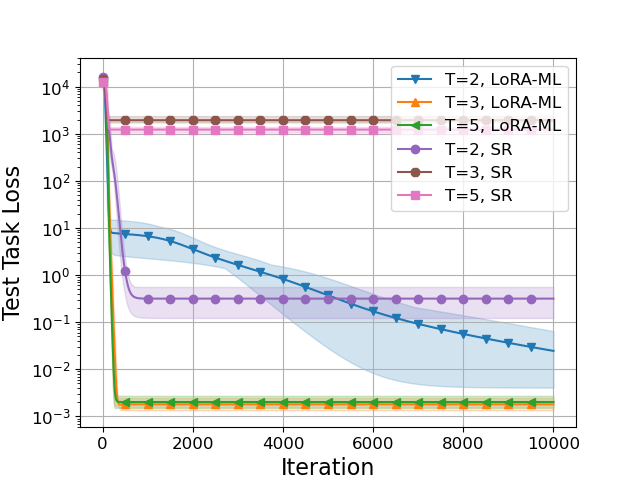}
\end{subfigure}
\hfill
\begin{subfigure}[t]{0.49\textwidth}
    \includegraphics[width=80mm,height=60mm]{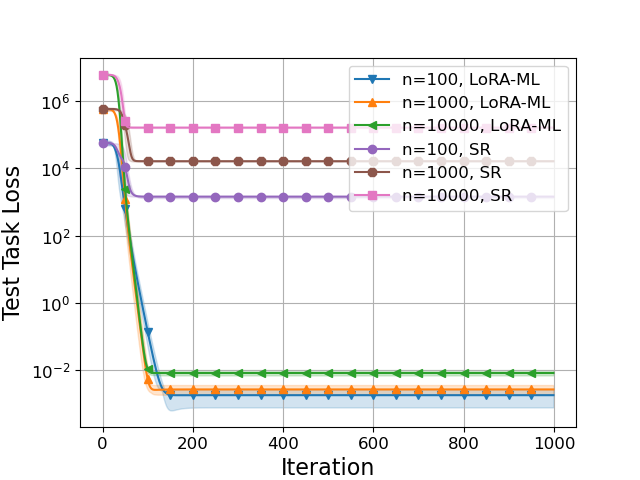}
\end{subfigure}
\caption{Linear model fine-tuning performance varying number of retraining tasks $T$ (left) and number of fine-tuning samples $n$ (right) for  LoRA-ML (ours) and standard retraining (SR).}
\label{fig:lin-vary-T-n}
\end{figure}

\section{Experiments}

We test our framework across three settings. We first consider synthetic data for linear models using LoRA fine-tuning to validate our theory in Section \ref{sec:theory}. Next, we conduct real-data experiments on vision and language tasks, comparing our general PEFT-ML retraining approach to standard retraining and Reptile \citep{nichol:2018:reptile}, a gradient-based meta-learning method. To highlight the flexibility of our framework, we consider two different fine-tuning schemes: LoRA and fine-tuning just the last layer.

\textbf{Synthetic Data}. We first test our model on synthetic regression tasks, relaxing the assumptions from our theoretical results. We consider data $\y_{t,j} = (\As + \R_t^\ast) \x_{t,j} + \beps_{t,j}$ for task $t$ and $j \in [n_t]$, where $\x_{t,j}$, $\beps_{t,j}$, and $\R_t^\ast$ are generated just as in Section \ref{sec:theory}. $\As$ is constructed by sampling each entry as an i.i.d. $\mathcal{N}(0,1)$ random variable. We define parameters $N,n$ such that the number of samples per retraining task $n_t = N$ for all $t \leq T$, and the number of test task samples $n_{T+1} = n$. Setting $\L$ to be the mean squared error loss, we run gradient descent on the standard retraining \eqref{eq:fft-ptf} and the LoRA-ML \eqref{eq:meta-adapters-finsamp-gen} objectives. After recovering $\Ahat$ during retraining, we apply the low-rank adaptation $\Q\V^\top$ when fine-tuning to the test task. When $T = 2$, we use a rank-$3k$ adaptation during fine-tuning to account for the inexact recovery explained in Theorem \ref{th:2TaskMin}, and otherwise use a rank-$k$ adaptation. 

In each experiment we vary one hyperparameter from a fixed set of values and plot the prediction error between the recovered model and the ground truth model $\frac{1}{n} \sum_{j} \| \y_{T+1,j} - (\Ahat + \Q\V^\top) \x_{T+1,j} \|_2^2$. Results are averaged over 5 trials, with the shaded region showing the full range of values. Figure~\ref{fig:lin-vary-T-n} shows that LoRA-ML retraining significantly outperforms standard retraining (SR) for all data settings. For $T > 2$, we see that applying gradient descent to the LoRA-ML objective is sufficient to achieve global minimization and recover an adaptable solution. Thus, even though there may exist spurious local minimizers, we do not encounter them in practice. Further, the fine-tuning performance after standard retraining worsens with larger $T$. This is supported by our theory in Section \ref{sec:SR-FT} which shows that as $T$ increases, standard retraining recovers worse solutions that leave a larger intrinsic dimension for the fine-tuning stage. See Appendix \ref{app:synth-data-exp} for hyperparameter details and further ablations.

\textbf{Vision Experiments}. We use CIFAR-10 \citep{krizhevsky:2009:cifar}, and define $T = 4$ binary retraining tasks involving classification between consecutive class labels. Specifically, task 1 classifies between classes 1 and 2, task 2 between classes 3 and 4, etc. The test task is binary classification between classes 9 and 10. We evaluate our PEFT-ML framework using a model based on the MLP-Mixer architecture \citep{tolstikhin:2021:mlpmixer}. We compare two different adaptation methods: (i) LoRA with rank 1 adapters, and (ii) fine-tuning only the last layer. For both PEFT strategies, we evaluate three retraining strategies: (a) PEFT-ML for each PEFT method (LoRA-ML, Last-Layer-ML), (b) standard retraining, and (c) Reptile, a popular gradient-based meta-learning method. We report mean results along with standard errors across 5 trials in Table \ref{tab:vison}. Across both fine-tuning methods, PEFT-ML retraining consistently yields the best performance. Interestingly, standard retraining outperforms Reptile, as Reptile assumes gradient-based adaptation which is misaligned with the PEFT methods used to adapt at test-time.

\begin{table}[t!]
\centering
\caption{Mean test accuracies and standard errors for PEFT-ML and standard retraining methods using LoRA-based and last-layer-based fine-tuning for adapting to a subset of CIFAR-10 classes.}
\label{tab:vison}
\vspace{2mm}
\scriptsize
\begin{minipage}{0.48\textwidth}
\centering
\begin{tabular}{lcc}
\multicolumn{3}{c}{\textbf{LoRA Fine-Tuning}} \\
\toprule
\textbf{Method} & \textbf{Mean} & \textbf{Std. Err.} \\
\midrule
LoRA-ML & \textbf{86.09} & 0.35 \\
SR+LoRA & 85.29 & 0.17 \\
Reptile+LoRA & 81.30 & 0.42 \\
\bottomrule
\end{tabular}
\end{minipage}
\hfill
\begin{minipage}{0.48\textwidth}
\centering
\begin{tabular}{lcc}
\multicolumn{3}{c}{\textbf{Last-Layer Fine-Tuning}} \\
\toprule
\textbf{Method} & \textbf{Mean} & \textbf{Std. Err.} \\
\midrule
Last-Layer-ML & \textbf{83.90} & 0.17 \\
SR+Last-Layer & 81.16 & 0.22 \\
Reptile+Last-Layer & 72.55 & 0.39 \\
\bottomrule
\end{tabular}
\end{minipage}
\label{tb:lora-vs-last-layer}
\end{table}

\textbf{Language Experiments.} We test our PEFT-ML framework using the ConvAI2 dataset with the RoBERTa-base model \citep{liu:2019:roberta}. ConvAI2 comprises conversations between two personas, where each persona is associated with a list of facts that informs their responses. We model learning the dialogue continuations of each persona as a different task, where we aim to select the correct continuation from a set of choices given the conversation history. We retrain with $T=10$ tasks and then fine-tune the model from the best-performing epoch to each of the 10 test tasks. We run 5 trials and report the mean accuracy and standard error on the held-out data for each test task in Table~\ref{tab:conv-results-rk8}, using the same setup and method naming conventions as in the vision experiments. PEFT-based meta-learning performs best for both adaptation methods.

\begin{table}[t!]
\centering
\caption{Mean accuracies $\pm$ standard error for 10 test tasks using different retraining and fine-tuning method combinations on ConvAI2.}
\scriptsize
\resizebox{\textwidth}{!}{
\begin{tabular}{lccccccccccc}
\toprule
\textbf{Algorithm} & \textbf{T1} & \textbf{T2} & \textbf{T3} & \textbf{T4} & \textbf{T5} & \textbf{T6} & \textbf{T7} & \textbf{T8} & \textbf{T9} & \textbf{T10} & \textbf{Avg} \\
\midrule
LoRA-ML & 57 $\pm$ 2 & \textbf{41} $\pm$ 4 & \textbf{51} $\pm$ 4 & \textbf{50} $\pm$ 4 & \textbf{51} $\pm$ 4 & \textbf{30} $\pm$ 2 & \textbf{66} $\pm$ 5 & \textbf{47} $\pm$ 4 & \textbf{43} $\pm$ 4 & \textbf{38} $\pm$ 2 & \textbf{47.4} $\pm$ 1.9 \\
SR+LoRA & \textbf{59} $\pm$ 5 & 31 $\pm$ 4 & 50 $\pm$ 7 & 40 $\pm$ 4 & 24 $\pm$ 5 & 20 $\pm$ 2 & 41 $\pm$ 10 & 36 $\pm$ 2 & 23 $\pm$ 5 & 26 $\pm$ 6 & 35.0 $\pm$ 4.1 \\
Reptile+LoRA & 45 $\pm$ 7 & 29 $\pm$ 5 & 35 $\pm$ 6 & 36 $\pm$ 2 & 19 $\pm$ 6 & 21 $\pm$ 3 & 28 $\pm$ 10 & 29 $\pm$ 7 & 21 $\pm$ 5 & 21 $\pm$ 7 & 28.4 $\pm$ 5.0 \\
\midrule
Last-Layer-ML  & \textbf{55} $\pm$ 2 & \textbf{27} $\pm$ 5 & \textbf{51} $\pm$ 5 & \textbf{44} $\pm$ 2& \textbf{36}$\pm$ 4& \textbf{25} $\pm$ 4 & \textbf{55} $\pm$ 4 & \textbf{43} $\pm$ 5 & \textbf{33} $\pm$ 4 & \textbf{34} $\pm$ 4 & \textbf{40.2} $\pm$ 1.5 \\
SR+Last-Layer  & 41 $\pm$ 4 & 9 $\pm$ 5 & 29 $\pm$ 3 & 36 $\pm$ 5 & 28 $\pm$ 8 & 15 $\pm$ 4 & 31 $\pm$ 8 & 24 $\pm$ 5 & 15 $\pm$ 6 & 17 $\pm$ 6 & 24.5 $\pm$ 4.2 \\
Reptile+Last-Layer & 35 $\pm$ 6 & 13 $\pm$ 4 & 18 $\pm$ 3 & 29 $\pm$ 3 & 19 $\pm$ 9 & 18 $\pm$ 4 & 15 $\pm$ 9 & 17 $\pm$ 5 & 15 $\pm$ 6 & 16 $\pm$ 4 & 19.4 $\pm$ 4.3 \\
\bottomrule
\end{tabular}
}
\label{tab:conv-results-rk8}
\end{table}

Table~\ref{tab:conv-results-rk8} reports results where LoRA-ML retraining and all fine-tuning uses a LoRA rank of 8. To assess robustness to the choice of rank, Table~\ref{tab:lora-rank-ablation} reports performance across different values. LoRA-ML consistently outperforms the baselines, and while choosing a rank larger than 1 clearly improves the performance of LoRA-ML, ranks 4, 8, and 16 yield similar results. We note that LoRA-ML is always retrained with the same rank used for fine-tuning. Additional details for both real-data experiments are provided in Appendix~\ref{app:real-data-exp}.

\begin{table}[t!]
\centering
\caption{Mean accuracies (averaged over all test tasks) $\pm$ standard error for different retraining methods at varying LoRA ranks during fine-tuning.}
\small
\begin{tabular}{lcccc}
\toprule
\textbf{Algorithm} & \textbf{Rank 1} & \textbf{Rank 4} & \textbf{Rank 8} & \textbf{Rank 16} \\
\midrule
LoRA-ML         & \textbf{44.7} $\pm$ 0.8 & \textbf{48.6} $\pm$ 1.6 & \textbf{47.4} $\pm$ 1.9 & \textbf{48.2} $\pm$ 1.5 \\
SR+LoRA & 36.3 $\pm$ 4.1 & 35.5 $\pm$ 4.3 & 35.0 $\pm$ 4.1 & 37.1 $\pm$ 4.1 \\
Reptile+LoRA            & 26.6 $\pm$ 5.8 & 27.7 $\pm$ 5.3 & 28.4 $\pm$ 5.0 & 27.8 $\pm$ 5.9 \\
\bottomrule
\end{tabular}
\label{tab:lora-rank-ablation}
\end{table}

\section{Conclusion}

We presented PEFT-ML for retraining an FM on a collection of tasks to prepare the model for subsequent downstream fine-tuning. We theoretically demonstrated strict performance gaps between standard retraining and the PEFT-ML objective using LoRA (LoRA-ML). Empirically, retraining with PEFT-ML outperformed standard retraining for adapting to unseen downstream tasks using LoRA and last-layer fine-tuning. Future research includes extending our theoretical analysis to more general adapters and more complex model architectures, such as transformers.

\section*{Acknowledgments}
This work was supported in part by NSF Grants 2019844, 2107037, and 2112471, ONR Grant N00014-19-1-2566, the Machine Learning Lab (MLL) at UT Austin, the NSF AI Institute for Foundations of Machine Learning (IFML), and Qualcomm through the Wireless Networking and Communications Group (WNCG) Industrial Affiliates Program. We are grateful for computing support on the Vista GPU Cluster through the Center for Generative AI (CGAI) and the Texas Advanced Computing Center (TACC) at the University of Texas at Austin.

\clearpage

\printbibliography

\clearpage

\appendix
\section*{Appendix}

\section{Related Work on LoRA-Style PEFT}
\label{app:rw_lora}

There is a vast amount of work in developing PEFT methods for FMs. The LoRA algorithm \citep{hu:2022:lora} has established itself as a popular and successful PEFT strategy and has inspired various extensions such as QLoRA, DoRA, and others \citep{dettmers:2023:qlora,liu:2024:dora,zhang:2023:adaptive}. These algorithms are heuristics for mimicking the full finetuning of an FM to a specific downstream task and have proven to be empirically successful in various settings. However, there is a lack of theoretical analysis on the adaptability of PFMs under LoRA-style adaptations, the ability to efficiently optimize LoRA-style objectives, and the kinds of solutions they recover. Some recent works have attempted to analyze different parts of these theoretical questions.  

\textbf{Convergence of LoRA.} \citep{jang:2024:lti} analyzes the optimization landscape for LoRA for the Neural Tangent Kernel regime. The authors show that LoRA finetuning converges in this setting as they prove that the objective function satisfies a strict saddle property, ensuring that there are no spurious local minima. However, this focuses on the actual ability of LoRA to converge to the optimal low-rank adapter given an FM, and does not consider the adaptability of the FM in the first place. 

\textbf{Expressivity of LoRA.} \citep{zeng:2023:tep} derives the expressive power of LoRA as a function of model depth. This work shows that under some mild conditions, fully connected and transformer networks when respectively adapted with LoRA can closely approximate arbitrary smaller networks. They quantify the required LoRA rank to achieve this approximation as well as the resulting approximation error.

\section{Proofs}
\label{app:proofs}

\subsection{Proof of Theorem \ref{th:sr-ret-exp-loss}}

By Equation \eqref{eq:AhatPTF} we have that $\AhatSR = \As + \frac{1}{T}\sumtT \UtsUts$. In the following the expectation is always taken over $\U^\ast = (\U_1^\ast,\dots,\U_T^\ast)$, where $\U_t^\ast \in \Re^{d \times k}$ satisfies $\vecmat (\Uts) \sim \mathcal{N}(\0,\I_{dk})$. Then, \begin{align*}
    \mathbb{E}_{\U^\ast}\left[ \sum_{t=1}^T \L_t(\AhatSR) \right] &= \sumtT \EV{\norm{\As + \UtsUts - \As - \frac{1}{T}\sumsT \UssUss }{F}^2}{}\\
    % &= \sumtT \EV{\norm{\UtsUts - \frac{1}{T}\sumsT \UssUss }{F}^2}{}\\
    &= \sumtT \EV{\norm{\UtsUts - k\I - \frac{1}{T}\sumsT \paren{\UssUss - k\I}}{F}^2}{}\\
    % &= \sumtT \mathbb{E} \left[ \norm{\UtsUts - k\I}{F}^2 + \norm{\frac{1}{T}\sumsT \paren{\UssUss - k\I}}{F}^2 - \right.\\
    % &\qquad \qquad \left. \frac{2}{T}\tr\curly{\paren{\UtsUts - k\I}\paren{\sumsT \UssUss - k\I}} \right]\\
    &= T \EV{\norm{\U_1^\ast \U_1^{\ast \top} - k\I}{F}^2}{} + \frac{1}{T} \EV{\norm{\sumsT \paren{\UssUss - k\I}}{F}^2}{}\\
    & \qquad \qquad - 2 \EV{\tr\curly{\paren{\U_1^\ast \U_1^{\ast \top} - k\I}\paren{\sumsT \UssUss - k\I}}}{},
\end{align*}

where the last equality follows from the fact that each $\UtsUts$ are i.i.d.

Note that the second term $\frac{1}{T} \EV{\norm{\sumsT \paren{\UssUss - k\I}}{F}^2}{}$ is the total variance of the sum of i.i.d. matrices $\UssUss - k\I$, so it is equal to the sum of the individual total variances:
\begin{equation*}
    \frac{1}{T} \EV{\norm{\sumsT \paren{\UssUss - k\I}}{F}^2}{} = \frac{1}{T} \sumsT \EV{\norm{ \paren{\UssUss - k\I}}{F}^2}{} = \EV{\norm{\U_1^\ast \U_1^{\ast \top} - k\I}{F}^2}{}.
\end{equation*}

Further, the third term $-2 \EV{\tr\curly{\paren{\U_1^\ast \U_1^{\ast \top} - k\I}\paren{\sumsT \UssUss - k\I}}}{}$ is the sum of total covariances between zero mean random matrices $\U_1^\ast \U_1^{\ast \top} - k\I$ and each $\UssUss - k\I$. Since each $\U_s^\ast$ is drawn independently, we only pick up the first term of $\sumsT \UssUss - k\I$:
\begin{align*}
    \EV{\tr\curly{\paren{\U_1^\ast \U_1^{\ast \top} - k\I}\paren{\sumsT \UssUss - k\I}}}{} &= \EV{\tr\curly{
    \paren{\U_1^\ast \U_1^{\ast \top} - k\I} \paren{\U_1^\ast \U_1^{\ast \top} - k\I}}}{} \\
    &= \EV{\norm{\U_1^\ast \U_1^{\ast \top} - k\I}{F}^2}{}.
\end{align*}

Combining the above simplifications gives that
\begin{equation*}
    \mathbb{E}_{\U^\ast}\left[ \sum_{t=1}^T \L_t(\AhatSR) \right] = \paren{T-1} \EV{\norm{\U_1^\ast \U_1^{\ast \top} - k\I}{F}^2}{}.
\end{equation*}

Then using the fact that each $\Uts\Uts$ is an independent sample of a $d \times d$ Wishart distribution with scale matrix $\I$ and $k$ degrees of freedom, we have that
\begin{equation*}
    \EV{\norm{\U_1^\ast \U_1^{\ast \top} - k\I}{F}^2}{} = kd(d+1)
\end{equation*}

\subsection{Proof of Propositions \ref{prop:sr-test-rk}, \ref{prop:sr-test-norm-err}}

Recall $\AhatSR = \As + \frac{1}{T}\sumtT \UtsUts$. Then for any $\Q,\V \in \Re^{d \times r}$,
\begin{align*}
    \L_{T+1}^\ast(\Q,\V \: ;\AhatSR) &= \frac{1}{2}\norm{\As + \U_{T+1}^\ast\U_{T+1}^{\ast \top} - \AhatSR - \Q \V^\top}{F}^2\\
    &= \frac{1}{2}\norm{\U_{T+1}^\ast\U_{T+1}^{\ast \top} - \sumtT \UtsUts - \Q \V^\top}{F}^2
\end{align*}

By Remark \ref{rm:ind-tasks} we have that $\rank\paren{\U_{T+1}^\ast\U_{T+1}^{\ast \top} - \sumtT \UtsUts} = \min \{ d, k(T+1) \}$. Then Proposition \ref{prop:sr-test-rk} follows from that fact that $\rank(\Q \V^\top) \leq r$.

Further, as $T \rightarrow \infty$, the strong law of large numbers implies that $\frac{1}{T} \sumtT \UtsUts \rightarrow \EV{\UtsUts}{} = k\I$. Thus for large $T$,

\begin{align}
    \norm{\U_{T+1}^\ast\U_{T+1}^{\ast \top} - \frac{1}{T} \sumtT \UtsUts - \Q\V^\top}{F}^2 \rightarrow \norm{\U_{T+1}^\ast\U_{T+1}^{\ast \top} - k\I - \Q \V^\top}{F}^2
\end{align}

Using classic low-rank matrix factorization results, the $\Q^\ast \V^{\ast \top}$ that minimizes $\norm{\U_{T+1}^\ast\U_{T+1}^{\ast \top} - k\I - \Q \V^\top}{F}^2$ will exactly capture the $r$ eigenvectors of $\U_{T+1}^\ast\U_{T+1}^{\ast \top} - k\I$ with largest magnitude eigenvalue. But, $\U_{T+1}^\ast\U_{T+1}^{\ast \top} - k\I$ has $d-k$ eigenvalues of magnitude $k$, so $\Q^\ast \V^{\ast \top}$ can only capture $r$ of them. Thus, $\norm{\U_{T+1}^\ast\U_{T+1}^{\ast \top} - k\I - \Q^\ast \V^{\ast \top}}{F}^2 \geq (d-k-r)k^2$. Since $\Q^\ast \V^{\ast \top}$ minimized this quantity, we have that
\begin{equation*}
    \L_{T+1}^\ast(\Q,\V \: ;\AhatSR) \geq (d-k-r)k^2 \quad \forall \Q,\V \in \Re^{d \times r}
\end{equation*}

Thus, $\L_{T+1}(\Q,\V \: ;\AhatSR)$ scales as $(d-k-r)k^2 \approx (d-r)k^2$ since $k \ll d$.

\subsection{Proof of Theorem \ref{th:2TaskMin}}
Since $\L^\ast(\Ahat,\Uhat) = 0$ and $\L^\ast \geq 0$ we must have that $\gradA \L^\ast = 0$.

Thus, $\Ahat = \As - \frac{1}{T} \sumjT \paren{\UjUjhat - \UjsUjs}$. Plugging this into $\L^\ast$ gives

\begin{align*}
    0 = \L^\ast(\Ahat,\Uhat) &= \frac{1}{2} \sumtT \normFsq{\As + \UtsUts - \paren{\As - \frac{1}{T} \sumsT \paren{\UsUshat - \UssUss}} - \UtUt} \\
    &= \frac{1}{2} \sumtT \normFsq{\UtsUts - \UtUt - \frac{1}{T} \sumsT \paren{\UsUshat - \UssUss}}.
\end{align*}

Thus each term of the summation is zero, so for all $t,s \in [T]$,
\begin{equation*}
    \Uhat_t \Uhat_t^T - \U_t^\ast \U_t^{\ast \top} = \Uhat_s \Uhat_s^T - \U_s^\ast \U_s^{\ast \top}.
\end{equation*}

Combining these results gives that
\begin{align*}
    \Ahat &= \As - \frac{1}{T} \sumsT \paren{\UsUshat - \UssUss}\\
    &= \As - \paren{\Uhat_1 \Uhat_1^\top - \U_1^\ast \U_1^{\ast \top}}
\end{align*}

Let $\C = - \Uhat_1 \Uhat_1^\top + \U_1^\ast \U_1^{\ast \top}$.

Then $\Ahat = \As + \C$ and $rank(\C) \leq rank(\Uhat_1 \Uhat_1^\top) + rank(\U_1^\ast \U_1^{\ast \top}) \leq 2k$.\\

Note the the effective remaining test-task dimension is
\begin{align*}
    \rank \paren{\As + \U_{T+1}^\ast \U_{T+1}^{\ast \top} - \As - \C} &= \rank \paren{\U_{T+1}^\ast \U_{T+1}^{\ast \top} - \C}\\
    &\leq \rank \paren{\U_{T+1}^\ast \U_{T+1}^{\ast \top}} + \rank \paren{\C}\\
    &\leq 3k
\end{align*}

\subsection{Proof of Theorem \ref{th:3orMoreTaskMin}}
\label{app:thm_3orMoreTaskMin}
\begin{proof}
    Since $\L^\ast(\Ahat,\Uhat) = 0$, we have that for all $t,s \in [T]$,
    \begin{equation}
        \label{eq:global-min-3-task-eq}
        \Uhat_t \Uhat_t^\top - \U_t^\ast \U_t^{\ast \top} = \Uhat_s\Uhat_s^\top - \U_s^\ast \U_s^{\ast \top}
    \end{equation}

    Applying this to the first three tasks and rearranging gives that 
    \begin{align}
        \U_1^\ast \U_1^{\ast \top} &= \Uhat_1 \Uhat_1^\top + \U_2^\ast \U_2^{\ast \top} - \Uhat_2 \Uhat_2^\top \\
        &= \Uhat_1 \Uhat_1^\top + \U_3^\ast \U_3^{\ast \top} - \Uhat_3 \Uhat_3^\top.
    \end{align}

    We first show that $\im(\Uhat_1) = \im(\U_1^\ast)$.
    
    Since $\U_1^\ast \U_1^{\ast \top} \succcurlyeq 0$, we must have that $\im(\Uhat_2) \subseteq \im(\Uhat_1) + \im(\U_2^\ast)$ and $\im(\Uhat_3) \subseteq \im(\Uhat_1) + \im(\U_3^\ast)$, as otherwise there would exist a vector on $\ker\left(\Uhat_1 \Uhat_1^\top + \U_2^\ast \U_2^{\ast \top}\right) \cap \ker(\Uhat_2 \Uhat_2^\top)^\perp$ whose existence contradicts the positive semi-definiteness of $\U_1^\ast \U_1^{\ast \top}$.

    Thus,
    \begin{align}
        \label{eq:U1star-inclusion1}
        \im(\U_1^\ast) &\subseteq \im(\Uhat_1) + \im(\U_2^\ast) \\
         \label{eq:U1star-inclusion2}
        \im(\U_1^\ast) &\subseteq \im(\Uhat_1) + \im(\U_3^\ast)
    \end{align}

    Using that fact that for subspaces $\X,\Y,\Z$, $\X \subseteq \Y \implies \X + \Z \subseteq \Y + \Z$, we can add $\im(\U_2^\ast)$ and $\im(\U_3^\ast)$ to both sides of \ref{eq:U1star-inclusion1} and \ref{eq:U1star-inclusion2} respectively. This gives: 
    \begin{align}
        \label{eq:U1star-U2star-inclusion}
        \im(\U_1^\ast) \oplus \im(\U_2^\ast) &\subseteq \im(\Uhat_1) + \im(\U_2^\ast) \\
        \label{eq:U1star-U3star-inclusion}
        \im(\U_1^\ast) \oplus \im(\U_3^\ast)&\subseteq \im(\Uhat_1) + \im(\U_3^\ast).
    \end{align}

    For $t \in \{2,3\}$, we clearly have that $\dim \paren{\im(\Uhat_1) + \im(\U_t^\ast)} \leq \dim \im(\Uhat_1) + \dim \im(\U_t^\ast) \leq 2k$, and $\dim \paren{\im(\U_1^\ast) + \im(\U_t^\ast)} = 2k$. Thus,
    \begin{align}
        \label{eq:subspace-eq1}
        \paren{\im(\U_1^\ast) \oplus \im(\U_2^\ast)} &=  \paren{\im(\Uhat_1) \oplus \im(\U_2^\ast)}\\
        \paren{\im(\U_1^\ast) \oplus \im(\U_3^\ast)} &=  \paren{\im(\Uhat_1) \oplus \im(\U_3^\ast)}
        \label{eq:subspace-eq2}
    \end{align}

    \begin{lemma}
        \label{lm:U1hat-int}
        $\paren{[\im(\Uhat_1) \oplus \im(\U_2^\ast)] \cap [\im(\Uhat_1) \oplus \im(\U_3^\ast)]} = \im(\Uhat_1)$
    \end{lemma}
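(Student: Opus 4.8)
The plan is to prove the two inclusions of the set identity separately. The inclusion $\im(\Uhat_1) \subseteq [\im(\Uhat_1) \oplus \im(\U_2^*)] \cap [\im(\Uhat_1) \oplus \im(\U_3^*)]$ is immediate, since $\im(\Uhat_1)$ sits inside each of the two subspaces being intersected. So the real work is the reverse inclusion, and for that I would exploit the direct-sum identities \eqref{eq:subspace-eq1}--\eqref{eq:subspace-eq2} together with the linear-independence structure of Remark \ref{rm:ind-tasks}.

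First I would extract from \eqref{eq:subspace-eq1}--\eqref{eq:subspace-eq2} the facts actually needed: since each right-hand side is a genuine direct sum of total dimension $2k$ and $\dim\im(\U_t^*)=k$, we get $\im(\Uhat_1)\cap\im(\U_2^*)=\{\0\}$, $\im(\Uhat_1)\cap\im(\U_3^*)=\{\0\}$, $\im(\Uhat_1)\subseteq\im(\U_1^*)\oplus\im(\U_2^*)$, and $\im(\Uhat_1)\subseteq\im(\U_1^*)\oplus\im(\U_3^*)$. Also, Remark \ref{rm:ind-tasks} applied to the first three tasks (available since $T\geq 3$) gives that $\im(\U_1^*),\im(\U_2^*),\im(\U_3^*)$ are linearly independent, hence $\im(\U_1^*)\cap[\im(\U_2^*)\oplus\im(\U_3^*)]=\{\0\}$ and $\im(\U_2^*)\cap\im(\U_3^*)=\{\0\}$.

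The crux --- and the step I expect to be the main obstacle --- is the auxiliary claim $\im(\Uhat_1)\cap[\im(\U_2^*)\oplus\im(\U_3^*)]=\{\0\}$. To prove it, I would take $w$ in this intersection, write $w=c_2+c_3$ with $c_2\in\im(\U_2^*)$, $c_3\in\im(\U_3^*)$, and also $w=d_1+d_2$ with $d_1\in\im(\U_1^*)$, $d_2\in\im(\U_2^*)$ using $\im(\Uhat_1)\subseteq\im(\U_1^*)\oplus\im(\U_2^*)$; rearranging gives $d_1=(c_2-d_2)+c_3\in\im(\U_1^*)\cap[\im(\U_2^*)\oplus\im(\U_3^*)]=\{\0\}$, so $d_1=\0$ and therefore $w=d_2\in\im(\Uhat_1)\cap\im(\U_2^*)=\{\0\}$.

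With the claim in hand, the reverse inclusion follows quickly: given $v\in[\im(\Uhat_1)\oplus\im(\U_2^*)]\cap[\im(\Uhat_1)\oplus\im(\U_3^*)]$, write $v=a_1+a_2=b_1+b_3$ with $a_1,b_1\in\im(\Uhat_1)$, $a_2\in\im(\U_2^*)$, $b_3\in\im(\U_3^*)$; then $a_2-b_3=b_1-a_1$ lies in $\im(\Uhat_1)\cap[\im(\U_2^*)\oplus\im(\U_3^*)]=\{\0\}$, so $a_2=b_3\in\im(\U_2^*)\cap\im(\U_3^*)=\{\0\}$, forcing $a_2=b_3=\0$ and hence $v=a_1\in\im(\Uhat_1)$. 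The only delicacy throughout is keeping track of which sums are direct, which is precisely what the dimension bookkeeping above secures; no heavier machinery is required.
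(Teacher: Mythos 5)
Your proof is correct and follows essentially the same route as the paper's: both reduce the lemma to establishing the auxiliary claim $\im(\Uhat_1)\cap[\im(\U_2^*)\oplus\im(\U_3^*)]=\{\0\}$ via the subspace equalities \eqref{eq:subspace-eq1}--\eqref{eq:subspace-eq2} and the linear independence from Remark~\ref{rm:ind-tasks}, then finish by element-wise decomposition. The only cosmetic difference is that you prove the auxiliary claim by directly decomposing an element $w$, whereas the paper first computes $\im(\U_2^*) = [\im(\Uhat_1)\oplus\im(\U_2^*)]\cap[\im(\U_2^*)\oplus\im(\U_3^*)]$ as an intermediate subspace identity and then intersects with $\im(\Uhat_1)$; the underlying facts used are identical.
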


    \begin{proof}
        Clearly, $\im(\Uhat_1) \subseteq \paren{[\im(\Uhat_1) \oplus \im(\U_2^\ast)] \cap [\im(\Uhat_1) \oplus \im(\U_3^\ast)]}$. To show the converse, consider $\blmath{x} \in \paren{[\im(\Uhat_1) \oplus \im(\U_2^\ast)] \cap [\im(\Uhat_1) \oplus \im(\U_3^\ast)]}$.

        By assumption there exists some $\blmath{a},\blmath{b},\blmath{c},\blmath{d} \in \Re^k$ such that 
        \begin{align}
            \blmath{x} &= \Uhat_1 \blmath{a} + \U^\ast_2 \blmath{b} \\
            &= \Uhat_1 \blmath{c} + \U^\ast_3 \blmath{d}
        \end{align}

        Thus,
        \begin{equation}
            \label{eq:U1hat-im-contr}
            \Uhat_1 (\blmath{a}-\blmath{c}) + \U^\ast_2 \blmath{b} - \U^\ast_3 \blmath{d} = \0.
        \end{equation}

        By Equation \ref{eq:subspace-eq1}, we can write 
        \begin{align*}
            \im(\U_2^\ast) &= \paren{[\im(\U_1^\ast) \oplus \im(\U_2^\ast)] \cap [\im(\U_2^\ast) \oplus \im(\U_3^\ast)]}\\
            &= \paren{[\im(\Uhat_1) \oplus \im(\U_2^\ast)] \cap [\im(\U_2^\ast) \oplus \im(\U_3^\ast)]}
        \end{align*}
        
        Thus, $\im(\Uhat_1) \cap [\im(\U_2^\ast) \oplus \im(\U_3^\ast)] \subseteq \im(\Uhat_1) \cap \im(\U_2^\ast) = \{\0\}$, so
        \begin{equation}
            \label{eq:U1-int-U2sU3s}
            \im(\Uhat_1) \cap [\im(\U_2^\ast) \oplus \im(\U_3^\ast)] = \{\0\}
        \end{equation}

        Applying Equation \eqref{eq:U1-int-U2sU3s} to Equation \eqref{eq:U1hat-im-contr} implies that $\blmath{a} = \blmath{c}$ and $\blmath{b} = \blmath{d} = \0$. Thus $\blmath{x} = \Uhat_1 \blmath{a} \in \im(\Uhat_1)$, so $\paren{[\im(\Uhat_1) \oplus \im(\U_2^\ast)] \cap [\im(\Uhat_1) \oplus \im(\U_3^\ast)]} \subseteq \im(\Uhat_1)$. 
    \end{proof}

    Then Equations \eqref{eq:U1star-inclusion1} and \eqref{eq:U1star-inclusion2} combined with Lemma \eqref{lm:U1hat-int} implies that $\im(\U^\ast_1) \subseteq \im(\Uhat_1)$ but $\dim(\im(\U^\ast_1)) = \dim(\im(\Uhat_1)) = k$, so $\im(\U_1^\ast) = \im(\Uhat_1)$.

    Since the initial assumptions about $\Uhat_1$ and $\U_1^\ast$ analogously hold for the corresponding matrices for tasks 2 and 3, by the exact same argument we can show that 
    \begin{equation}
        \im(\U^\ast_t) = \im(\Uhat_t) \quad \forall t\in[T].
    \end{equation}
    
    Then by equation \eqref{eq:global-min-3-task-eq}, $\im (\U_1^\ast) \supseteq \im \paren{\Uhat_1 \Uhat_1^\top - \U_1^\ast \U_1^{\ast \top}} = \im \paren{\Uhat_2 \Uhat_2^\top - \U_2^\ast \U_2^{\ast \top}} \subseteq \im(\U_2^\ast)$. Thus,
    
    \begin{align*}
        \im \paren{\Uhat_1 \Uhat_1^\top - \U_1^\ast \U_1^{\ast \top}}
        & \subseteq \im (\U_1^\ast) \cap \im (\U_2^\ast) \\
        &= \{\0\}.
    \end{align*}

    Thus $\Uhat_1 \Uhat_1^\top = \U_1^\ast \U_1^{\ast \top}$. Then by Equation \eqref{eq:global-min-3-task-eq}, $\Uhat_t \Uhat_t^\top = \U_t^\ast \U_t^{\ast \top}$ for all $t \in [T]$. Lastly, since $\L^\ast(\Ahat,\Uhat) = 0$, we have that $\grad_{\A} \L^\ast(\Ahat,\Uhat) = 0$, so 
    \begin{equation*}
        \Ahat = \As + \frac{1}{T} \sumtT \UtsUts - \UtUt = \As
    \end{equation*}
\end{proof}

\subsection{Proof of Theorem \ref{th:strictSaddle}}
\begin{proof}
Clearly if $\L^\ast(\Ahat,\Uhat) = 0$, then $(\Ahat,\Uhat)$ is an SOSP. The reverse direction is the challenging part of the proof. We equivalently prove that if $(\Ahat,\Uhat)$ is a critical point and $\L^\ast(\Ahat,\Uhat) \neq 0$, then $\hess \L^\ast(\Ahat,\Uhat)$ has a negative eigenvalue.

Assume for the sake of contradiction that $(\Ahat,\Uhat)$ is a critical point and $\L^\ast(\Ahat,\Uhat) \neq 0$. Then,

\begin{align}
    \label{eq:gradAf}
    \gradA \L^\ast(\Ahat,\Uhat) &= T(\Ahat - \As) + \sumtT \paren{\UtUthat - \UtsUts} = \0\\
    \label{eq:graduif}
    \grad_{\Ut} \L^\ast(\Ahat,\Uhat) &= 2 \paren{\Ahat - \As + \UtUthat - \UtsUts} \Uthat = \0
\end{align}

Thus, 
\begin{equation}
    \label{eq:Acritval}
    \Ahat = \As - \frac{1}{T} \sumtT \paren{\UtUthat - \UtsUts}.
\end{equation}

Define $\Bt(\Uhat) = \UtUthat - \UtsUts - \frac{1}{T}\sumsT \paren{\UsUshat - \UssUss}$. Despite being a slight abuse of notation, we refer to $\Bt(\Uhat)$ as just \Bt for the remainder of the proof.

Then \eqref{eq:graduif} equivalently states:
\begin{equation}
    \label{eq:graduifB}
    \Bt\Uthat = 0.
\end{equation}
Note that by construction, $\sumtT \Bt = 0$.

Considering $\L$ as a function of the flattened vector $[\text{vec}(\A); \text{vec}(\U_1); \text{vec}(\U_2) ]$, and let $\U_1 = [\x_1 \: \dots\: \x_k]$, $\U_2 = [\y_1 \: \dots \: \y_k]$, we compute the Hessian

\begin{equation}
    \label{eq:hess}
    \hess \L = \bmat{
    \hess_{\A} \L & \grad_{\U_1} \grad_{\A} \L & \grad_{\U_2} \grad_{\A} \L\\
    \paren{\grad_{\U_1} \grad_{\A} \L}^\top & \hess_{\U_1} \L & \0 \\
    \paren{\grad_{\U_2} \grad_{\A} \L}^\top & \0 & \hess_{\U_2} \L \\
    }
\end{equation}

where \begin{align*}
    \hess_{\A} \L^\ast &= 2 \I_{d^2}\\
    \grad_{\U_1} \grad_{\A} \L^\ast &= \bmat{\paren{\x_1 \kronsum \x_1} \: \ldots \: \paren{\x_k \kronsum \x_k}} \in \Re^{d^2 \times dk}\\
    \grad_{\U_2} \grad_{\A} \L^\ast &= \bmat{\paren{\y_1 \kronsum \y_1} \: \ldots \: \paren{\y_k \kronsum \y_k}} \in \Re^{d^2 \times dk}\\
    \hess_{\U_1} \L^\ast &= 2(\A +\U_1 \U_1^\top - \As - \UonesUones)\otimes \I_k \\
    &\quad + 
    2\bmat{
    \x_1 \x_1^\top + \norm{\x_1}{2}^2 \I & \x_1^\top\x_2\I + \x_2\x_1^\top & \ldots & \x_1^\top\x_k\I + \x_k\x_1^\top\\
    \x_2^\top\x_1\I + \x_1\x_2^\top & \x_2 \x_2^\top + \norm{\x_2}{2}^2 \I & \ldots & \x_2^\top\x_k\I + \x_k\x_2^\top\\
    \vdots  & \vdots &\ddots & \vdots\\
    \x_k^\top\x_1\I + \x_1\x_k^\top & \ldots & \ldots & \x_k \x_k^\top + \norm{\x_k}{2}^2 \I
    } \\
    \hess_{\U_2} \L^\ast &= 2(\A +\U_2 \U_2^\top - \As - \UtwosUtwos)\otimes \I_k \\
    &\quad + 
    2\bmat{
    \y_1 \y_1^\top + \norm{\y_1}{2}^2 \I & \y_1^\top\y_2\I + \y_2\y_1^\top & \ldots & \y_1^\top\y_k\I + \y_k\y_1^\top\\
    \y_2^\top\y_1\I + \y_1\y_2^\top & \y_2 \y_2^\top + \norm{\y_2}{2}^2 \I & \ldots & \y_2^\top\y_k\I + \y_k\y_2^\top\\
    \vdots  & \vdots &\ddots & \vdots\\
    \y_k^\top\y_1\I + \y_1\y_k^\top & \ldots & \ldots & \y_k \y_k^\top + \norm{\y_k}{2}^2 \I
    }
\end{align*}

Note that $\kronsum$ denotes the Kronecker sum defined as $\X \kronsum \Y = \I \kron \X + \Y \kron \I$ where $\kron$ is the Kronecker product.

\begin{lemma}
    \label{lm:zeroLoss}
    $\L^\ast(\Ahat,\Uhat) = 0$ if and only if $\Bt = \0$ for each $t\in [T]$.
\end{lemma}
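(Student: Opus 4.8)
The plan is to reduce Lemma \ref{lm:zeroLoss} to a single algebraic identity: at a critical point, the $t$-th residual appearing in $\L$ is exactly $-\Bt$. Since the lemma is invoked inside the proof of Theorem \ref{th:strictSaddle} under the standing assumption that $(\Ahat,\Uhat)$ is a critical point, I would start from the $\A$-stationarity equation \eqref{eq:Acritval}, namely $\Ahat = \As - \frac{1}{T}\sumtT\paren{\UtUthat - \UtsUts}$, which holds by hypothesis. (Only this condition is needed; the $\Ut$-stationarity conditions \eqref{eq:graduifB} play no role here.)

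Next I would substitute this expression for $\Ahat$ into the $t$-th summand $\frac{1}{2}\normFsq{\As + \UtsUts - \Ahat - \UtUthat}$ of $\L$. A direct computation gives
\begin{align*}
\As + \UtsUts - \Ahat - \UtUthat
&= \UtsUts - \UtUthat + \frac{1}{T}\sumsT\paren{\UsUshat - \UssUss} \\
&= -\Bt ,
\end{align*}
by the definition $\Bt = \UtUthat - \UtsUts - \frac{1}{T}\sumsT\paren{\UsUshat - \UssUss}$. Hence $\L(\Ahat,\Uhat) = \frac{1}{2}\sumtT\normFsq{\Bt}$, and both directions of the iff are then immediate. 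For the forward direction, $\L(\Ahat,\Uhat) = 0$ forces each nonnegative term $\frac{1}{2}\normFsq{\Bt}$ to vanish, so $\Bt = \0$ for all $t \in [T]$. For the reverse direction, if $\Bt = \0$ for every $t$, then each residual $\As + \UtsUts - \Ahat - \UtUthat = \0$, so every summand of $\L$ is zero and $\L(\Ahat,\Uhat) = 0$.

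I do not expect a genuine obstacle here. The only points requiring a word of care are that the argument uses solely the $\A$-part of criticality, and that the ``if'' direction is legitimate precisely because the standing critical-point hypothesis already supplies \eqref{eq:Acritval} (so that the residual collapses to $-\Bt$). The lemma is essentially a bookkeeping identity repackaging the definition of $\Bt$, and it will be used downstream to translate the hypothesis $\L(\Ahat,\Uhat) \neq 0$ into ``$\Bt \neq \0$ for some $t$'', which is the starting point for the contradiction argument in the proof of Theorem \ref{th:strictSaddle}.
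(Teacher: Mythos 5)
Your proof is correct and takes essentially the same route as the paper: substitute the $\A$-stationarity expression \eqref{eq:Acritval} into $\L$ to obtain $\L(\Ahat,\Uhat) = \frac{1}{2}\sumtT\normFsq{\Bt}$, from which the equivalence is immediate. Your version is slightly more explicit about the per-summand cancellation yielding $-\Bt$ and about which part of criticality is used, but the argument is the same.
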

\begin{proof}
    Since $(\Ahat,\Uhat)$ is a critical point, then plugging Equation \eqref{eq:Acritval} into the definition of $\L$ gives that
    \[
        \L^\ast(\Ahat,\Uhat) = \frac{1}{2} \sumtT \normFsq{\Bt}.
    \]
    Thus $\L^\ast(\Ahat,\Uhat) = 0$ if and only if $\Bt= \0 \quad \forall t$.
\end{proof}

\begin{lemma}
    \label{lm:leadEvec}
    If\ \ $\hess_{\U} \L^\ast(\Ahat,\Uhat) \succcurlyeq \0$, then the eigenvectors corresponding to the non-zero eigenvalues of $\UtUthat$ are the leading non-negative eigenvectors of $\As + \UtsUts - \Ahat$ for all $t \in [T]$.
    %We can assume $\hat{\ui}$ is the leading non-negative eigenvector of $\As + \UtsUts - \Ahat$ without loss of generality. This is equivalent to the assumption that $\hess_{\U} \L^\ast(\Ahat,\Uhat) \succcurlyeq 0$.
\end{lemma}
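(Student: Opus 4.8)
The plan is to reduce Lemma~\ref{lm:leadEvec} to the classical landscape analysis of symmetric low-rank matrix factorization (``PCA''), carried out separately for each task. Fix $t\in[T]$ and observe that the $t$-th summand of $\L(\Ahat,\,\cdot\,)$ is $g_t(\Ut):=\tfrac12\normFsq{\bm{P}_t-\Ut\Ut^\top}$ with $\bm{P}_t:=\As+\UtsUts-\Ahat$, and it depends on $\Ut$ alone. Hence $\hess_{\U}\L(\Ahat,\Uhat)$ is block diagonal across tasks, so $\hess_{\U}\L(\Ahat,\Uhat)\succcurlyeq\bm{0}$ forces $\hess_{\Ut}\L(\Ahat,\Uhat)=\hess g_t(\Uthat)\succcurlyeq\bm{0}$ for every $t$; together with the stationarity condition \eqref{eq:graduif}, which reads $(\Uthat\Uthat^\top-\bm{P}_t)\Uthat=\bm{0}$, this makes $\Uthat$ a second-order stationary point of $g_t$. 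Two preliminary remarks let us talk about eigenvectors of $\bm{P}_t$: at a critical point \eqref{eq:gradAf} gives $\Ahat-\As=\tfrac1T\sumtT(\UtsUts-\Uthat\Uthat^\top)$, a sum of symmetric matrices, so $\bm{P}_t$ is symmetric; and the lemma is then equivalent to showing that $\Uthat\Uthat^\top$ is the best positive-semidefinite rank-$k$ approximation of $\bm{P}_t$.

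First I would extract the first-order content. From $(\Uthat\Uthat^\top-\bm{P}_t)\Uthat=\bm{0}$ one gets $(\Uthat\Uthat^\top)^2=\bm{P}_t\Uthat\Uthat^\top$; the left-hand side is symmetric, so $\bm{P}_t$ and $\Uthat\Uthat^\top$ commute and share an orthonormal eigenbasis $\bm{v}_1,\dots,\bm{v}_d$ with $\bm{P}_t\bm{v}_i=m_i\bm{v}_i$ and $\Uthat\Uthat^\top\bm{v}_i=d_i\bm{v}_i$. Substituting back into $(\Uthat\Uthat^\top)^2=\bm{P}_t\Uthat\Uthat^\top$ gives $d_i^2=m_id_i$, hence $d_i\in\{0,m_i\}$; so every nonzero eigenvalue of $\Uthat\Uthat^\top$ equals a positive eigenvalue of $\bm{P}_t$ with the same eigenvector, and $\im(\Uthat)=\vecspan\{\bm{v}_i:d_i=m_i>0\}$. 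It remains to pin down which positive eigenvalues are ``selected''.

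For that I would invoke second-order stationarity through the curvature identity (which matches the Hessian blocks displayed in the excerpt)
\begin{equation*}
\bigl\langle \Delta,\, \hess g_t(\Uthat)[\Delta]\bigr\rangle \;=\; 2\bigl\langle \Uthat\Uthat^\top-\bm{P}_t,\; \Delta\Delta^\top\bigr\rangle \;+\; \normFsq{\Uthat\Delta^\top+\Delta\Uthat^\top}\;\ge\;0\qquad\forall\,\Delta\in\Re^{d\times k},
\end{equation*}
tested on two rank-one directions. (i) If $\rank(\Uthat)<k$, pick a unit $\bm{w}\in\ker(\Uthat)$ and, if one exists, an unselected index $i$ with $m_i>0$; then $\Delta=\bm{v}_i\bm{w}^\top$ has $\Uthat\Delta^\top+\Delta\Uthat^\top=\bm{0}$ and gives curvature $2(d_i-m_i)=-2m_i<0$, a contradiction. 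Hence $\rank(\Uthat)<k$ forces every positive eigenvalue of $\bm{P}_t$ to be selected, so in all cases $\rank(\Uthat)=\min(k,p)$ where $p$ is the number of positive eigenvalues of $\bm{P}_t$. (ii) Suppose some positive $m_i$ is unselected while a selected $m_j$ obeys $m_j<m_i$; by (i) this forces $\rank(\Uthat)=k$, so $\bm{v}_j$ is a unit left singular vector of $\Uthat$ with singular value $\sqrt{m_j}$ and admits a unit right singular vector $\bm{w}_j$ with $\Uthat\bm{w}_j=\sqrt{m_j}\,\bm{v}_j$. Taking $\Delta=\bm{v}_i\bm{w}_j^\top$ and using $\bm{v}_i\perp\bm{v}_j$ gives $\normFsq{\Uthat\Delta^\top+\Delta\Uthat^\top}=2m_j$ and curvature $2(d_i-m_i)+2m_j=2(m_j-m_i)<0$, again a contradiction. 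Therefore the selected eigenvalues are precisely the $\min(k,p)$ largest eigenvalues of $\bm{P}_t$; equivalently, the eigenvectors of $\Uthat\Uthat^\top$ associated with its nonzero eigenvalues are the leading non-negative eigenvectors of $\bm{P}_t=\As+\UtsUts-\Ahat$, which is the assertion.

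The main obstacle is choosing the perturbation directions so that the cross term $\normFsq{\Uthat\Delta^\top+\Delta\Uthat^\top}$ is controlled exactly --- it must vanish in (i) and equal $2m_j$ in (ii) --- which is why one routes the argument through the singular vectors of $\Uthat$ rather than through a naive coordinate perturbation of a single column. A minor caveat is that when $\bm{P}_t$ has a repeated positive eigenvalue straddling the cutoff rank $\min(k,p)$, the eigenvectors are determined only up to rotation within that eigenspace, so the lemma is understood with this standard ambiguity; the argument above is unaffected, as it constrains only the eigenvalues of $\Uthat\Uthat^\top$ and the subspace $\im(\Uthat)$.
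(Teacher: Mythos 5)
Your proof is correct and uses the same structural reduction as the paper: fix $\Ahat$, observe that $\hess_{\U}\L$ is block-diagonal across tasks, and hence $\hess_{\U}\L\succcurlyeq\bm 0$ forces each per-task Hessian $\hess g_t(\Uthat)\succcurlyeq\bm 0$, turning the lemma into a statement about second-order stationary points of the symmetric matrix factorization objective $g_t(\Ut)=\tfrac12\normFsq{\bm P_t-\Ut\Ut^\top}$. The difference is in how that statement is discharged. The paper invokes it as a known result (citing a reference on the symmetric low-rank factorization landscape) and stops there. You instead prove it from scratch: the first-order condition gives commutativity of $\Uthat\Uthat^\top$ and $\bm P_t$ and the dichotomy $d_i\in\{0,m_i\}$, and the second-order condition is extracted via the curvature identity $\langle\Delta,\hess g_t(\Uthat)[\Delta]\rangle=2\langle\Uthat\Uthat^\top-\bm P_t,\Delta\Delta^\top\rangle+\normFsq{\Uthat\Delta^\top+\Delta\Uthat^\top}$ tested on rank-one directions $\bm v_i\bm w^\top$ chosen through $\ker(\Uthat)$ or through the right singular vectors of $\Uthat$, so that the quadratic cross term is either $0$ or exactly $2m_j$. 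Both directions, the selection of which eigenvalues appear, and the handling of the $\rank(\Uthat)<k$ case are all correct. What your route buys is self-containment: the lemma no longer leans on an external result, and the perturbation constructions make transparent exactly which eigenvalue orderings are forbidden at an SOSP. The only thing the paper's version buys in return is brevity; your note about the ambiguity at a repeated eigenvalue straddling the cutoff rank is also the right caveat and matches the ``up to orthogonal symmetry'' hedging the paper uses elsewhere.
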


\begin{proof}
    Consider the function $\fbart(\Ut;\Ahat) = \frac{1}{2} \normFsq{\As + \UtsUts - \Ahat - \UtUt}$. $\fbart$ is simply the $tth$ summand in $\L^\ast$ where $\A = \Ahat$ is fixed and we only consider the variable $\Ut$. Minimizing $\fbart$ is identical to the problem of symmetric matrix factorization.

    Using well-known properties of symmetric matrix factorization, since $\grad \fbart(\Uthat) = \0$, we must have that $\Uhat_t = \V_t \bGamma$ where the columns of $\V_t$ are the properly scaled eigenvectors of $\As + \UtsUts - \Ahat$ with non-negative eigenvalues where each column has norm equal to the square root of its corresponding eigenvalue, and $\bGamma \in O_k$  is some orthogonal matrix. Further, if the eigenvectors corresponding to the non-zero eigenvalues of $\UtUthat$ are not the leading non-negative eigenvectors, then $\hess \fbart(\Uhat) \not \succcurlyeq \0$ by \citep{zhang:2020:fst}. Since $\hess \fbart(\hat{\Ut})$ is a diagonal block of $\hess \L^\ast(\Ahat,\Uhat)$, $\hess  \fbari(\Uthat) \not \succcurlyeq \0$ would imply $\hess \L^\ast(\Ahat,\Uhat) \not \succcurlyeq \0$.
\end{proof}

\begin{remark}
    \label{rm:leadEvec}
    Without loss of generality, we can assume that the eigenvectors corresponding to the non-zero eigenvalues of $\UtUthat$ are the leading non-negative eigenvectors of $\As + \UtsUts - \Ahat$ for all $i$. 
\end{remark}

\begin{lemma}
    \label{lm:U-Us-agree}
    $\paren{\UtwoUtwohat - \UoneUonehat}\x = \paren{\UtwosUtwos - \UonesUones} \x$ \:  for all $\x \in \im(\Uhat_1) + \im(\Uhat_2)$.
\end{lemma}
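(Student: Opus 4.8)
The plan is to specialize the first-order stationarity conditions \eqref{eq:graduif}--\eqref{eq:Acritval} to the case $T=2$ and read the claim off directly. Introduce the shorthand $\D := \UtwoUtwohat - \UoneUonehat$ and $\D^* := \UtwosUtwos - \UonesUones$, so the assertion to be proved is that $(\D - \D^*)\x = \0$ for every $\x \in \im(\Uhat_1) + \im(\Uhat_2)$.

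First I would compute $\B_1$ and $\B_2$ explicitly. From the definition $\Bt = \UtUthat - \UtsUts - \tfrac{1}{T}\sumsT\paren{\UsUshat - \UssUss}$, setting $T=2$ gives
\begin{equation*}
    \B_1 = \tfrac{1}{2}\Bigl[(\UoneUonehat - \UonesUones) - (\UtwoUtwohat - \UtwosUtwos)\Bigr] = \tfrac{1}{2}(\D^* - \D), \qquad \B_2 = -\B_1 = \tfrac{1}{2}(\D - \D^*),
\end{equation*}
which is consistent with the identity $\sumtT \Bt = \0$ recorded just after \eqref{eq:graduifB}. Thus, with only two tasks, both residual matrices $\B_1,\B_2$ collapse to the single matrix $\D - \D^*$, up to sign.

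Next I would invoke the stationarity identity \eqref{eq:graduifB}, namely $\Bt \Uthat = \0$ for $t \in \{1,2\}$. From $\B_1 \Uhat_1 = \0$ we obtain $(\D^*-\D)\Uhat_1 = \0$, so $\D - \D^*$ annihilates every column of $\Uhat_1$ and hence all of $\im(\Uhat_1)$; from $\B_2 \Uhat_2 = \0$ we likewise get that $\D - \D^*$ annihilates $\im(\Uhat_2)$. Since a linear map vanishing on two subspaces vanishes on their sum, $(\D - \D^*)\x = \0$ for every $\x \in \im(\Uhat_1) + \im(\Uhat_2)$, which is precisely $\paren{\UtwoUtwohat - \UoneUonehat}\x = \paren{\UtwosUtwos - \UonesUones}\x$ on that subspace.

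There is no real obstacle here: the entire content is the observation that for two tasks the two residuals $\B_1,\B_2$ coincide with $\D-\D^*$ (up to sign), after which the two ``$\grad_{\Ut}\L = \0$'' conditions immediately pin $\D - \D^*$ down to zero on the relevant column spaces. The only point requiring a little care is the sign bookkeeping in the computation of $\B_1$ and $\B_2$, which is where the two-task restriction is used.
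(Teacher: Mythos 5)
Your proof is correct and follows essentially the same route as the paper: specialize $\B_1,\B_2$ to $T=2$, note $\B_2=-\B_1=\tfrac12(\D-\D^*)$, and apply the first-order conditions $\B_1\Uhat_1=\0$ and $\B_2\Uhat_2=\0$ to conclude $\D-\D^*$ annihilates both column spaces and hence their sum. The only difference is that you spell out the sign bookkeeping and the final "vanishes on the sum" step a bit more explicitly than the paper does.
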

\begin{proof}
     Recall $\B_1 = \frac{1}{2} \paren{\UoneUonehat - \UonesUones - \UtwoUtwohat + \UtwosUtwos}$. Then applying first-order stationarity and the fact that $\B_2 = -\B_1$, we have 
     \begin{align*}
         \paren{\UtwoUtwohat - \UoneUonehat} \Uhat_1 &= \paren{\UtwosUtwos - \UonesUones} \Uhat_1 \\
         \paren{\UtwoUtwohat - \UoneUonehat} \Uhat_2 &= \paren{\UtwosUtwos - \UonesUones} \Uhat_2.
     \end{align*}
     % $\paren{\UtwoUtwohat - \UoneUonehat} \Uhat_1 =  \paren{\UtwosUtwos - \UonesUones} \Uhat_1$ and $\paren{\UtwoUtwohat - \UoneUonehat} \Uhat_2 = \paren{\UtwosUtwos - \UonesUones} \Uhat_2$.
\end{proof}

\begin{corollary}
    \label{corr:ebasis}
    $\UtwoUtwohat - \UoneUonehat$ and $\UtwosUtwos - \UonesUones$ share an eigenbasis.
\end{corollary}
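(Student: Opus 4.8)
The plan is to combine Lemma~\ref{lm:U-Us-agree} with the observation that $\UtwoUtwohat - \UoneUonehat$ has its column space confined to $\mathcal{W} := \im(\Uhat_1) + \im(\Uhat_2)$, and then to diagonalize both matrices blockwise with respect to the decomposition $\Re^d = \mathcal{W} \oplus \mathcal{W}^\perp$. Write $M := \UtwoUtwohat - \UoneUonehat$ and $M^* := \UtwosUtwos - \UonesUones$; both are symmetric. First I would note that for any $\v \in \Re^d$ we have $M\v = \Uhat_2(\Uhat_2^\top \v) - \Uhat_1(\Uhat_1^\top\v) \in \mathcal{W}$, so $\im(M) \subseteq \mathcal{W}$; by symmetry this forces $\mathcal{W}^\perp \subseteq \ker(M)$, i.e.\ $M$ vanishes on $\mathcal{W}^\perp$ and leaves $\mathcal{W}$ invariant.

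The key (and essentially only) step is to argue that $M^*$ also leaves $\mathcal{W}$ invariant. By Lemma~\ref{lm:U-Us-agree}, $M^*\x = M\x$ for every $\x \in \mathcal{W}$, and the right-hand side lies in $\im(M) \subseteq \mathcal{W}$; hence $M^*(\mathcal{W}) \subseteq \mathcal{W}$. Since $M^*$ is symmetric, invariance of $\mathcal{W}$ immediately gives invariance of $\mathcal{W}^\perp$: for $\y \in \mathcal{W}^\perp$ and $\x \in \mathcal{W}$ one has $\langle M^*\y, \x\rangle = \langle \y, M^*\x\rangle = 0$. Therefore both $M$ and $M^*$ are block-diagonal in the decomposition $\mathcal{W} \oplus \mathcal{W}^\perp$, and on the $\mathcal{W}$ block they coincide by Lemma~\ref{lm:U-Us-agree}.

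To finish, I would assemble the shared eigenbasis one block at a time: take an orthonormal eigenbasis of the common restriction $M|_{\mathcal{W}} = M^*|_{\mathcal{W}}$, and take an orthonormal eigenbasis of $M^*|_{\mathcal{W}^\perp}$ --- each such vector is trivially an eigenvector (with eigenvalue $0$) of $M|_{\mathcal{W}^\perp}$. Their union is an orthonormal basis of $\Re^d$ that simultaneously diagonalizes $M$ and $M^*$, which is exactly the assertion of Corollary~\ref{corr:ebasis}. I do not anticipate any real difficulty beyond the invariance observation above --- it is precisely what makes the two restrictions line up --- with the remainder being routine spectral-theorem bookkeeping.
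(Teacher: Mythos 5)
Your proof is correct and takes essentially the same route as the paper's: both arguments use Lemma~\ref{lm:U-Us-agree} to identify a subspace on which $M = \UtwoUtwohat - \UoneUonehat$ and $M^* = \UtwosUtwos - \UonesUones$ agree, observe that $M$ vanishes off that subspace, and then use symmetry of both matrices to obtain a shared invariant decomposition and diagonalize block by block. The only difference is cosmetic -- you work with $\mathcal{W} = \im(\Uhat_1)+\im(\Uhat_2)$ while the paper uses the (possibly smaller) span $\im(M)$ of the nonzero-eigenvalue eigenvectors of $M$, and you spell out the $M^*$-invariance of the chosen subspace and of its orthogonal complement more explicitly -- but the underlying spectral argument is identical.
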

\begin{proof}
    Using the lemma, any non-zero eigenvector-eigenvalue pair of $\UtwoUtwohat - \UoneUonehat$ is also an eigenvector-eigenvalue pair of $\UtwosUtwos - \UonesUones$. Denote the space defined by the span of these eigenvectors as $\S$. Then all other eigenvectors of $\UtwosUtwos - \UonesUones$ are orthogonal to $\S$, so they are also 0-eigenvectors of $\UtwoUtwohat - \UoneUonehat$. Thus the two matrices share an eigenbasis. 
\end{proof}

\begin{corollary}
    \label{corr:dimDrop}
    $\dim \paren{\im \Uhat_1 + \im \Uhat_2} \leq 2k-1$, i.e., the set of columns of $\Uhat_1$ and $\Uhat_2$ are not linearly independent.
\end{corollary}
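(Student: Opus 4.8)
The plan is to argue by contradiction. Suppose the columns of $\Uhat_1$ and $\Uhat_2$ are linearly independent, so that $M := \im\Uhat_1 + \im\Uhat_2$ has dimension exactly $2k$; since each $\Uhat_t$ has at most $k$ columns, this forces both to have full column rank $k$ and $\im\Uhat_1 \cap \im\Uhat_2 = \{\0\}$. The first and main step is to pin down the kernel of $S := \UtwoUtwohat - \UoneUonehat$ exactly: I claim $\ker S = M^\perp$, hence $\rank S = 2k$, $\im S = M$ (by symmetry of $S$), and $S$ restricts to an invertible self-map of $M$. The inclusion $M^\perp \subseteq \ker S$ is immediate since $\Uhat_1^\top\x = \Uhat_2^\top\x = \0$ for $\x \perp M$. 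For the reverse, if $\x \in M \cap \ker S$ then $\Uhat_2\Uhat_2^\top\x = \Uhat_1\Uhat_1^\top\x$ lies in $\im\Uhat_1 \cap \im\Uhat_2 = \{\0\}$, so $\Uhat_t\Uhat_t^\top\x = \0$ and thus $\norm{\Uhat_t^\top\x}{2}^2 = \x^\top\Uhat_t\Uhat_t^\top\x = 0$ for $t=1,2$; hence $\x \perp \im\Uhat_1$ and $\x \perp \im\Uhat_2$, i.e. $\x \in M^\perp$, forcing $\x = \0$. Combining $M^\perp \subseteq \ker S$, $M \cap \ker S = \{\0\}$, and rank–nullity gives $\ker S = M^\perp$.

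Next I would bring in $\B_1$. Since $\L(\Ahat,\Uhat) \neq 0$ (the standing assumption in the proof of Theorem~\ref{th:strictSaddle}), Lemma~\ref{lm:zeroLoss} gives $\B_t \neq \0$ for some $t$, and with $T=2$ and $\B_2 = -\B_1$ this means $\B_1 \neq \0$. Set $D := 2\B_1 = (\UtwosUtwos - \UonesUones) - S$. First-order stationarity \eqref{eq:graduifB} yields $\B_1\Uhat_1 = \0$ and $\B_2\Uhat_2 = \0$, i.e. $\B_1\Uhat_1 = \B_1\Uhat_2 = \0$, so $M \subseteq \ker\B_1 = \ker D$; since $D$ is symmetric this also gives $\im D \subseteq M^\perp$ (equivalently, one may read $M \subseteq \ker D$ off Lemma~\ref{lm:U-Us-agree}). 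Thus, with respect to the orthogonal decomposition $\Re^d = M \oplus M^\perp$, both $S$ and $D$ are block-diagonal: $S$ has an invertible $2k\times 2k$ block on $M$ and zero on $M^\perp$, while $D$ has zero on $M$ and an arbitrary block on $M^\perp$.

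The final step is a rank count. From the block-diagonal forms, $\rank(S+D) = \rank S + \rank D = 2k + \rank D$. But $S + D = \UtwosUtwos - \UonesUones$, whose rank is at most $\rank(\UtwosUtwos) + \rank(\UonesUones) \leq 2k$ because $\U_1^*,\U_2^* \in \Re^{d\times k}$. Hence $\rank D = 0$, so $\B_1 = \0$, contradicting $\B_1 \neq \0$. Therefore $\dim(\im\Uhat_1 + \im\Uhat_2) \leq 2k-1$.

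I expect the main obstacle to be the first step — establishing that $S = \UtwoUtwohat - \UoneUonehat$ attains the full rank $2k$ on $M$ (equivalently $\ker S = M^\perp$) under the independence hypothesis, since one must rule out cancellations internal to $M$; everything afterward is bookkeeping with orthogonal direct sums and one rank inequality. An alternative flavor uses Corollary~\ref{corr:ebasis}: $S$ and $\UtwosUtwos - \UonesUones$ share an eigenbasis, so $\B_1$ is diagonal in it and, being nonzero, has an eigenvector $\w$ with nonzero eigenvalue; Lemma~\ref{lm:U-Us-agree} forces $\w \notin M$, while a nonzero eigenvector of $S$ would lie in $\im S \subseteq M$, so the nonzero eigenvalue comes entirely from $\UtwosUtwos - \UonesUones$. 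Converting this into the stated dimension bound still reduces to the rank count above, so I would present the direct argument.
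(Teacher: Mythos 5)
Your proof is correct and follows essentially the same contradiction argument as the paper's: assume linear independence, establish that $\UtwoUtwohat - \UoneUonehat$ has kernel exactly $M^\perp$ (hence rank $2k$ supported on $M = \im\Uhat_1 + \im\Uhat_2$), and combine with first-order stationarity and $\rank(\UtwosUtwos - \UonesUones) \leq 2k$ to force $\B_1 = \0$, contradicting $\L(\Ahat,\Uhat) \neq 0$ via Lemma~\ref{lm:zeroLoss}. Your organization of the final step --- writing $\UtwosUtwos - \UonesUones = S + 2\B_1$ as a sum of symmetric matrices with orthogonal supports and invoking rank additivity --- is a somewhat more explicit rendering of the paper's terse ``dimension counting'' (which concludes $\UoneUonehat - \UtwoUtwohat$ and $\UonesUones - \UtwosUtwos$ must agree on all of $\Re^d$), but the underlying logic is identical.
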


\begin{proof}
    Assume for contradiction that the vectors in the set $\S = \{\Uhat_1 \e_i\mid i =1,\dots,k\}\cup \{\Uhat_2 \e_i\mid i =1,\dots,k\}$ are linearly independent, where $\e_i$ is the $i$th standard basis vector in $\Re^k$.

    Then note that $\paren{\UoneUonehat - \UtwoUtwohat} \x \neq \0$ and $\paren{\UonesUones - \UtwosUtwos} \x \neq \0$ for all $\x \in \S$. By Lemma \eqref{lm:U-Us-agree}, $\UoneUonehat - \UtwoUtwohat$ and $\UonesUones - \UtwosUtwos$ agree for each vector on the $2k$-dimensional space $\vecspan(\S)$. But, both $\rank(\UoneUonehat - \UtwoUtwohat), \rank(\UonesUones - \UtwosUtwos) \leq 2k$ by construction. Then by dimension counting, $\UoneUonehat - \UtwoUtwohat$ and $\UonesUones - \UtwosUtwos$ must send $\vecspan\{\S\}^\perp$ to $\0$. Thus, $\UoneUonehat - \UtwoUtwohat$ and $\UonesUones - \UtwosUtwos$ agree on the entire basis formed by concatenating basis vectors of $\vecspan\{ \S \}^\perp$ with those of $\vecspan(\S)$. This implies that $\UoneUonehat - \UtwoUtwohat = \UonesUones - \UtwosUtwos$ and thus $\B_1 = \UoneUonehat - \UtwoUtwohat - \UonesUones + \UtwosUtwos= \0$. Then $\B_2 = -\B_1 = \0$ so by Lemma \ref{lm:zeroLoss}, $\L^\ast(\Ahat,\Uhat) = 0$ which is a contradiction.
\end{proof}

\begin{lemma}
    \label{lm:u2s-u1sEvals}
    $\UtwosUtwos - \UonesUones$ has exactly $k$ positive and $k$ negative eigenvalues.
\end{lemma}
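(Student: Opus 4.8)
The plan is to recognize $\U_2^* \U_2^{*\top} - \U_1^* \U_1^{*\top}$ as a congruence of a simple diagonal matrix and then invoke Sylvester's law of inertia. First I would record the two consequences of Remark \ref{rm:ind-tasks} that matter here: $\dim \im(\U_1^*) = \dim \im(\U_2^*) = k$, so each $\U_t^*$ has full column rank, and $\im(\U_1^*) \cap \im(\U_2^*) = \{\0\}$; together these say the columns of the $d \times 2k$ matrix $\W := \bmat{\U_2^* & \U_1^*}$ are linearly independent, i.e.\ $\W$ has full column rank $2k$ (this uses only the ground-truth factors, not the stationary point $(\Ahat,\Uhat)$). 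Writing $M := \U_2^* \U_2^{*\top} - \U_1^* \U_1^{*\top}$, a one-line block multiplication gives
\[
    M \;=\; \W \bmat{\I_k & \0 \\ \0 & -\I_k} \W^\top .
\]

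Next I would complete $\W$ to an invertible $d \times d$ matrix $\bmat{\W & \W'}$, taking the columns of $\W'$ to be any basis of $\im(\W)^\perp$, so that
\[
    M \;=\; \bmat{\W & \W'} \bmat{\I_k & \0 & \0 \\ \0 & -\I_k & \0 \\ \0 & \0 & \0_{d-2k}} \bmat{\W & \W'}^\top .
\]
Since $\bmat{\W & \W'}$ is invertible, $M$ is congruent to the block-diagonal matrix on the right, so Sylvester's law of inertia gives that $M$ has exactly $k$ positive, $k$ negative, and $d - 2k$ zero eigenvalues, which is the claim.

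I do not anticipate a genuine obstacle; the only step requiring care is the passage from the rectangular factor $\W$ to an honest congruence, which the completion to $\bmat{\W & \W'}$ handles. Should a self-contained argument be preferred over citing Sylvester, an equivalent route is to work directly on the $2k$-dimensional $M$-invariant subspace $V := \im(\U_1^*) \oplus \im(\U_2^*)$, noting that $M$ maps all of $\Re^d$ into $V$ and annihilates $V^\perp$, so the inertia of $M$ equals that of $M|_V$ plus $d-2k$ zeros: the quadratic form $\x \mapsto \x^\top M \x = \|\U_2^{*\top}\x\|_2^2 - \|\U_1^{*\top}\x\|_2^2$ is positive definite on $\ker(\U_1^{*\top}) \cap V$ and negative definite on $\ker(\U_2^{*\top}) \cap V$, each of dimension at least $2k - k = k$; and $M|_V$ is nonsingular because $M\v = \0$ with $\v \in V$ forces $\U_2^* \U_2^{*\top}\v = \U_1^* \U_1^{*\top}\v \in \im(\U_1^*) \cap \im(\U_2^*) = \{\0\}$, hence $\U_1^{*\top}\v = \U_2^{*\top}\v = \0$ and $\v \in V \cap V^\perp = \{\0\}$. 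Dimension counting on $V$ then forces exactly $k$ positive and $k$ negative eigenvalues.
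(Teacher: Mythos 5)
Your proof is correct, and it takes a genuinely different route from the paper. The paper proves this by a rank-one perturbation induction: starting from $\U_2^*\U_2^{*\top}$ (which has $k$ positive eigenvalues), it subtracts the rank-one matrices $(\U_1^*\e_j)(\U_1^*\e_j)^\top$ one column at a time, and argues via linear independence that each subtraction bumps the rank up by one and therefore contributes exactly one new negative eigenvalue, concluding by induction. Your argument instead factors $M = \W\,\mathrm{diag}(\I_k,-\I_k)\,\W^\top$ with $\W = \bmat{\U_2^* & \U_1^*}$ of full column rank $2k$, completes $\W$ to an invertible $d\times d$ matrix, and reads off the inertia $(k,k,d-2k)$ directly from Sylvester's law of inertia. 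Both are valid; the congruence route is the more standard linear-algebra identification of this situation, gives the full inertia (including the zero eigenvalues) in one step, and avoids the somewhat delicate bookkeeping about how positive eigenvalues persist under each rank-one downdate that the paper's induction implicitly relies on. Your self-contained alternative via restriction to the invariant subspace $V=\im(\U_1^*)\oplus\im(\U_2^*)$ is also sound and avoids citing Sylvester at all. Either version is a clean replacement for the paper's argument.
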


\begin{proof}
    First, note that $\UtwosUtwos$ has exactly $k$ positive eigenvalues and $k-d$ eigenvalues of \0. Then $\UtwosUtwos - (\U_1^\ast\e_1)(\U_1^\ast \e_1)^\top$ has rank $k+1$ because of the linear independence of the columns of the combined set of columns $\U_1^\ast$ and $\U_2^\ast$. Further, since we subtract $(\U_1^\ast\e_1)(\U_1^\ast\e_1)^\top$, we must be accumulating an additional negative eigenvalue relative to $\UtwosUtwos$.  Continuing this process shows that subtracting $(\U_1^\ast\e_{j+1})(\U_1^\ast\e_{j+1})^\top$ from $\UtwosUtwos - \sum_{t=1}^{j} (\U_1^\ast\e_i)(\U_1^\ast\e_i)^\top$ contributes exactly one more negative eigenvalue, since $\U_1^\ast\e_{j+1}$ can never be written as a linear combination of $\{\U_1^\ast\e_1,\dots \U_1^\ast \e_k, \U_2^\ast\e_1, \dots \U_2^\ast \e_j \}$ for $0 < j < k$. The result then follows from induction.
\end{proof}

\begin{lemma}
    \label{lm:rk-Ui}
    $\rank(\Uhat_1) = \rank(\Uhat_2) = k$.
\end{lemma}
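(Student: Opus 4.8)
\textbf{Proof plan for Lemma \ref{lm:rk-Ui}.} Abbreviate $\Z_t := \As + \UtsUts - \Ahat$, so that $\B_t = \UtUthat - \Z_t$ and $\fbart(\Ut;\Ahat) = \frac{1}{2}\normFsq{\Z_t - \UtUt}$, and note $\Z_1 - \Z_2 = \UonesUones - \UtwosUtwos$. Let $N^+,N^- : S_d \to \mathbb{Z}$ count strictly positive and strictly negative eigenvalues, so $N^+ + N^- = \rank$ on symmetric matrices. The plan is: (i) extract the exact rank of each $\Uthat$ from second-order stationarity; (ii) record the block structure that first-order stationarity forces on each $\Z_t$; (iii) combine (i), (ii), and Lemma \ref{lm:u2s-u1sEvals} to rule out a rank drop.

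For (i): since $\hess_{\Ut}\L(\Ahat,\Uhat)$ is the $\Ut$-diagonal block of $\hess\L(\Ahat,\Uhat)$ (see \eqref{eq:hess}), the SOSP hypothesis gives $\hess_{\Ut}\L \succcurlyeq \0$, and with first-order stationarity this makes $\Uthat$ a second-order stationary point of the symmetric matrix factorization problem $\min_{\Ut}\fbart(\Ut;\Ahat)$. By Remark \ref{rm:leadEvec} and the strict-saddle characterization of symmetric matrix factorization \citep{zhang:2023:fst}, every such SOSP is a global minimizer, i.e.\ $\UtUthat$ is the best rank-$\le k$ positive semidefinite approximation of $\Z_t$; hence $\rank(\Uthat) = \min\{k, N^+(\Z_t)\}$ for $t = 1,2$.

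For (ii): first-order stationarity gives $\B_1\Uhat_1 = \0$, and since $\B_2 = -\B_1$ it also gives $\B_1\Uhat_2 = \0$, so the symmetric matrix $\B_1$ vanishes on $V := \im(\Uhat_1) + \im(\Uhat_2)$ and therefore $\im(\B_1) \subseteq V^\perp$. Since $\UoneUonehat$ and $\UtwoUtwohat$ are supported on $\im(\Uhat_1),\im(\Uhat_2) \subseteq V$, the matrices $\Z_1 = \UoneUonehat - \B_1$ and $\Z_2 = \UtwoUtwohat + \B_1$ are block diagonal with respect to the orthogonal splitting $\Re^d = V \oplus V^\perp$. Counting eigenvalues block by block yields $N^+(\Z_1) = \rank(\Uhat_1) + N^-(\B_1)$, $N^-(\Z_1) = N^+(\B_1)$, $N^+(\Z_2) = \rank(\Uhat_2) + N^+(\B_1)$, and $N^-(\Z_2) = N^-(\B_1)$.

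For (iii): suppose $\rank(\Uhat_1) < k$. Then (i) forces $N^+(\Z_1) = \rank(\Uhat_1)$, and comparing with (ii) forces $N^-(\B_1) = 0$, i.e.\ $\B_1 \succcurlyeq \0$; in particular $N^-(\Z_2) = 0$, so $\Z_2 \succcurlyeq \0$. Then $\Z_1 = \Z_2 + (\UonesUones - \UtwosUtwos) \succcurlyeq \UonesUones - \UtwosUtwos$, hence $\lambda_i(\Z_1) \ge \lambda_i(\UonesUones - \UtwosUtwos)$ for all $i$; but $\UonesUones - \UtwosUtwos = -(\UtwosUtwos - \UonesUones)$ has exactly $k$ strictly positive eigenvalues by Lemma \ref{lm:u2s-u1sEvals}, so $N^+(\Z_1) \ge k$, contradicting $N^+(\Z_1) = \rank(\Uhat_1) < k$. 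Thus $\rank(\Uhat_1) = k$, and interchanging the two tasks (which replaces $\B_1$ by $\B_2 = -\B_1$) gives $\rank(\Uhat_2) = k$: a rank drop there would force $\B_1 \preccurlyeq \0$, hence $\Z_1 \succcurlyeq \0$, hence $\Z_2 \succcurlyeq \UtwosUtwos - \UonesUones$, which again has $k$ positive eigenvalues. The main obstacle is part (i): one needs the \emph{exact} rank $\min\{k, N^+(\Z_t)\}$ rather than the weaker ``leading eigenvectors'' statement of Lemma \ref{lm:leadEvec} --- a stationary $\Uthat$ that leaves a column zero while $\Z_t$ still has an unclaimed strictly positive eigenvalue is first-order stationary yet admits a descent direction, so it is excluded precisely by $\hess_{\Ut}\L \succcurlyeq \0$, and this is the only place the second-order information enters.
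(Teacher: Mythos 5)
Your proof is correct and takes essentially the same approach as the paper: both arguments extract $\B_1\succcurlyeq\0$ from the rank drop via the best-rank-$k$ PSD-approximation characterization of SOSPs for symmetric factorization (Remark~\ref{rm:leadEvec} and \citep{zhang:2023:fst}), then contradict Lemma~\ref{lm:u2s-u1sEvals} by an eigenvalue comparison; the paper goes directly via $\UoneUonehat\succcurlyeq\UonesUones+\UtwoUtwohat-\UtwosUtwos\succcurlyeq\UonesUones-\UtwosUtwos$, whereas you route through $\Z_2\succcurlyeq\0$ and $\Z_1 = \Z_2 + (\UonesUones-\UtwosUtwos)$. Your block decomposition of $\Z_1,\Z_2$ along $V\oplus V^\perp$ is a tidy way of organizing the bookkeeping, and your closing observation that the exact-rank consequence $\rank(\Uthat)=\min\{k,N^+(\Z_t)\}$ is what is really needed (rather than the literal phrasing of Lemma~\ref{lm:leadEvec}) is exactly the point the paper's proof uses implicitly.
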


\begin{proof}
    Assume for contradiction that $\rank(\Uhat_1) = m < k$ without loss of generality. Since by Remark \eqref{rm:leadEvec} we assume the columns of $\Uhat_1$ are the leading $k$ non-negative eigenvectors of $\As + \UonesUones - \Ahat = \UoneUonehat - \B_1$, this must imply that $\As + \UonesUones - \Ahat - \Uhat_1 \Uhat_1^\top  = -\B_1 \preccurlyeq \0$.
    
    Plugging in the definition of $\B_1$ gives that $\frac{1}{2} \paren{\UoneUonehat - \UonesUones - \UtwoUtwohat + \UtwosUtwos} \succcurlyeq \0$. Thus, $\UoneUonehat \succcurlyeq \UonesUones + \UtwoUtwohat - \UtwosUtwos \succcurlyeq \UonesUones - \UtwosUtwos$. This contradicts the fact from Lemma \eqref{lm:u2s-u1sEvals} that $\UonesUones - \UtwosUtwos$ has $k$ positive eigenvalues.
\end{proof}

With this lemma, we will prove the existence of a direction of $\hess \L^\ast$ with negative curvature. Instead of directly working with this matrix, we instead use the Schur complement to work with a different form.

\begin{theorem}{(Schur Complement)}
    Since $\hess_{\A} \L^\ast(\Ahat,\Uhat) = 2\I \succ \0$, $\hess \L^\ast(\Ahat,\Uhat) \succcurlyeq \0$ if and only if $\hess_{\U} \L^\ast(\Ahat,\Uhat) - \paren{\grad_{\A} \grad_{\U} \L^\ast(\Ahat,\Uhat)} \paren{\hess_{\A} \L^\ast(\Ahat,\Uhat)}^{-1} \paren{\grad_{\U} \grad_{\A} \L^\ast(\Ahat,\Uhat)} \succcurlyeq \0$.
\end{theorem}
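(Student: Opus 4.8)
This Schur-complement identity is exactly the tool that lets the proof of Theorem~\ref{th:strictSaddle} replace the condition $\hess\L(\Ahat,\Uhat)\succcurlyeq\0$ by positive semidefiniteness of the smaller matrix $\Q$. The plan is to prove it as the classical Schur-complement criterion for a symmetric block matrix, specialized to the case where the leading block equals $2\I_{d^2}$. Partitioning coordinates as $[\vecmat(\A);\vecmat(\U_1);\vecmat(\U_2)]$ and grouping the last two into a single block $\U$, the Hessian \eqref{eq:hess} takes the form $\hess \L(\Ahat,\Uhat) = \bmat{\blmath{P} & \R \\ \R^\top & \S}$, where $\blmath{P} = \hess_{\A} \L(\Ahat,\Uhat) = 2\I_{d^2}$, $\R = \grad_{\U} \grad_{\A} \L(\Ahat,\Uhat)$ (so $\R^\top = \grad_{\A} \grad_{\U} \L(\Ahat,\Uhat)$), and $\S = \hess_{\U} \L(\Ahat,\Uhat)$; here $\blmath{P}$ is symmetric and invertible.

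First I would record the block $\mathrm{LDL}^\top$ factorization, which holds whenever $\blmath{P}$ is invertible and is verified by direct multiplication:
\[
\bmat{\blmath{P} & \R \\ \R^\top & \S}
= \bmat{\I & \0 \\ \R^\top \blmath{P}^{-1} & \I}
\bmat{\blmath{P} & \0 \\ \0 & \S - \R^\top \blmath{P}^{-1} \R}
\bmat{\I & \blmath{P}^{-1}\R \\ \0 & \I}.
\]
The two outer factors are unit block-triangular, hence invertible, and (using $\blmath{P}=\blmath{P}^\top$) each is the transpose of the other, so $\hess\L(\Ahat,\Uhat)$ is congruent to $\diag\paren{\blmath{P},\, \S - \R^\top \blmath{P}^{-1} \R}$. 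By Sylvester's law of inertia, congruent symmetric matrices have the same number of negative eigenvalues, so $\hess\L(\Ahat,\Uhat)\succcurlyeq\0$ iff both $\blmath{P}\succcurlyeq\0$ and $\S - \R^\top \blmath{P}^{-1}\R\succcurlyeq\0$. Since $\blmath{P}=2\I_{d^2}\succ\0$, the first condition is automatic and the criterion collapses to $\S - \R^\top \blmath{P}^{-1}\R\succcurlyeq\0$, i.e. $\hess_{\U} \L(\Ahat,\Uhat) - \paren{\grad_{\A} \grad_{\U} \L(\Ahat,\Uhat)}\paren{\hess_{\A} \L(\Ahat,\Uhat)}^{-1}\paren{\grad_{\U} \grad_{\A} \L(\Ahat,\Uhat)}\succcurlyeq\0$, which is the statement.

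There is no substantive obstacle here; this is a standard fact and the only care needed is bookkeeping — matching the partition of \eqref{eq:hess} to the statement (in particular that the lower-left block $\grad_{\A} \grad_{\U} \L$ is the transpose of the upper-right block $\grad_{\U} \grad_{\A} \L$), and noting that the internal block-diagonal structure of $\hess_{\U} \L$ in $(\U_1,\U_2)$ is irrelevant at this step (it matters only afterward, when the Schur complement $\Q$ is computed explicitly). If a self-contained argument is preferred to invoking Sylvester's law, one can instead minimize quadratic forms directly: for any $\v$ in the $\U$-coordinates, $\min_{\u}\paren{\u^\top\blmath{P}\u + 2\u^\top\R\v + \v^\top\S\v} = \v^\top\paren{\S - \R^\top\blmath{P}^{-1}\R}\v$, attained at $\u = -\blmath{P}^{-1}\R\v$ because $\blmath{P}\succ\0$; hence $\hess\L(\Ahat,\Uhat)\succcurlyeq\0$ iff this partial minimum is nonnegative for every $\v$, i.e. iff $\S - \R^\top\blmath{P}^{-1}\R\succcurlyeq\0$.
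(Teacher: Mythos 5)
Your proof is correct and is the standard derivation of the Schur-complement positive-semidefiniteness criterion; the paper invokes this fact without proof, treating it as classical, so you have simply supplied the argument the paper implicitly relies on. Both your congruence/inertia route and your partial-minimization alternative are sound, and the specialization to $\blmath{P}=2\I_{d^2}\succ\0$ is handled correctly.
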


Define $\M = \hess_{\U} \L^\ast(\Ahat,\Uhat) - \paren{\grad_{\A} \grad_{\U} \L^\ast(\Ahat,\Uhat)} \paren{\hess_{\A} \L^\ast(\Ahat,\Uhat)}^{-1} \paren{\grad_{\U} \grad_{\A} \L^\ast(\Ahat,\Uhat)}$. 

For example, when $k = 2$ and letting $\U_1 = [\x_1 \: \x_2]$, $\U_2 = [\y_1 \: \y_2]$, we have 
\begin{equation*}
    \M = \begin{bmatrix}
        \M_{11} & \M_{12} \\
        \M_{12}^\top & \M_{22}
    \end{bmatrix},
\end{equation*}
where 
\begin{align*}
    \M_{11} &= \bmat{2 \B_1 + \x_1 \x_1^\top + \norm{\x_1}{2}^2 & \x_1^\top \x_2 \I + \x_2 \x_1^\top \\
    \x_2^\top \x_1\I + \x_1\x_2^\top & 2\B_1  + \x_2 \x_2^\top + \norm{\x_2}{2}^2}\\
    &\:\\
    \M_{12} &= \bmat{-\x_1^\top\y_1\I - \y_1\x_1^\top & -\x_1^\top\y_2\I - \y_2\x_1^\top \\
    -\x_2^\top\y_1\I - \y_1\x_2^\top & \x_2^\top\y_2\I - \y_2\x_2^\top}\\
    &\:\\
    \M_{22} &= \bmat{2 \B_2 + \y_1 \y_1^\top + \norm{\y_1}{2}^2 & \y_1^\top \y_2 \I + \y_2 \y_1^\top \\
    \y_2^\top \y_1\I + \y_1\y_2^\top & 2\B_2  + \y_2 \y_2^\top + \norm{\y_2}{2}^2}
\end{align*}
% \begin{align}
%     & \W = \nonumber \\
%     & \bmat{
%     2 \B_1 + \x_1 \x_1^\top + \norm{\x_1}{2}^2 & \x_1^\top \x_2 \I + \x_2 \x_1^\top & -\x_1^\top\y_1\I - \y_1\x_1^\top & -\x_1^\top\y_2\I - \y_2\x_1^\top \\
%     \x_2^\top \x_1\I + \x_1\x_2^\top & 2\B_1  + \x_2 \x_2^\top + \norm{\x_2}{2}^2& -\x_2^\top\y_1\I - \y_1\x_2^\top & \x_2^\top\y_2\I - \y_2\x_2^\top \\
%     -\y_1^\top\x_1 \I - \x_1\y_1^\top & -\y_1^\top\x_2 \I - \x_2\y_1^\top & 2\B_2 + \y_1 \y_1^\top + \norm{\y_1}{2}^2& -\y_1^\top\y_2 \I - \y_2\y_1^\top \\
%     -\y_2^\top \x_1 - \x_1\y_2^\top & -\y_2^\top \x_2 - \x_2\y_2^\top & -\y_2^\top \y_1 - \y_1\y_2^\top & 2\B_2 + \y_2 \y_2 ^\top + \norm{\y_2}{2}^2 \\
%     }.
% \end{align}

For brevity, we do not include the full form of $\M$ for general $k$. However, we can make an easy simplification that will allow for a much cleaner expression.

Using Corollaries \eqref{corr:ebasis} and \eqref{corr:dimDrop}, there is an eigenvector \z of $\UtwosUtwos - \UonesUones$ with eigenvalue $\lambda \neq 0$ such that $\z \in \ker\paren{\UtwoUtwohat - \UoneUonehat}$. Assume without loss of generality that $\lambda > 0$, and consider $\balpha \in \Re^{2k}$. 
Define the function $g(\cdot \: ;\z):\Re^{2k} \rightarrow \Re$ parameterized by \z such that $g(\balpha;\z) = \paren{\balpha \kron \z}^\top \M \paren{\balpha \kron \z}$, where we partition  $\balpha = [\balpha_1; \balpha_2]$, $\balpha_1,\balpha_2\in\Re^k$. Then after some algebra,
\begin{align}
    \label{eq:hess_dir}
    g\paren{\balpha;\z} = \norm{\Uhat_1 \balpha_1 + \Uhat_2 \balpha_2}{2}^2 + \lambda \paren{\norm{\balpha_1}{2}^2 - \norm{\balpha_2}{2}^2}.
\end{align}

We prove the existence of $\balpha \in \Re^{2k},\x \in \Re^d$ such that $g\paren{\balpha;\x} < 0$ considering two different cases. Define $N^-:S_d \rightarrow \mathbb{Z}$ as the function that returns the number of strictly negative eigenvalues of its input.

\textbf{Case 1}:  $N^- \paren{\Uhat_2 \Uhat_2^\top - \Uhat_1 \Uhat_1^\top} < k$.

Using Corollary \eqref{corr:dimDrop}, we can pick $\balpha$ such that $\Uhat_1 \balpha_1 + \Uhat_2 \balpha_2 = 0$, $\balpha_1,\balpha_2 \neq 0$.

Because $N^- \paren{\Uhat_2 \Uhat_2^\top - \Uhat_1 \Uhat_1^\top} < k$, $N^-\paren{\U_2^\ast \U_2^{\ast \top} - \U_1^\ast \U_1^{\ast \top}} = k$, and $\Uhat_2 \Uhat_2^\top - \Uhat_1 \Uhat_1^\top$ and $\U_2^\ast \U_2^{\ast \top} - \U_1^\ast \U_1^{\ast \top}$ share an eigenbasis by Corollary \ref{corr:ebasis}, there exists $\z^- \in \Re^d$ that is a $\lambda^-$-eigenvector of $\U_2^\ast \U_2^{\ast T} - \U_1^\ast \U_1^{\ast T}$, $\lambda^- < 0$, where $\z \in \ker\paren{\UtwoUtwohat - \UoneUonehat}$

Then for the same choice of $\balpha$, 
\begin{align*}
    \sign \paren{g\paren{\balpha ; \z}} &= \sign\paren{{\norm{\balpha_1}{2}^2 - \norm{\balpha_2}{2}^2}}\\
    \sign \paren{g\paren{\balpha ; \z^-}} &= \sign\paren{{\norm{\balpha_2}{2}^2 - \norm{\balpha_1}{2}^2}}.\\
\end{align*}

Then if $\norm{\balpha_1}{2} \neq \norm{\balpha_2}{2}$, one of the above expressions is negative and thus \M has a negative eigenvalue. This then implies  $\hess \L^\ast(\Ahat,\Uhat) \not \succcurlyeq 0$. 

Otherwise $\norm{\balpha_1}{2} = \norm{\balpha_2}{2}$. Then $g\paren{\balpha ; \z} = 0$, but $\grad_{\balpha_1} g(\balpha;\z) = \Uhat_1^\top \paren{\Uhat_1 \bar{\balpha}_1 + \Uhat_2 \balpha_2} - 2\lambda\balpha_2 = -2\lambda \balpha_2 \neq 0$. Thus $g(\balpha;\z) = 0$ and $\grad g(\balpha;\z) \neq 0$ so there exists $\bar{\balpha}$ in an infinitesimal neighborhood around $\balpha$ where $g(\bar{\balpha} \: ;\z) < 0$. Thus \M has a negative eigenvalue so $\hess \L^\ast(\Ahat,\Uhat) \not \succcurlyeq 0$.

\textbf{Case 2}:  $N^- \paren{\Uhat_2 \Uhat_2^\top - \Uhat_1 \Uhat_1^\top} = k$. 

Define $m = \dim\paren{\im(\Uhat_1) \cap \im(\Uhat_2)}$. By Corollary \ref{corr:dimDrop}, $m \geq 1$, so we can select orthogonal matrix $\bGamma \in O_k$ such that $\Uhat_2 \bGamma \e_1 \in \paren{\im(\Uhat_1) \cap \im(\Uhat_2)}$. Define $\y = \Uhat_2 \bGamma \e_1$. 

Clearly for any $\B \in S_d$ and $\R \in S_d^+$, $N^-(\B) \geq N^-(\B + \R)$. Then since $N^-\paren{-\Uhat_1 \Uhat_1^\top} = k$ by Lemma \eqref{lm:rk-Ui}, we have that
\begin{align*}
    k &= N^-(-\Uhat_1 \Uhat_1^\top) \geq N^-(\y \y^\top -\Uhat_1 \Uhat_1^\top) = N^-\paren{\paren{\Uhat_2\bGamma\e_1}\paren{\Uhat_2\bGamma\e_1}^\top -\Uhat_1 \Uhat_1^\top}\\
    &\geq N^-\paren{\paren{\Uhat_2 \bGamma} \paren{\Uhat_2 \bGamma}^\top - \Uhat_1 \Uhat_1^\top} = N^-\paren{\Uhat_2 \Uhat_2^\top - \Uhat_1 \Uhat_1^\top} = k,
\end{align*}
Thus, $N^-(\y \y^\top -\Uhat_1 \Uhat_1^\top) = k$. But, since $\y \in \im(\Uhat_1)$, $\rank\paren{\y \y^\top -\Uhat_1 \Uhat_1^\top} = k$. Thus,
\begin{equation}
    \y \y^\top -\Uhat_1 \Uhat_1^\top \preccurlyeq 0.
\end{equation}

Take $\balpha$ such that $\Uhat_1\balpha_1 = -\y$ and $\balpha_2 = \bGamma \e_1$. Then 
\begin{align}
   \y_1 \y_1^\top -\Uhat_1 \Uhat_1^\top &= \paren{\Uhat_1 \balpha}\paren{\Uhat_1 \balpha}^\top -\Uhat_1 \Uhat_1^\top\\
   &= \Uhat_1 \paren{\balpha_1 \balpha_1^\top - \I} \Uhat_1^\top \preccurlyeq 0.
\end{align}

Therefore $\norm{\balpha_1}{2} \leq 1$.

Then $g\paren{\balpha ; \z} = \norm{\Uhat_1 \balpha_1 + \Uhat_2 \balpha_2}{2}^2 + \lambda \paren{\norm{\balpha_1}{2}^2 - \norm{\balpha_2}{2}^2} = \lambda \paren{\norm{\balpha_1}{2}^2 - 1} \leq 0$.

If $g\paren{\balpha ; \z} < 0$, then we are done. Otherwise, $g\paren{\balpha ; \z} = 0$. Then the same analysis from Case 1 will show that $\grad g(\balpha;\z) \neq \0$, so there exists $\bar{\balpha}$ in an infinitesimal neighborhood around \balpha where $g(\bar{\balpha}\:;\z)$ is strictly negative. This then implies our desired result.
\end{proof}

\subsection{Derivation of Equation \texorpdfstring{\eqref{eq:inf-samp-thy-taski-loss}}{}}
\label{app:pop-loss-deriv}
Recall our generative model where each input sample $\x \in \Re^d$ satisfies $\EV{\x}{} = \0$ and $\EV{\x\x^\top}{} = \sigma_x^2 \I_d$, each noise sample is generated independently of $\x$ as $\beps \sim \mathcal{N}(\0,\sigma_{\epsilon}^2 \I_d)$, and $\y = (\As + \R^\ast_t) \x + \beps$. Then,
\begin{align*}
        2\mathbb{E} [\L_t^1(\A)] &= \mathbb{E} \left[ \norm{\y - \A \x}{2}^2\right]\\
        % &= \mathbb{E} \left[ \norm{(\As + \R_t^\ast) \x + \beps - \A \x}{2}^2\right]\\
        &= \mathbb{E} \left[ \norm{(\A^\ast + \R_t^\ast - \A) \x + \beps}{2}^2\right]\\
        &= \mathbb{E} \left[ \norm{(\A^\ast + \R_t^\ast - \A_t) \x}{2}^2 + \norm{\beps}{2}^2 + 2 \beps^\top (\A^\ast + \R_t^\ast - \A_t) \x\right]\\
        % &= \mathbb{E} \left[ \x^\top (\A^\ast + \R_t^\ast - \A_t)^\top (\A^\ast + \R_t^\ast - \A_t) \x \right] + \mathbb{E} \left[ \norm{\beps}{2}^2 \right] + 2\mathbb{E} \left[ \beps^\top (\A^\ast + \R_t^\ast - \A_t) \x \right]\\
        &= \mathbb{E} \left[ \tr \paren{\x^\top (\A^\ast + \R_t^\ast - \A_t)^\top (\A^\ast + \R_t^\ast - \A_t) \x }\right] + \sigma_{\epsilon}^2 \\
        &\qquad + 2\mathbb{E} \left[ \beps \right]^\top(\A^\ast + \R_t^\ast - \A_t) \mathbb{E} \left[ \x \right] \\
        &= \mathbb{E} \left[ \tr \curly{(\A^\ast + \R_t^\ast - \A_t)^\top (\A^\ast + \R_t^\ast - \A_t) \x \x^\top}\right] + \sigma_{\epsilon}^2\\
        &= \tr \curly{(\A^\ast + \R_t^\ast - \A_t)^\top (\A^\ast + \R_t^\ast - \A_t) \mathbb{E} \left[\x \x^\top\right]} + \sigma_{\epsilon}^2\\
        &= \sigma_{x}^2 \tr \paren{(\A^\ast + \R_t^\ast - \A_t)^\top (\A^\ast + \R_t^\ast - \A_t)} +  \sigma_{\epsilon}^2\\
        &= \sigma_{x}^2 \norm{\A^\ast + \R_t^\ast - \A_t}{F}^2 +  \sigma_{\epsilon}^2
    \end{align*}

Thus, $\mathbb{E} [\L_t^1(\A_t)] = \frac{1}{2} \paren{ \sigma_{x}^2 \norm{\A^\ast + \R_t^\ast - \A_t}{F}^2 +  \sigma_{\epsilon}^2}$. Then $\mathbb{E}[\L_t^{n_t}(\A_t)] = \mathbb{E}[\L_t^1(\A_t)]$ by linearity of expectation, so 
\begin{equation*}
     \frac{1}{\sigma_{x}^2}\paren{\mathbb{E} \left[\L_t^{n_t}(\A_t)\right] - \frac{\sigma^2_{\epsilon}}{2}} = \frac{1}{2}\norm{\As + \UtsUts - \A_t}{F}^2
\end{equation*}

\section{Limitations of Standard Retraining for Asymmetric Adapters}
\label{app:SR-asymmetric}

In this section we show that even if the ground truth rank-$k$ adapters $\R_t^\ast$ are asymmetric in our theoretical model defined in Section~\ref{sec:theory}, standard retraining still fails to recover a retraining solution $\AhatSR$ that guarantees that either $\rank \paren{\AhatSR - \As}$ or $\norm{\AhatSR - \As}{}$ is small.

Consider task-specific adapters $\R_t^\ast \in \Re^{d \times d}$ drawn independently from a general distribution $\mathcal{D}_R$ such that $\rank(\R_t^\ast) = k$ and $\mathcal{D}_R$ is absolutely continuous with respect to the Lebesgue measure. Then since $\AhatSR = \As + \frac{1}{T} \sumtT \R_t^\ast$, we immediately have that $\AhatSR$ and $\As$ are far apart in terms of rank:
\begin{equation*}
    \rank \paren{\AhatSR - \As} = \min\{d, kT\} \quad \text{w.p. 1.}
\end{equation*}

As a result, the necessary adaptation rank during fine-tuning to realize the test task parameters $\As + \R_{T+1}^\ast$ is $\min\{d, k(T+1)\}$, just as in Proposition~\ref{prop:sr-test-rk}.

Additionally, we show that we can never guarantee that $\norm{\AhatSR - \As}{}$ is small for general adapter distributions $\mathcal{D}_R$. Let mean of the adapter distribution be $M_R = \EV{\R_t^\ast}{}$. Since the error between $\AhatSR$ and $\As$ is the average of retraining adapters $\frac{1}{T} \sumtT \R_t^\ast$, then by the law of large numbers we must have that as the number of retraining tasks $T \rightarrow \infty$, then $\AhatSR - \As \rightarrow \M_R$. Thus, we can only guarantee that standard retraining will approach a reasonable solution in the special and impractical case where both (i) the number of tasks approaches infinity, and (ii) the mean of the adapter distribution $\M_R = \0$. If either of these conditions do not hold, then standard retraining will provably fail to recover $\As$.

We lastly note that even though we model each task $t$ by the additive parameterization $\As + \R_t^\ast$, we cannot absorb the mean $\M_R$ into $\As$ to reduce to the case where $\M_R = \0$ without destroying the low-rank structure of $\R_t^\ast$. Specifically, given a problem instance defined by tasks $\Tt$ parameterized by $\As + \R_t^\ast$ where $M_R = \EV{\R_t^\ast}{}$, we could equivalently consider the tasks as parameterized by $\paren{\As + \M_R} + \paren{\R_t^\ast - \M_R}$. Then the transformed adapters $\R_t^\ast - \M_R$ are zero-mean, but $\rank(\R_t^\ast - \M_R)$ may be as large as the ambient dimension $d$, eliminating the low-rank structure needed for the computational and sample-efficiency gains of low-rank adaptation during test-time fine-tuning.

\section{Synthetic Experiments}
\label{app:synth-data-exp}

%In this section we present the full details for our synthetic data experiments.

\subsection{Linear Model}

We first test our meta-learning framework on synthetic regression tasks. We consider data $\y_{t,j} = (\As + \R_t^\ast) \x_{t,j} + \beps_{t,j}$ for task $t$ and $j \in [n_t]$, where we generate $\x_{t,j}$, $\beps_{t,j}$, and $\R_t^\ast$ as in Section \ref{sec:theory} with $\x_{t,j} \sim \mathcal{N}(\0,\I)$, $\beps_{t,j} \sim \mathcal{N}(\0, .01 \times \I)$, and $\R_t^\ast = \UtsUts$ with each element of $\Uts$ sampled as a $\mathcal{N}(0,1)$ random variable. We similarly generate the entries of $\As$ as $\mathcal{N}(0,1)$ random variables. We set the number of samples per retraining task for each task $t$ as $n_t = N$, so each retraining task is equipped with $N$ samples. Further, we denote the number of test task samples $n_{T+1} = n$. We set $\L$ to be the mean squared error loss, and we run gradient descent on the standard retraining \eqref{eq:fft-ptf} and the LoRA-ML \eqref{eq:meta-adapters-finsamp-gen} objectives. For the LoRA-ML objective, we alternate between the outer step, updating the shared parameter $\A$, and the inner steps, where we update the task specific parameter $\Ut$ for each task independently. We run 3000 outer steps, where after each outer step we run 10 inner steps. We use a learning rate of $3 \times 10^{-2}$ for the outer steps and $3 \times 10^{-3}$ for the inner steps. For the standard retraining objective, we simply run gradient descent using the learning rate $3 \times 10^{-2}$ over 3000 epochs.

After recovering $\Ahat$ during retraining, we apply the low-rank adaptation $\Q\V^\top$ when fine-tuning to the test task. When $T = 2$, we use a rank-$3k$ adaptation during fine-tuning to account for the inexact recovery explained in Theorem \ref{th:2TaskMin}, and otherwise use a rank-$k$ adaptation. To perform the adaptation, we run gradient descent on the LoRA objective \eqref{eq:test-loss-fin-samp-lora} using a learning rate of $5 \times 10^{-3}$. 

In each experiment we vary one hyperparameter from a fixed set of values and plot the prediction error between the recovered model and the ground truth model $\frac{1}{n} \sum_{j} \| \y_{T+1,j} - (\Ahat + \Q\V^\top) \x_{T+1,j} \|_2^2$. Results are averaged over 5 trials, with the shaded region showing the full range of values across trials. For each experiment, we fine-tuned for a different number of epochs, as some required more epochs to converge. The performance by epoch is displayed on the x-axis of each plot.

\begin{figure}[h]
    \centering
     \includegraphics[width=11cm,height=8cm]{images/linear/linear_graphs/nl_T_vary.png}
     \caption{Linear model fine-tuning performance varying the number of retraining tasks $T$. This is an enlargement of the left subfigure of Figure \ref{fig:lin-vary-T-n}.}
     \label{fig:lin-vary-T-app}
\end{figure}

\begin{figure}[h]
    \centering
     \includegraphics[width=11cm,height=8cm]{images/linear/linear_graphs/nl_N_prime_vary.png}
     \caption{Linear model fine-tuning performance varying the number of samples for the test task $n$. This is an enlargement of the right subfigure of Figure \ref{fig:lin-vary-T-n}.}
     \label{fig:lin-vary-n-app}
\end{figure}

\begin{figure}[h]
    \centering
     \includegraphics[width=11cm,height=8cm]{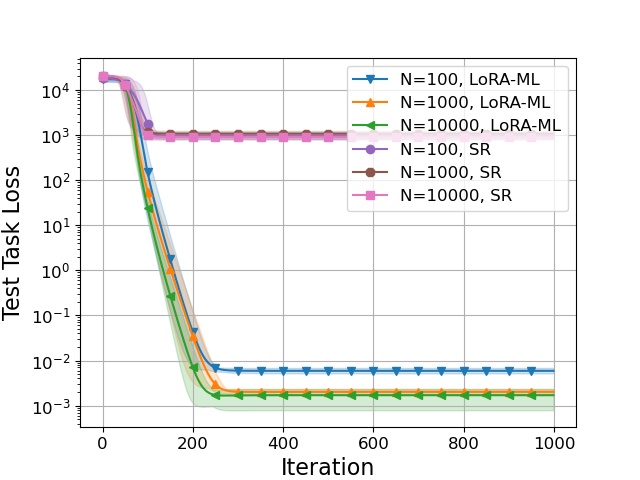}
     \caption{Linear model fine-tuning performance varying the number of samples per retraining task $N$.}
     \label{fig:lin-vary-N-app}
\end{figure}

\begin{figure}[h]
    \centering
     \includegraphics[width=11cm,height=8cm]{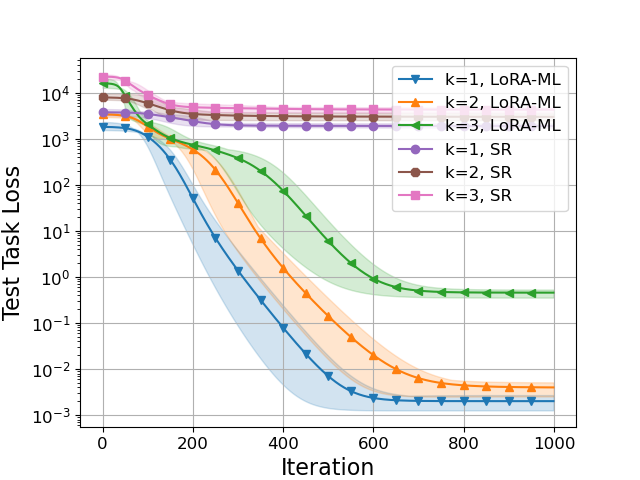}
     \caption{Linear model fine-tuning performance varying the ground truth adaptation rank $k$.}
     \label{fig:lin-vary-k-app}
\end{figure}

\begin{figure}[h]
    \centering
     \includegraphics[width=11cm,height=8cm]{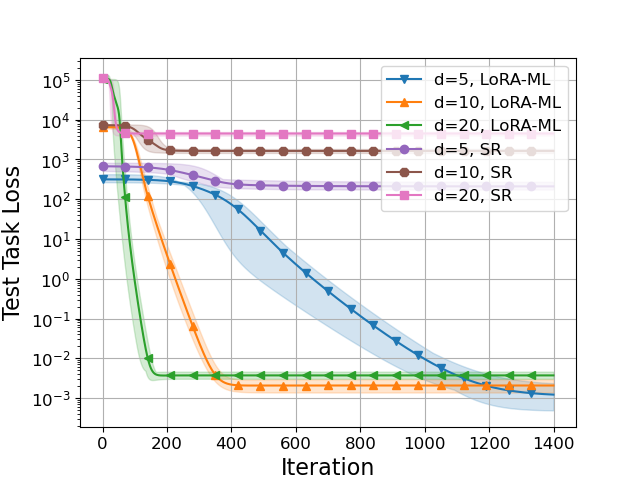}
     \caption{Linear model fine-tuning performance varying the ambient dimension $d$.}
     \label{fig:lin-vary-d-app}
\end{figure}

\clearpage

\subsection{Shallow Network}

We additionally run a synthetic data experiment using data generated using a shallow network architecture. We first generate inputs $\x_{t,j} \in \Re^d$ distributed as $\mathcal{N}(\0,\I_d)$. We then generate ground truth parameters $\As \in \Re^{d \times d}$, $\cs \in \Re^{d}$, and $\Uts \in \Re^{d \times k}$ for each task $t \in [T]$ by sampling the entries as $\mathcal{N}(0,1)$ random variables. We denote $\R_t^\ast = \UtsUts$. Lastly, the outputs $y_{t,j} \in \Re$ for task $t$ are generated as $y_{t,j} = \c^{\ast \top} s((\As + \R_t^\ast)\x_{t,j}) + \epsilon_{t,j}$, where $s(\cdot)$ is the element-wise sigmoid function and $\epsilon_{t,j} \sim \mathcal{N}(0,.01)$ is noise. We again define parameters $N,n$ such that $n_t = N$ for all $t \leq T$ and $n_{T+1} = n$. Setting $\L$ to be the mean squared error loss, we apply the AdamW optimizer with a learning rate of $1 \times 10^{-3}$ to both the standard retraining \eqref{eq:fft-ptf} and the LoRA-ML \eqref{eq:meta-adapters-finsamp-gen} objectives. After recovering $\Ahat$ and $\hat{\c}$ during retraining, we apply the low-rank adaptation $\Q\V^\top$ only to $\Ahat$ when fine-tuning to the test task. In each experiment we vary a single hyperparameter from a fixed set of values and plot the prediction error between the recovered model and the ground truth model $\frac{1}{n} \sum_{j=1}^n \norm{\hat{\c}^\top s((\hat{\A} + \Q \V^\top) \x_{T+1,j}) - \c^{\ast \top} s((\As + \R_{T+1}^\ast)\x_{T+1,j}) }{2}^2$. To mitigate the effects of outliers, we plot the median over 10 trials, where the tables that accompany each figure report the range of the central 6 values of the last epoch prediction error, clipping the two best and worst trials.

Figures \ref{fig:shallow-vary-T-app}, \ref{fig:shallow-vary-n-app}, \ref{fig:shallow-vary-N-app}, \ref{fig:shallow-vary-k-app}, and \ref{fig:shallow-vary-d-app} again show that retraining using the LoRA-ML objective leads to much better fine-tuning performance relative to standard retraining. Figure \ref{fig:shallow-vary-T-app} shows that the LoRA-ML objective effectively exploits the number of tasks, since fine-tuning performance improves as $T$ increases. Further, we observe in Figure \ref{fig:shallow-vary-n-app} that fine-tuning with any number of samples after standard retraining cannot even recover the performance of fine-tuning with just 100 samples after LoRA-ML retraining. We note that the range of the results varied much more in the shallow network setting relative to the linear model experiments. Although LoRA-ML performed much better than standard retraining when looking at the median result, a few individual trials showed outlier results.

The data parameters for both synthetic experiments are summarized in Table~\ref{tab:synth-data-params-app}. Both were performed using a single NVIDIA A40 GPU.

\begin{figure}[h]
    \centering
     \includegraphics[width=11cm,height=8cm]{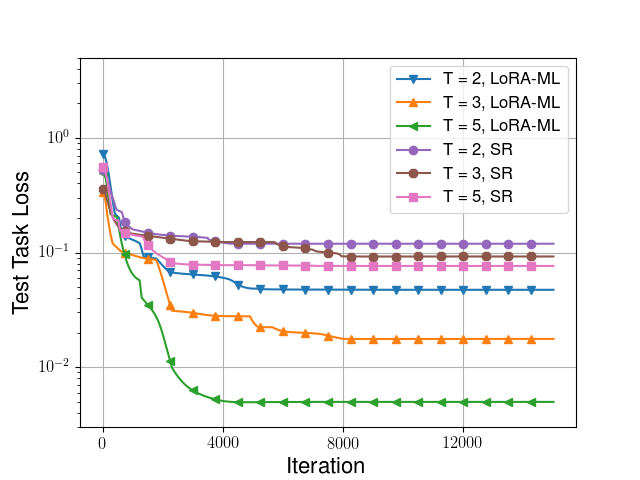}
     \caption{Shallow network fine-tuning performance while varying number of retraining tasks $T$.}
     \label{fig:shallow-vary-T-app}
\end{figure}

\begin{table}[h!]
\centering
\small
\caption{Median and central range of the test task loss after fine-tuning (smaller is better) at the last epoch for the shallow network experiment across 10 trials while varying $T$.}
\begin{tabular}{lccc}
\toprule
\textbf{Method} & \textbf{$T=2$} & \textbf{$T=3$} & \textbf{$T=5$} \\
\midrule
LoRA-ML         & \textbf{0.047} (0.042, 0.127) & \textbf{0.018} (0.017, 0.028) & \textbf{0.005} (0.004, 0.024) \\
SR              & 0.119 (0.104, 0.161) & 0.092 (0.086, 0.225) & 0.076 (0.046, 0.107) \\
\bottomrule
\end{tabular}
\end{table}

\begin{figure}[h]
    \centering
     \includegraphics[width=11cm,height=8cm]{images/shallow/vary_n.png}
     \caption{Shallow network fine-tuning performance while varying number of test task samples $n$.}
     \label{fig:shallow-vary-n-app}
\end{figure}

\begin{table}[h!]
\centering
\small
\caption{Median and central range of the test task loss after fine-tuning (smaller is better) at the last epoch for the shallow network experiment across 10 trials while varying $n$.}
\begin{tabular}{lccc}
\toprule
\textbf{Method} & \textbf{$n=100$} & \textbf{$n=1000$} & \textbf{$n=10000$} \\
\midrule
LoRA-ML         & \textbf{0.006} (0.005, 0.034) & \textbf{0.003} (0.002, 0.061) & \textbf{0.004} (0.004, 0.017) \\
SR              & 0.132 (0.114, 0.171) & 0.147 (0.109, 0.221) & 0.115 (0.112, 0.165) \\
\bottomrule
\end{tabular}
\end{table}
\clearpage

\begin{figure}[h]
    \centering
     \includegraphics[width=11cm,height=8cm]{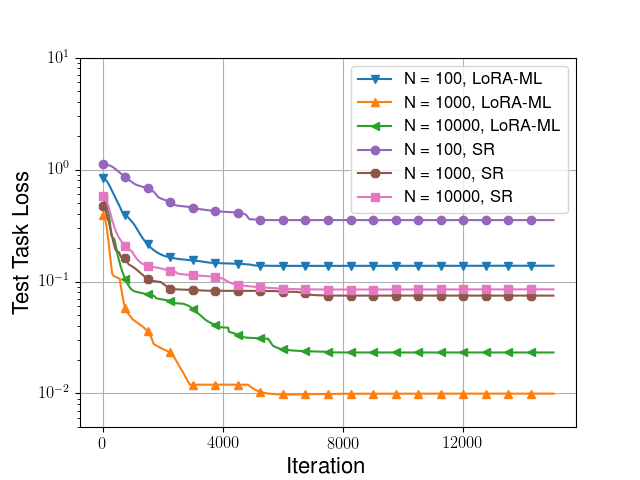}
     \caption{Shallow network fine-tuning performance varying number of retraining task samples $N$.}
     \label{fig:shallow-vary-N-app}
\end{figure}

\begin{table}[h!]
\centering
\small
\caption{Median and central range of the test task loss after fine-tuning (smaller is better) at the last epoch for the shallow network experiment across 10 trials while varying $N$.}
\begin{tabular}{lccc}
\toprule
\textbf{Method} & \textbf{$N=100$} & \textbf{$N=1000$} & \textbf{$N=10000$} \\
\midrule
LoRA-ML         & \textbf{0.138} (0.100, 0.262) & \textbf{0.010} (0.006, 0.055) & \textbf{0.023} (0.009, 0.048) \\
SR              & 0.354 (0.319, 0.646) & 0.075 (0.073, 0.092) & 0.085 (0.074, 0.108) \\
\bottomrule
\end{tabular}
\end{table}

\begin{figure}[h!]
    \centering
     \includegraphics[width=11cm,height=8cm]{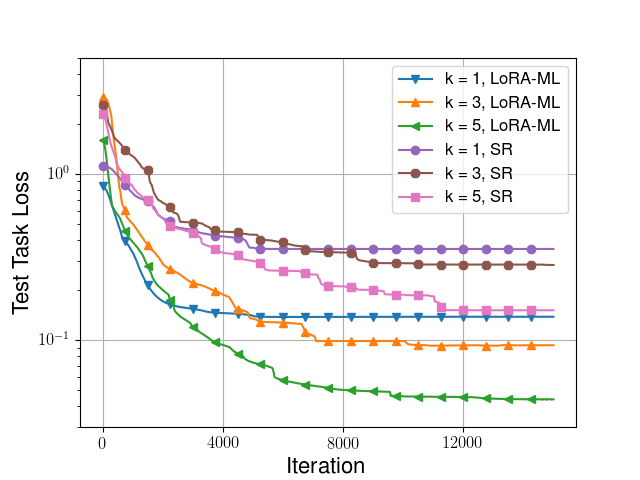}
     \caption{Shallow network fine-tuning performance while varying ground truth adaptation rank $k$.}
     \label{fig:shallow-vary-k-app}
\end{figure}

\clearpage

\begin{table}[h!]
\centering
\small
\caption{Median and central range of the test task loss after fine-tuning (smaller is better) at the last epoch for the shallow network experiment across 10 trials while varying $k$.}
\begin{tabular}{lccc}
\toprule
\textbf{Method} & \textbf{$k=1$} & \textbf{$k=3$} & \textbf{$k=5$} \\
\midrule
LoRA-ML         & \textbf{0.138} (0.100, 0.262) & \textbf{0.093} (0.077, 0.117) & \textbf{0.044} (0.043, 0.067) \\
SR              & 0.354 (0.319, 0.646) & 0.283 (0.261, 0.359) & 0.151 (0.140, 0.299) \\
\bottomrule
\end{tabular}
\end{table}

\begin{figure}[h]
    \centering
     \includegraphics[width=11cm,height=8cm]{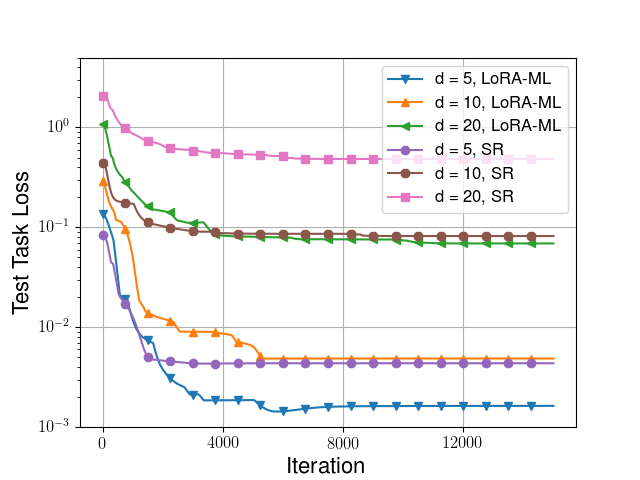}
     \caption{Shallow network fine-tuning performance while varying ambient dimension $d$.}
     \label{fig:shallow-vary-d-app}
\end{figure}

\begin{table}[h!]
\centering
\small
\caption{Median and central range of the test task loss after fine-tuning (smaller is better) at the last epoch for the shallow network experiment across 10 trials while varying $d$.}
\begin{tabular}{lccc}
\toprule
\textbf{Method} & \textbf{$d=5$} & \textbf{$d=10$} & \textbf{$d=20$} \\
\midrule
LoRA-ML         & \textbf{0.002} (0.001, 0.003) & \textbf{0.005} (0.004, 0.010) & \textbf{0.069} (0.050, 0.121) \\
SR              & 0.004 (0.003, 0.021) & 0.082 (0.082, 0.125) & 0.483 (0.438, 0.690) \\
\bottomrule
\end{tabular}
\end{table}

\begin{table}[h!]
    \caption{Synthetic Data Experiment Parameters}
    \label{tab:synth-data-params-app}
    \centering
    \scriptsize
    \resizebox{\linewidth}{!}{%
    \begin{tabular}{|c|c|c|c|c|c|c|c|}
        \hline
        \textbf{Experiment} & $T$ & $n$ & $N$ & $k$ & $d$ & $\sigma_x$ & $\sigma_{\epsilon}$ \\
        \hline
        Linear, varying $T$ & \{2,3,5\} & 100 & 5000 & 1 & 10 & 1 & .1\\
        \hline
        Linear, varying $n$ & 3 & \{100,1000,10000\} & 5000 & 1 & 10 & 1 & .1\\
        \hline
        Linear, varying $N$ & 3 & 100 & \{100,1000,10000\} & 1 & 10 & 1 & .1\\
        \hline
        Linear, varying $k$ & 3 & 100 & 1000 & \{1,2,3\} & 10 & 1 & .1\\
        \hline
        Linear, varying $d$ & 3 & 100  & 5000 & 1 & \{5,10,20\} & 1 & .1\\
        \hline
        Shallow Network, varying $T$ & \{2,3,5\} & 100 & 1000 & 1 & 10 & 1 & .1\\
        \hline
        Shallow Network, varying $n$ & 3 & \{100,1000,10000\} & 1000 & 1 & 10 & 1 & .1\\
        \hline
        Shallow Network, varying $N$ & 3 & 100 & \{100,1000,10000\} & 1 & 10 & 1 & .1\\
        \hline
        Shallow Network, varying $k$ & 3 & 100 & 1000 & \{1,3,5\} & 10 & 1 & .1\\
        \hline
        Shallow Network, varying $d$ & 3 & 100 & 1000 & 1 & \{5,10,20\} & 1 & .1\\
        \hline
    \end{tabular}
    }
\end{table}

\clearpage

\section{Real Data Experiments}
\label{app:real-data-exp}

In this section, we describe the real data experiments in complete detail. All of the real data experiments were performed on a single NVIDIA H200 GPU.

\subsection{Vision Experiments}

We use CIFAR-10 \citep{krizhevsky:2009:cifar}, and define $T = 4$ binary classification retraining tasks. Each task requires classification between consecutive CIFAR-10 class labels. Specifically, task 1 classifies between classes 1 and 2, task 2 between classes 3 and 4, etc. The test task is binary classification between classes 9 and 10. We evaluate our PEFT-ML framework using a model based on the MLP-Mixer architecture \citep{tolstikhin:2021:mlpmixer}. We use a depth of 1, and an embedding dimension of 512 for the patches. We compare two different adaptation methods: (i) LoRA with rank 1 adapters, and (ii) fine-tuning only the last layer. For both PEFT strategies, we evaluate three retraining strategies: (a) PEFT-ML for each PEFT method (LoRA-ML, Last-Layer-ML), (b) standard retraining (SR), and (c) Reptile, a popular gradient-based meta-learning method. We report mean results along with standard errors across 5 trials in Table \ref{tab:vison-app}.

For each retraining method, we take the model from the epoch with the best average performance on the validation samples for the retraining tasks to then be fine-tuned. We also apply basic transformations of the retraining data like random cropping and flipping to reduce overfitting. During fine-tuning, we take the best accuracy on the heldout samples for the test task across epochs for each trial, and we report the average of these values in the Table~\ref{tab:vison-app}. We summarize the hyperparameter choices in Tables~\ref{tab:hyperparameters-vision-RT-app} and \ref{tab:hyperparameters-vision-FT-app}.

\begin{table}[h!]
\centering
\caption{Mean test accuracies and standard errors for PEFT-ML and standard retraining methods using LoRA-based and last-layer-based fine-tuning for adapting to a subset of CIFAR-10 classes.}
\label{tab:vison-app}
\vspace{2mm}
\small
\begin{minipage}{0.48\textwidth}
\centering
\begin{tabular}{lcc}
\multicolumn{3}{c}{\textbf{LoRA Fine-Tuning}} \\
\toprule
\textbf{Method} & \textbf{Mean} & \textbf{Std. Err.} \\
\midrule
LoRA-ML & \textbf{86.09} & 0.35 \\
SR+LoRA & 85.29 & 0.17 \\
Reptile+LoRA & 81.30 & 0.42 \\
\bottomrule
\end{tabular}
\end{minipage}
\hfill
\begin{minipage}{0.48\textwidth}
\centering
\begin{tabular}{lcc}
\multicolumn{3}{c}{\textbf{Last-Layer Fine-Tuning}} \\
\toprule
\textbf{Method} & \textbf{Mean} & \textbf{Std. Err.} \\
\midrule
Last-Layer-ML & \textbf{83.90} & 0.17 \\
SR+Last-Layer & 81.16 & 0.22 \\
Reptile+Last-Layer & 72.55 & 0.39 \\
\bottomrule
\end{tabular}
\end{minipage}
\end{table}

\begin{table}[h!]
    \caption{Retraining hyperparameters for vision experiments.}
    \label{tab:hyperparameters-vision-RT-app}
    \centering
    \resizebox{\linewidth}{!}{%
    \begin{tabular}{|c|c|c|c|c|}
        \hline
        \textbf{Hyperparameter} & \textbf{Standard Retraining} & \textbf{LoRA-ML} & \textbf{Last-Layer-ML} & \textbf{Reptile} \\
        \hline
        Learning Rate & $10^{-4}$ & $10^{-4}$ & $10^{-4}$ & $10^{-2}$ \\
        \hline
        Outer Learning Rate & N/A & N/A & N/A & $10^{-5}$ \\
        \hline
        Weight Decay & $10^{-5}$ & $10^{-5}$ & $10^{-5}$ & $10^{-5}$ \\
        \hline
        Epochs & 100 & 100 & 100 & 5 \\
        \hline
        Inner Epochs & N/A & N/A & N/A & 20 \\
        \hline
        Optimizer & Adam & Adam & Adam & Adam \\
        \hline
        Batch Size & 256 & 256 & 256 & 256 \\
        \hline
        LoRA Rank & N/A & 1 & N/A & N/A \\
        \hline
    \end{tabular}
    }
\end{table}

\begin{table}[h!]
    \centering
    \caption{Fine-Tuning hyperparameters for vision experiments}
    \label{tab:hyperparameters-vision-FT-app}
    \begin{tabular}{|c|c|c|}
        \hline
        \textbf{Hyperparameter} & \textbf{LoRA Fine-Tuning} & \textbf{Last-Layer Fine-Tuning} \\
        \hline
        Learning Rate & $5\cdot10^{-4}$ & $5\cdot10^{-4}$ \\
        \hline
        Weight Decay & $10^{-5}$ & $10^{-5}$ \\
        \hline
        Epochs & 100 & 100 \\
        \hline
        Optimizer & Adam & Adam \\
        \hline
        LoRA Rank & 1 & N/A \\
        \hline
    \end{tabular}
\end{table}

\subsection{Language Experiments}
\label{app:real-data-exp-lang}

We use the ConvAI2 \cite{dinan:2019:convai} dataset for the language tasks. ConvAI2 comprises conversations between two personas, where each persona is associated with a list of facts that informs their responses. We model learning the dialogue continuations
of each persona as a different task. For each continuation we are given 20 candidate continuations, with one option being the correct continuation. 

We retrain starting from the RoBERTa-base model \cite{liu:2019:roberta} along with the RoBERTa tokenizer. We add a sequence classification head and use cross-entropy loss to predict whether a given continuation is in fact the correct response. We label the incorrect continuations as class 0 and the correct continuation as class 1 for training, and to perform inference we select the continuation with the highest predicted probability of being from class 1. We retrain on the ten tasks (personas) with the most data, and fine-tune on the next ten tasks (personas) with the most data. We perform 5 runs for each experiment (both retraining and fine-tuning) and report the means and the standard errors of the accuracies of the fine-tuned model on the validation sets of the fine-tuning tasks. During retraining, we save the model from the epoch which demonstrates the best average performance on the validation samples for the training tasks. We then fine-tune this retrained model and report the test task accuracies from the last fine-tuning epoch.

The hyperparameters we used are listed below in Tables \ref{tab:hyperparameters-nlp-RT-app} and \ref{tab:hyperparameters-nlp-FT-app}.

\begin{table}[b!]
\centering
\caption{Mean accuracies $\pm$ standard error for 10 test tasks using different retraining and fine-tuning method combinations on ConvAI2.}
\scriptsize
\resizebox{\textwidth}{!}{
\begin{tabular}{lccccccccccc}
\toprule
\textbf{Algorithm} & \textbf{T1} & \textbf{T2} & \textbf{T3} & \textbf{T4} & \textbf{T5} & \textbf{T6} & \textbf{T7} & \textbf{T8} & \textbf{T9} & \textbf{T10} & \textbf{Avg} \\
\midrule
LoRA-ML & 57$\pm$2 & \textbf{41}$\pm$4 & \textbf{51}$\pm$4 & \textbf{50}$\pm$4 & \textbf{51}$\pm$4 & \textbf{30}$\pm$2 & \textbf{66}$\pm$5 & \textbf{47}$\pm$4 & \textbf{43} $\pm$4 & \textbf{38}$\pm$2 & \textbf{47.4}$\pm$1.9 \\
SR+LoRA & \textbf{59} $\pm$5 & 31$\pm$4 & 50$\pm$7 & 40$\pm$4 & 24$\pm$5 & 20$\pm$2 & 41$\pm$10 & 36$\pm$2 & 23$\pm$5 & 26$\pm$6 & 35.0$\pm$4.1 \\
Reptile+LoRA & 45 $\pm$ 7 & 29 $\pm$ 5 & 35 $\pm$ 6 & 36 $\pm$ 2 & 19 $\pm$ 6 & 21 $\pm$ 3 & 28 $\pm$ 10 & 29 $\pm$ 7 & 21 $\pm$ 5 & 21 $\pm$ 7 & 28.4 $\pm$ 5.0 \\
\midrule
Last-Layer-ML  & \textbf{55} $\pm$ 2 & \textbf{27} $\pm$ 5 & \textbf{51} $\pm$ 5 & \textbf{44} $\pm$ 2& \textbf{36}$\pm$ 4& \textbf{25} $\pm$ 4 & \textbf{55} $\pm$ 4 & \textbf{43} $\pm$ 5 & \textbf{33} $\pm$ 4 & \textbf{34} $\pm$ 4 & \textbf{40.2} $\pm$ 1.5 \\
SR+Last-Layer  & 41 $\pm$ 4 & 9 $\pm$ 5 & 29 $\pm$ 3 & 36 $\pm$ 5 & 28 $\pm$ 8 & 15 $\pm$ 4 & 31 $\pm$ 8 & 24 $\pm$ 5 & 15 $\pm$ 6 & 17 $\pm$ 6 & 24.5 $\pm$ 4.2 \\
Reptile+Last-Layer & 35 $\pm$ 6 & 13 $\pm$ 4 & 18 $\pm$ 3 & 29 $\pm$ 3 & 19 $\pm$ 9 & 18 $\pm$ 4 & 15 $\pm$ 9 & 17 $\pm$ 5 & 15 $\pm$ 6 & 16 $\pm$ 4 & 19.4 $\pm$ 4.3 \\
\bottomrule
\end{tabular}
}
\label{tab:conv-results-rk8-app}
\end{table}

\begin{table}[b!]
\centering
\caption{Mean accuracies (averaged over all tasks) $\pm$ standard error for different retraining methods at varying LoRA ranks during fine-tuning.}
\small
\begin{tabular}{lcccc}
\toprule
\textbf{Algorithm} & \textbf{Rank 1} & \textbf{Rank 4} & \textbf{Rank 8} & \textbf{Rank 16} \\
\midrule
LoRA-ML         & \textbf{44.7} $\pm$ 0.8 & \textbf{48.6} $\pm$ 1.6 & \textbf{47.4} $\pm$ 1.9 & \textbf{48.2} $\pm$ 1.5 \\
SR+LoRA & 36.3 $\pm$ 4.1 & 35.5 $\pm$ 4.3 & 35.0 $\pm$ 4.1 & 37.1 $\pm$ 4.1 \\
Reptile+LoRA            & 26.6 $\pm$ 5.8 & 27.7 $\pm$ 5.3 & 28.4 $\pm$ 5.0 & 27.8 $\pm$ 5.9 \\
\bottomrule
\end{tabular}
\label{tab:lora-rank-ablation-app}
\end{table}

\begin{table}[h!]
    \caption{Retraining Hyperparameters for language experiments}
    \label{tab:hyperparameters-nlp-RT-app}
    \centering
    \begin{tabular}{|c|c|c|c|c|}
        \hline
        \textbf{Hyperparameter} & \textbf{Standard Retraining} & \textbf{LoRA-ML} & \textbf{Last-Layer-ML} & \textbf{Reptile} \\
        \hline
        Learning Rate & $5\cdot10^{-5}$ & $5\cdot10^{-5}$ & $5\cdot10^{-5}$ & $5\cdot10^{-5}$ \\
        \hline
        Learning Rate Schedule & Linear & Linear & Linear & Linear \\
        \hline
        Batch Size & 8 & 8 & 8 & 8 \\
        \hline
        Epochs & 10 & 10 & 10 & 10 \\
        \hline
        Inner Epochs & N/A & N/A & N/A & $20$\\
        \hline
        Optimizer & AdamW & AdamW & AdamW & AdamW \\
        \hline
        LoRA Rank & N/A & 8 & N/A & N/A \\
        \hline
        LoRA Dropout & N/A & 0.1 & N/A & N/A\\
        \hline
        LoRA Alpha & N/A & 16 & N/A & N/A\\
        \hline
        Outer Learning Rate & N/A & N/A & N/A & $10^{-4}$\\
        \hline
    \end{tabular}
\end{table}

\begin{table}[h!]
    \caption{Fine-Tuning Hyperparameters for language experiments}
    \label{tab:hyperparameters-nlp-FT-app}
    \centering
    \begin{tabular}{|c|c|c|}
        \hline
        \textbf{Hyperparameter} & \textbf{LoRA Fine-Tuning} & \textbf{Last-Layer Fine-tuning} \\
        \hline
        Learning Rate & $10^{-4}$  & $10^{-4}$ \\
        \hline
        Learning Rate Schedule & Linear & Linear \\
        \hline
        Batch Size & 8 & 8 \\
        \hline
        Epochs & 10 & 10 \\
        \hline
        Optimizer & AdamW & AdamW \\
        \hline
        LoRA Rank & 8 & N/A \\
        \hline
        LoRA Dropout & .1 & N/A \\
        \hline
        LoRA Alpha & 16 & N/A \\
        \hline
    \end{tabular}
\end{table}

\subsection{Asset Information}
We use the CIFAR-10 \citep{krizhevsky:2009:cifar} and ConvAI2 \citep{dinan:2019:convai} datasets in our experiments. CIFAR-10 is publicly available but does not specify an explicit license. ConvAI2 is distributed under the Creative Commons Attribution 4.0 International (CC BY 4.0) license. We also use the MLP-Mixer \citep{tolstikhin:2021:mlpmixer} and RoBERTa-Base \citep{liu:2019:roberta} model architectures, including the pretrained weights for RoBERTa. The official MLP-Mixer implementation is licensed under the Apache License 2.0, while RoBERTa-Base, as accessed via Hugging Face, is licensed under the MIT License.

\section{Theory Notes}
\subsection{Non-Uniqueness of Global Min for \texorpdfstring{$T=2$}{T=2}}
\label{app:non-uniq}
Consider $T=2$, $k=1$, $d=2$, $\As = \0$, and $\uts = \e_t$ for $t=1,2$, where $\e_t$ is the $t^{\text{th}}$ standard basis vector. Clearly the ground truth perturbations \uis are orthonormal and thus linearly independent. The set of global minima of $\L^\ast$ are $(\A,\U)$ such that $\A = \frac{1}{T} \sumtT \paren{\utsuts - \utut}$ and $\utut - \utsuts - \frac{1}{T}\sum_{s=1}^T\paren{\usus- \ussuss} = \0$. It is not hard to see that a global minimum follows from any set values of $\uone,\utwo$ such that $\uoneuone - \utwoutwo = \bmat{1 & 0\\  0 & -1}$. When properly parameterized, this system of equations defines a hyperbola where each point corresponds to a global minimum of $\L^\ast$. 

\subsection{Spurious Local Minima}

\label{app:spurious-min}
We observe that for $T\geq 3$, for certain tasks $\Us = (\U^\ast_1,\U^\ast_2,\U_3^\ast)$, it is possible to find points $\U$ that are local minima, but not global minima. To find these points, we sample true tasks $\Us$ from a normal distribution and use a numerical solver to find zeros of the gradient of the reduced loss\[
    \hat{\L}(\U) = \sum_{t=1}^T \left\|\UtUt - \UtsUts -\frac{1}{T}\sum_{s=1}^T (\UsUs - \UssUss)\right\|_F^2.
\]
Through the Schur complement argument used to prove Theorem \ref{th:strictSaddle}, we can see that $\hat{\L}$ has a spurious local minimum only if $\L$ has a spurious local minimum.

Typically, these zeros are close to the global minimum. Occasionally, it is possible to find a point $\Uhat$ with gradients close to $0$ and with positive definite Hessians. We then confirm that these are close to the spurious local minimum through the following argument.

Consider the function 
\[
    r(\U) = \text{vec}(\U-\Uhat)^\top \text{vec}(\grad\hat{\L}(\U)).
\]
Clearly, there is a minimum of $\hat{\L}$ in the $\delta$-ball of $\Uhat$ if $r(\U) > 0$ for all $\U$ on the boundary of the $\delta$-ball. As $r$ is continuous, if for some small enough $\epsilon, \gamma > 0$ if $r(\U) > \gamma > 0$ for all $\U$ on the $\epsilon$-net of the boundary of the $\delta$-ball, then there exists a spurious local minimum in the $\delta$-ball around $\Uhat$. Numerically, such points and $\epsilon, \delta$, and $\gamma$ can be found which would imply that spurious local minima exist, barring any errors due to numerical computation. To confirm, we run gradient descent from this point and observe that the loss stays constant in Figure~ \ref{fig:bad_point_no_progress}

\begin{figure}[t]
    \centering
    \includegraphics[width=0.5\linewidth]{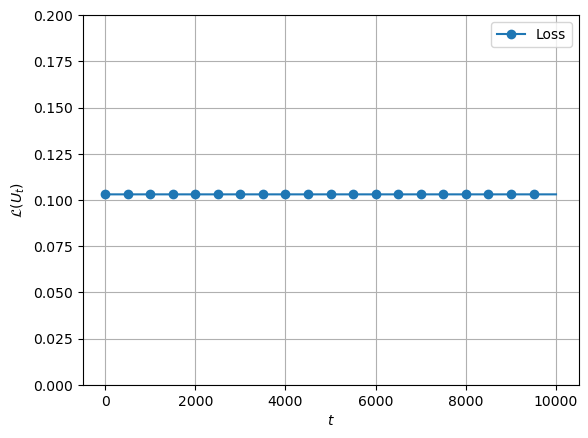}
    \caption{Loss does not decrease near these spurious local minima}
    \label{fig:bad_point_no_progress}
\end{figure}
\newpage
\section{Example Pseudocode for Minimizing \texorpdfstring{\eqref{eq:meta-adapters-finsamp-gen}}{}}

\begin{algorithm}[h!]
\caption{Meta-Adapter Training}
\label{alg:training}
\begin{algorithmic}[1]
    \STATE \textbf{Input:} Tasks $\mathcal{T}_t$, $t\in [T]$, learning rate $\eta$, number of epochs $N_e$, batches per epoch $N_b$
    \STATE \textbf{Initialize:} Model parameters $\W_0,\btheta^{(t)}_0$ for all $t=1,\dots,T$
    \FOR{epoch $e = 1$ to $N_e$}
        \FOR{$b = 1,\dots,N_b$}
            \FOR{$t=1,\dots,T$}
                \STATE Load next batch $\beta_{t,b}$ from $\mathcal{T}_i$
                \STATE Compute gradient $\g^{(t)} = \nabla_{\W, \btheta^{(t)}} \paren{\sum_{(\x,\y) \in \beta_{t,b}} \mathcal{L}(\paren{\Phi_{\text{FT}}\paren{\x \: ;\W, \btheta^{(t)}}, \y}}$
                \STATE Update adapter parameters: $\btheta^{(t)}_{e+1} \leftarrow \btheta^{(t)}_e - \eta_e \g_{\btheta^{(t)}}$
            \ENDFOR
            \STATE Update base parameters: $\W_{e+1} \leftarrow \W_e - \eta_e\sum_{t=1}^T \g^{(t)}_{\W}$
        \ENDFOR
    \ENDFOR
\end{algorithmic}
\end{algorithm}

\end{document}